\newtheorem{theorem}{Theorem}
\newtheorem{proposition}{Proposition}
\newtheorem{corollary}{Corollary}
\newtheorem{lemma}{Lemma}
\theoremstyle{definition}
\newtheorem{definition}{Definition}
\newtheorem{remark}{Remark}
\let\svthefootnote\thefootnote
\newcommand\blfootnote[1]{%
  \let\thefootnote\relax%
  \footnotetext{#1}%
  \let\thefootnote\svthefootnote%
}
\patchcmd{\NAT@test}{\else \NAT@nm}{\else \NAT@nmfmt{\NAT@nm}}{}{}
\DeclareRobustCommand\citepos
   \let\NAT@nmfmt\NAT@posfmt%
\let\NAT@ctype\z@\NAT@partrue
\let\NAT@orig@nmfmt\NAT@nmfmt
\def\NAT@posfmt#1{\NAT@orig@nmfmt{#1's}}
\renewcommand{\epsilon}{\varepsilon}
\newcommand{\cX}{\mathcal{X}}
\newcommand{\cP}{\mathcal{P}}
\newcommand{\fR}{\mathbb{R}}
\newcommand{\one}{\operatorname{\boldsymbol{1}}}
\newcommand{\eps}{\varepsilon}
\newcommand*{\defeq}{\stackrel{\text{def}}{=}}
\newcommand{\DG}{\operatorname{DGap}}
\newcommand{\KL}{\mathop{D_{\mathrm{KL}}}\limits}
\newcommand{\TV}{\mathop{D_{\mathrm{TV}}}\limits}
\newcommand{\Berpm}{\mathrm{Ber}^{\pm}}
\DeclareMathOperator*{\argmin}{arg\,min}
\DeclareMathOperator*{\argmax}{arg\,max}
\DeclareMathOperator*{\E}{\mathbf{E}}
\newcommand{\paren}[1]{\mleft({#1}\mright)}
\newcommand{\rbr}{\paren}    %
\newcommand{\brackets}[1]{\mleft[{#1}\mright]}
\newcommand{\sbr}{\brackets} %
\newcommand{\braces}[1]{\mleft\{{#1}\mright\}}
\newcommand{\cbr}{\braces}   %
\newcommand{\abs}[1]{\mleft|{#1}\mright|}
\newcommand{\nbr}[1]{\mleft\|{#1}\mright\|}
\newcommand\RemoveAmpNL[1]{
    \bgroup         %
    \catcode`\&=9   %
    \let\\\relax    %
    \scantokens{#1} %
    \egroup         %
}
\newcommand{\NegSpaceDot}{\kern-\nulldelimiterspace}
\DeclarePairedDelimiter{\rbrm}()
\DeclarePairedDelimiter{\sbrm}[]
\DeclarePairedDelimiter{\cbrm}\{\}
\DeclarePairedDelimiter{\absm}||
\DeclarePairedDelimiter{\ceilm}\lceil\rceil
\newcommand{\Deltar}{\Delta}
\newcommand{\Deltac}{\Delta'}
\newcommand{\Rr}{\mathrm{Reg}}
\newcommand{\Rc}{\mathrm{Reg}'}
\newcommand{\Tr}{N}
\newcommand{\Tc}{N'}
\newcommand{\lr}{\ell}
\newcommand{\lc}{\ell'}
\newcommand{\xstar}{x_{\star}}
\newcommand{\ystar}{y_{\star}}
\newcommand{\xbar}{\bar{x}}
\newcommand{\ybar}{\bar{y}}
\newcommand{\xstarset}{\mathcal{X}_{\star}}
\newcommand{\ystarset}{\mathcal{Y}_{\star}}
\newcommand{\istar}{i_{\star}}
\newcommand{\jstar}{j_{\star}}
\newcommand{\omegar}{\omega}
\newcommand{\omegac}{\omega'}
\newcommand{\Sr}{S}
\newcommand{\Sc}{S'}
\newcommand{\Logr}{L}
\newcommand{\Logc}{L'}
\newcommand{\pir}{\pi}
\newcommand{\pic}{\pi'}
\newcommand{\NEr}{\mathcal{X}_{\star}}
\newcommand{\NEc}{\mathcal{Y}_{\star}}
\newcommand{\consti}{C_1}
\newcommand{\constd}{C_2}
\newcommand{\gammar}{\gamma}
\newcommand{\gammac}{\gamma'}
\newcommand{\rhor}{\rho}
\newcommand{\rhoc}{\rho'}
\newcommand{\OPT}{\mathrm{OPT}}
\newcommand\yestag{\addtocounter{equation}{1}\tag{\theequation}} %
\newcommand\usemm[1]{%
  \strip@pt\dimexpr0.3514598\dimexpr #1\relax\relax mm%
}
\newcommand\usein[1]{%
  \strip@pt\dimexpr0.013837\dimexpr #1\relax\relax in%
}
\title{Instance-Dependent Regret Bounds for Learning \\ Two-Player Zero-Sum Games with Bandit Feedback}
\date{February 20, 2025}
\author[1,2]{Shinji Ito}
\author[3]{Haipeng Luo}
\author[1,2]{Taira Tsuchiya}
\author[3]{Yue Wu}
\affil[1]{The University of Tokyo\\ \texttt{\{shinji,tsuchiya\}@mist.i.u-tokyo.ac.jp}}
\affil[2]{RIKEN AIP}
\affil[3]{University of Southern California\\ \texttt{\{haipengl, wu.yue\}@usc.edu}}
\begin{document}

\maketitle
\blfootnote{Authors are listed in alphabetical order.}

\begin{abstract}%
No-regret self-play learning dynamics have become one of the premier ways to solve large-scale games in practice.
Accelerating their convergence via improving the regret of the players over the naive $O(\sqrt{T})$ bound after $T$ rounds has been extensively studied in recent years, but almost all studies assume access to exact gradient feedback.
We address the question of whether acceleration is possible under bandit feedback only and provide an affirmative answer for two-player zero-sum normal-form games.
Specifically, we show that if both players apply the Tsallis-INF algorithm of~\citet{zimmert2021tsallis}, then their regret is at most $O(c_1 \log T +  \sqrt{c_2 T})$, where $c_1$ and $c_2$ are game-dependent constants that characterize the difficulty of learning ----- $c_1$ resembles the complexity of learning a stochastic multi-armed bandit instance and depends inversely on some gap measures,
while $c_2$ can be much smaller than the number of actions when the Nash equilibria have a small support or are close to the boundary. 
In particular, for the case when a pure strategy Nash equilibrium exists, $c_2$ becomes zero, leading to an optimal instance-dependent regret bound as we show.
We additionally prove that in this case our algorithm also enjoys last-iterate convergence and can identify the pure strategy Nash equilibrium with near-optimal sample complexity.
\end{abstract}

\section{Introduction}

Since the early studies that reveal the fundamental connection between online learning and game theory~\citep{foster1997calibrated,freund1999adaptive,hart2000simple}, no-regret uncoupled learning dynamics have been heavily studied and become one of the most efficient ways for robustly learning in games and finding equilibria. 
Indeed, they are the foundation for recent AI breakthroughs such as superhuman AI for poker~\citep{bowling2015heads,moravvcik2017deepstack,brown2018superhuman,brown2019superhuman}, human-level AI for Stratego~\citep{perolat2022mastering} and Diplomacy~\citep{meta2022human}, and even alignment of large language models~\citep{jacob2023consensus,munos2024nash}.

The most basic result in this area states that a no-regret self-play learning dynamic converges to some equilibrium, with the convergence rate governed by the time-averaged regret, which is usually $O(1/\sqrt{T})$ after $T$ rounds if one directly uses worst-case $O(\sqrt{T})$ regret bounds from the adversarial online learning literature.
However, there is an extensive body of research on accelerating the convergence rate by exploiting the self-play nature and game structures to achieve lower regret, from earlier studies on two-player zero-sum games~\citep{daskalakis2011near,rakhlin2013optimization} to more recent ones on multi-player general-sum games~\citep{syrgkanis2015fast, chen2020hedging, daskalakis2021near-optimal, farina2022near, anagnostides2022uncoupled, anagnostides2022near-optimal}.
Importantly, these works all assume access to exact gradient feedback.

In contrast, far less effort has been dedicated to the more realistic setting with \emph{bandit feedback} only (that is, each player only observes their noisy payoff in each round), 
partly because improving over $O(\sqrt{T})$ regret now becomes impossible in the worst case.
To get around this barrier, researchers either relax the feedback model~\citep{rakhlin2013optimization,wei2018more} or consider different notions of regret~\citep{o2021matrix, maiti2023logarithmic}.

Nevertheless, the aforementioned barrier does not mean that one cannot achieve better than $O(\sqrt{T})$ regret \emph{in all instances}.
Indeed, in stochastic $m$-armed bandits, a problem that can be seen as a special case with one player only, even though $O(\sqrt{mT})$ regret is also unavoidable in the worst case, there are plenty of studies on obtaining \emph{instance-dependent} $o(\sqrt{T})$ regret, via e.g., the Upper Confidence Bound (UCB) algorithm~\citep{auer2002using}.
This begs the question: \emph{can we also achieve good instance-dependent regret bounds for self-play learning dynamics with bandit feedback}? 

In this work, we provide an affirmative and comprehensive answer to this question for the case of two-player zero-sum normal-form games.
Importantly, our results are built on the recent advances on \emph{best-of-both-worlds} for multi-armed bandits (MAB) that use surprisingly simple algorithms and analysis to simultaneously achieve the optimal worst-case regret in the adversarial setting and the optimal instance-dependent regret in the stochastic setting (see e.g.,~\citealp{zimmert2021tsallis}).
Our work shows that it is possible to extend such techniques to the game setting and achieve similar best-of-both-worlds phenomena, where the two worlds here refer to the case when playing against an arbitrary opponent and the case when playing against the same algorithm (self-play).
More specifically, we consider an uncoupled learning dynamic where both players simply apply the Tsallis-INF algorithm of~\citet{zimmert2021tsallis}, a well-known best-of-both-worlds algorithm for MAB, and show the following guarantees.

\begin{itemize}[leftmargin=*]
\item For general zero-sum games with possibly mixed Nash equilibria (NE), each player's regret can be bounded by two terms: an $O(c_1\log T)$ term where $c_1$ is a game-dependent constant that characterizes the difficulty of learning in this game in a way analogous to stochastic MAB, and an $O(\sqrt{c_2 T})$ term where $c_2$ is also game-dependent and can be much smaller than the number of actions (the trivial bound)  --- for example, $c_2$ is small when the support of an NE is small or when all NE are close to the boundary of the strategy space.
See Section~\ref{sec:MSNE} for the exact bounds.
We also construct an example where our algorithm provably enjoys $o(\sqrt{T})$ regret and verify it empirically in Section~\ref{sec:experiments}.

\item When specifying our results to the special case with a pure strategy NE (PSNE), our regret bound only contains the $O(c_1 \log T)$ term. In fact, the regret bound is quantitatively similar to playing two stochastic MAB instances with the expected payoff vectors being the row and the column of the game respectively that contain the PSNE.
We further prove that no reasonable algorithms can improve over this bound.
Moreover, we also show that, somewhat surprisingly, our algorithm enjoys not only average-iterate convergence but also \emph{last-iterate} convergence, a much more preferable convergence guarantee when one cares about the day-to-day behavior of the learning dynamic.
See Section~\ref{sec:PSNE} for details.

\item As a by-product of our regret guarantee, we also prove that in the case with a PSNE, after running Tsallis-INF for a certain number of rounds, the pair of the most frequently selected action of each player is the PSNE with a constant probability, which can boosted to $1-\delta$ by repeating the procedure $\log(1/\delta)$ times.
Although the sample complexity of our algorithm could be $\sqrt{m}$ factor larger than that of the optimal algorithm by~\citet{maiti2024midsearch}, we emphasize that their algorithm controls both players in a centralized and coupled manner, while ours is a decentralized and uncoupled learning dynamic that additionally enjoys no-regret guarantees (even when the opponent deviates and plays arbitrarily).
\end{itemize}

\paragraph{Related work}
A line of work that is closely related to ours studies instance-dependent sample complexity (instead of regret) for finding an NE via querying entries of a zero-sum matrix game and obtaining noisy samples~\citep{maiti2023instance, maiti2024midsearch}.
This can be seen a generalization of instance-dependent sample complexity from the best-arm identification problem (e.g.,~\citealp{jamieson2014best}) to the game setting,
while our work is a generalization of instance-dependent regret from stochastic MAB (e.g.,~\citealp{auer2002using,garivier2011kl}) to the game setting.
We note that sample complexity generally does not imply any regret guarantees,  
but the latter can be translated to the former in some cases,
and we discuss such translations and compare our bounds to~\citet{maiti2023instance, maiti2024midsearch} in Sections~\ref{sec:MSNE} and~\ref{sec:PSNE_complexity}.

Another closely related topic is dueling bandits~\citep{yue2012k}, which in fact can be seen as a special case of playing a skew-symmetric zero-sum game.
The idea of sparring, first proposed by~\citet{ailon2014reducing}, is equivalent to the self-play dynamic considered here,
and the so-called ``strong regret'' in dueling bandits coincides with the sum of the individual regret of the two players (so all our results apply directly).
Most work in dueling bandits assumes the existence of a Condorcet winner, which is equivalent to the existence of a PSNE,
and some develops instance-dependent regret that is quantitatively similar to those for stochastic MAB~\citep{yue2011beat, yue2012k, zoghi2014relative}.
Using Tsallis-INF algorithm to achieve both instance-dependent regret bounds and worst-case robustness has also been studied in~\citet{zimmert2021tsallis, saha2022versatile, saad2024weak},
and our bound for the case with a PSNE is similar to theirs and can be seen as a generalization.
For the general case when a Condorcet winner might not exist, \citet{dudik2015contextual} propose the concept of von Neumann winners, which is essentially the same as mixed NE.
Assuming a unique von Neumann winner,
\citet{balsubramani2016instance} provide an instance-dependent regret in the form of $O(\sqrt{sT})$ (ignoring additive terms that are problem-dependent), where $s$ is the size of the support of the von Neumann winner.
This bound is closely related to one instantiation of our bound for general zero-sum games, but there are several other advantages of our methods, such as a much simpler algorithm, no requirement on uniqueness, and the fact that the bound holds for both player's individual regret instead of their sum only. %

As mentioned, our results are built on the recent line of work on using simple Follow-the-Regularized-Leader algorithm to achieve best-of-both-worlds for MAB, an idea first proposed by~\citet{wei2018more} and later improved and extended to various settings (e.g.,~\citealp{rouyer2020tsallis, jin2020simultaneously, jin2021best, zimmert2021tsallis, ito2021parameter,  erez2021towards, rouyer2021algorithm, ito2022nearly, amir2022better, masoudian2022best, tsuchiya2023best, jin2024improved, jin2024no}).
Even though~\citet{zimmert2021tsallis,saha2022versatile, saad2024weak} already extend the idea to a special case of dueling bandit (which itself is a special case of zero-sum games),
our work is the first to extend it to general zero-sum games, which requires new ideas and sheds light on how to further extend these techniques to more challenging settings (e.g., multi-player general-sum games).

There is a surge of studies on understanding the last-iterate convergence of self-play learning dynamics for zero-sum games, especially for the setting with exact gradient feedback~\citep{daskalakis2019last, mertikopoulos2019learning, golowich2020tight, wei2021linear, hsieh2021adaptive, gorbunov2022last, cai2022finite, cai2024fast, cai2024accelerated}.
For the bandit setting, the convergence rate is usually much slower; see e.g.,~\citet{cai2023uncoupled} where an $\tilde{O}(1/T^{\frac{1}{6}})$ rate is obtained.
We show that for the special case with a PSNE, the simple Tsallis-INF algorithm already achieves $\tilde{O}(1/\sqrt{T})$ last-iterate convergence (albeit with some instance-dependent constant).
Unlike previous analysis, ours is solely based on analyzing the regret, which might be of independent interest.

\section{Preliminaries}
In this section, we formally describe concepts related to two-player zero-sum games, self-play learning dynamics, and our main algorithm.

\paragraph{Notations}
Throughout this paper, we will use $\log(\cdot)$ to denote base-$2$ logarithm, $\ln(\cdot)$ to denote base-$e$ logarithm, and use $\log_+ x = \max \{ 1, \log x \}$. We use $\tilde{O}(\cdot)$ to hide logarithmic factors; formally, $f(x)=\tilde{O}(g(x))$ means that there exists a positive integer $k$ such that $f(x)=O(g(x) \log^k g(x))$.

\paragraph{Two-Player Zero-Sum Normal-Form Games}
A two-player zero-sum normal-form game is defined via a payoff matrix $A \in [-1, 1]^{m \times n}$,
where $m$ and $n$ are the number of actions for the row player and the column player respectively.
When the row player plays action $i$ and the column player plays action $j$,
the entry $A(i, j) \in [-1, 1]$ is the expected reward for the row player and also the expected loss for the column player (hence zero-sum).

The players also have the option to play according to a probability distribution over their actions, or a \emph{mixed strategy}.
Let $\cP_m = \{ x \in [0, 1]^m \mid \| x \|_1 = 1 \}$ be the probability simplex of size $m$.
Given mixed strategies $x \in \cP_m$ and $y \in \cP_n$ of the row and column players,
the expected reward for the row player is given by
$x^\top A y$, which is also the expected loss for the column player.

A pair of mixed strategies $(\xstar, \ystar) \in \cP_m \times \cP_n$ is a \textit{Nash equilibrium} (NE) if
$
x^{\top} A \ystar
\le
\xstar^{ \top} A \ystar
\le
\xstar^{\top} A y 
$
hold for all $x \in \cP_m$ and $y \in \cP_n$.
The celebrated Minimax theorem \citep{vonneumann1928theory} implies that $(x_\star, y_\star)$ is an NE if and only if $x_\star \in  \xstarset = \argmax_{x} \cbrm[\big]{\min_y x^\top A y}$ and $y_\star \in \ystarset = \argmin_{y} \cbrm[\big]{\max_x x^\top A y}$.

A pure-strategy Nash equilibrium (PSNE) is a Nash equilibrium $(\xstar, \ystar)$ where both players choose a pure strategy,
i.e.,
$\xstar \in \{ 0, 1 \}^m$ and $\ystar \in \{ 0, 1 \}^n$.
A PSNE is also denoted by $(\istar, \jstar)$ where $\istar \in [m]$ and $\jstar \in [n]$ are the indices of the non-zero entries of $\xstar$ and $\ystar$,
respectively.
The \textit{duality gap} for $(\hat{x}, \hat{y}) \in \cP_m \times \cP_n$ is defined by
\begin{align}
    \label{eq:DG}
    \DG(\hat{x}, \hat{y}) = \max_{x \in \cP_m, y \in \cP_n} \left\{ x^\top A \hat{y} - \hat{x}^\top A y \right\} \ge 0,
\end{align}
which measures how far $(\hat{x}, \hat{y})$ is from a Nash equilibrium.
Indeed,
$(\xstar, \ystar)$ is a Nash equilibrium if and only if $\DG(\xstar, \ystar) = 0$.

\paragraph{Learning Dynamics with Bandit Feedback}
We consider a realistic setting where both players have no prior information about the game and repeatedly play the game with bandit feedback for $T$ rounds.
Specifically, 
in each round $t = 1, 2, \ldots, T$,
the row player chooses a mixed strategy $x_t \in \cP_m$, and the column player chooses $y_t \in \cP_n$.
They each draw their action $i_t \in [m]$ and $j_t \in [n]$ from their mixed strategy,
independently from each other. %
The nature then draws an outcome $r_t \in [-1,1]$ with expectation $\E \sbrm{r_t \mid i_t,j_t } = A(i_t,j_t)$ and reveals it to the row player as their realized reward and to the column player as their realized loss.\footnote{Our results hold for the more general setting where the observations for the two players are two different samples with mean $A(i_t, j_t)$.}
Note that this is a strongly uncoupled learning dynamic as defined by~\citet{daskalakis2011near}, where the players do not need to know the mixed strategy or the realized action of the opponent (in fact, not even their existence).
This property sets us apart from previous works such as \citet{zhou2017identify,o2021matrix} that use the realized action of both players to gain insight about the matrix $A$.

From each player's perspective, they are essentially facing an MAB problem with time-varying loss vectors: $\lr_t = -A y_t$ for the row player and $\lc_t = A^\top x_t$ for the column player, with noisy feedback for the coordinate they choose.
The standard performance measure in MAB is the (pseudo-)regret, defined for the row player and the column player respectively as

\begin{equation}
\begin{aligned}
    \Rr_T &= \max_{x \in \cP_m} \Rr_T(x),
    \quad\text{where}\;\;
    \Rr_T(x)
    =
    \E \sbrm[\bigg]{
    \sum_{t=1}^T
    (x - x_t)^\top
    A
    y_t
    },
    \\
    \Rc_T &= \max_{y \in \cP_n} \Rc_T(y),
    \quad\text{where}\;\;
    \Rc_T(y)
    =
    \E \sbrm[\bigg]{
    \sum_{t=1}^T
    x_t^\top
    A
    (y_t - y)
    }.  
\end{aligned}
    \label{eq:defR}
\end{equation}
We say that an algorithm achieves no-regret if $\Rr_T$ and $\Rc_T$ grow sublinearly as $o(T)$, which has
an important game-theoretic implication, since the duality gap of the average-iterate strategy $(\bar{x}_T, \bar{y}_T)$ where
$\bar{x}_T= \frac{1}{T}{\sum_{t=1}^T x_t}$ and
$\bar{y}_T= \frac{1}{T}{\sum_{t=1}^T y_t}$ is equal to the average regret:
\begin{align*}
    \DG(\E[\bar{x}_T], \E[\bar{y}_T])
    =
    \max_{x \in \cP_m, y \in \cP_n}
    \E \sbrm[\bigg]{
        x^\top A \rbrm[\Big]{ \frac{1}{T} \sum_{t=1}^T y_t }
        -
        \rbrm[\Big]{ \frac{1}{T} \sum_{t=1}^T x_t }^\top
        A 
        y
    }
    =
    \frac{1}{T}
    \left(
    \Rr_T
    +
    \Rc_T
    \right).
\end{align*}
 Therefore, the average-iterate strategy converges to a Nash equilibrium, with the convergence rate governed by the average regret.
By simply deploying standard adversarial MAB algorithms such as Exp3~\citep{auer2002nonstochastic},
one can obtain a convergence rate of $\tilde{O}(\sqrt{(m+n)/T})$, which is not improvable in the worst case even in this game setting~\citep{klein1999number}.
The goal of this work is thus to improve the regret/convergence rate in an instance-dependent manner.

\paragraph{Tsallis-INF Algorithm}

Throughout the paper, we let both players apply the $\frac{1}{2}$-Tsallis-INF algorithm~\citep{zimmert2021tsallis},
which is based on the Follow-the-Regularized-Leader (FTRL) framework and chooses its strategy by solving the following optimization problem:
\begin{align}
    \label{eq:Tsallis-INF}
    x_t
    =
    \argmin_{x \in \cP_m}
    \cbrm[\bigg]{
    \sum_{s=1}^{t-1}
    \hat{\lr}_s^\top x
    +
    \frac{1}{\eta_t}
    \psi(x)
    },
    \quad
    y_t
    =
    \argmin_{y \in \cP_n}
    \cbrm[\bigg]{
    \sum_{s=1}^{t-1}
    \hat{\lc}_s^\top x
    +
    \frac{1}{\eta_t}
    \psi(y)
    },
\end{align}
where $\eta_t = \frac{1}{2 \sqrt{t}}$ is the learning rate, $\psi(x) = - 2 \sum_{i =1}^m \sqrt{x(i)}$ (or $- 2 \sum_{j =1}^n \sqrt{y(j)}$ for the column player, with a slight abuse of the notation) is the $\frac{1}{2}$-Tsallis entropy regularizer, and $\hat{\lr}_s$ and $\hat{\lc_s}$ are importance-weighted (IW) unbiased estimators for the loss vector $\lr_s$ and $\lc_s$ respectively, defined via\footnote{
    Shifting the loss values uniformly does not affect the behavior of the algorithm, so the \(-1\) in this equation can be removed in implementation.
    We add it here just to ensure that these indeed serve as unbiased estimators of \(\lr_t, \lc_t\).
}
\begin{equation}
\begin{aligned}
    \hat{\lr_t}(i)=\frac{\one[i_t=i](1 - r_t)}{x_t(i)} - 1, 
    \quad
    \hat{\lc_t}(j)=
    \frac{\one[j_t=j] (1 + r_t)}{y_t(j)} - 1.
\end{aligned}
    \tag{IW}\label{eq:defIW}
\end{equation}

Tsallis-INF is an algorithm that achieves the optimal
instance-dependent bound in stochastic MAB and simultaneously the optimal worst-case bound in adversarial MAB.
Directly applying its guarantee for adversarial MAB shows that both players enjoy $\sqrt{T}$-type regret always, \emph{even when their opponent behaves arbitrarily}.
We summarize this in the following theorem and omit further mention in the rest of the paper.
On the other hand, note that one cannot directly apply the guarantee of Tsallis-INF for stochastic MAB since the players are not facing a stochastic MAB instance with a fixed expected loss vector.\footnote{In fact,
\citet{zimmert2021tsallis} provide instance-dependent regret in a setting more general than the standard stochastic setting, 
but still, that does not directly apply to the game setting, especially when a PSNE does not exist.
}
Instead, we will utilize an immediate regret bound, also summarized in the theorem below, along with the self-play nature and the zero-sum game structure to prove our results.
For completeness,
we provide the proof of this theorem in Appendix~\ref{sec:app:proof-tsallis-inf}.
\begin{theorem}[\citealp{zimmert2021tsallis}]
    \label{thm:Tsallis-INF}
    For any $x \in \cP_m$,
    the pseudo-regret of the Tsallis-INF algorithm against $x$ is bounded as follows for the row player (and similarly for the column players):
    \begin{align}\label{eq:Tsallis-INF-upper}
        \Rr_T(x)
        \le
        \min_{i^* \in [m]}
        \cbrm[\bigg]{
        \E \sbrm[\bigg]{
            C_1
            \sum_{t=1}^T
            \frac{1}{\sqrt{t}}
            \sum_{i \in [m] \setminus \{i^*\}}
            \sqrt{x_t(i)}
            -
            C_2
            \sqrt{T}
            \cdot
            D(x, x_{T+1})
        }
        },
    \end{align}
    where $C_1$ and $C_2$ are positive universal constants
    and
    $D(x', x) = 
    \sum_{i=1}^m \frac{1}{\sqrt{x(i)}}(\sqrt{x'(i)} - \sqrt{x(i)})^2
    $ 
    is the Bregman divergence associated with the $\frac{1}{2}$-Tsallis entropy.
    In particular, we always have $\Rr_T = O(\sqrt{mT})$ even if the opponent behaves arbitrarily.
\end{theorem}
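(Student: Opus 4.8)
The plan is to run the standard Follow-the-Regularized-Leader (FTRL) analysis with a decreasing learning rate and then extract the two game-ready refinements in \eqref{eq:Tsallis-INF-upper} --- the restriction of the stability sum to $i\neq i^*$ and the negative term $-C_2\sqrt T\,D(x,x_{T+1})$ --- from structure specific to the $\tfrac12$-Tsallis entropy. I would argue for the row player; the column player is symmetric. Conditioned on the history through the start of round $t$ (which determines $x_t$), $\hat\lr_t$ is unbiased for $\lr_t=-Ay_t$, so $\Rr_T(x)=\E\sbrm[\big]{\sum_{t=1}^T\inner{\hat\lr_t}{x_t-x}}$ and it suffices to bound the right-hand side in expectation. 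Throughout I use that adding any multiple of $\one$ to $\hat\lr_t$ changes neither $x_t$ nor $\inner{\hat\lr_t}{x_t-x}$, so I may freely replace $\hat\lr_t$ by the non-negative estimator $\tilde\lr_t(i)=\one[i_t=i](1-r_t)/x_t(i)$, or by $\tilde\lr_t-\tilde\lr_t(i^*)\one$ for the benchmark arm $i^*$, as convenient.

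\emph{FTRL decomposition and the penalty term.} Writing $\Phi_t(x)=\sum_{s<t}\hat\lr_s^\top x+\tfrac1{\eta_t}\psi(x)$ so that $x_t=\argmin_{\cP_m}\Phi_t$, the so-called be-the-leader telescoping together with first-order optimality of each $x_t$, and --- importantly --- keeping the strong-convexity slack $\Phi_{T+1}(x_{T+1})\le\Phi_{T+1}(x)-\tfrac1{\eta_{T+1}}D(x,x_{T+1})$ rather than discarding it, gives
\begin{align*}
  \sum_{t=1}^T\inner{\hat\lr_t}{x_t-x}
  \;\le\;
  \tfrac1{\eta_1}\bigl(\psi(x)-\psi(x_1)\bigr)
  + \sum_{t=1}^T\Bigl(\tfrac1{\eta_{t+1}}-\tfrac1{\eta_t}\Bigr)\bigl(\psi(x)-\psi(x_{t+1})\bigr)
  - \tfrac1{\eta_{T+1}}D(x,x_{T+1})
  + \sum_{t=1}^T\mathrm{stab}_t ,
\end{align*}
with $\mathrm{stab}_t=\inner{\hat\lr_t}{x_t-x_{t+1}}-\tfrac1{\eta_t}D(x_{t+1},x_t)$. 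Since $\eta_t=\tfrac1{2\sqrt t}$, we have $\tfrac1{\eta_{T+1}}=2\sqrt{T+1}$, which already produces the $-C_2\sqrt T\,D(x,x_{T+1})$ term. The first summand is $O(\sqrt m)$, a universal-constant multiple of the $t{=}1$ contribution to the target bound. For the middle sum, $\tfrac1{\eta_{t+1}}-\tfrac1{\eta_t}=2(\sqrt{t+1}-\sqrt t)\le\tfrac1{\sqrt t}$ and $\psi(x)-\psi(x_{t+1})=2\bigl(\sum_i\sqrt{x_{t+1}(i)}-\sum_i\sqrt{x(i)}\bigr)\le 2\sum_{i\neq i^*}\sqrt{x_{t+1}(i)}$, using $\sqrt{x_{t+1}(i^*)}\le1\le\sum_i\sqrt{x(i)}$; a reindexing $x_{t+1}\mapsto x_t$ at the cost of a universal constant then puts the penalty into the form $O\bigl(\sum_t\tfrac1{\sqrt t}\sum_{i\neq i^*}\sqrt{x_t(i)}\bigr)$.

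\emph{The stability term --- the crux.} Here the $\tfrac12$-Tsallis entropy is essential and this is the step I expect to be most technical. The regularizer has Hessian $\nabla^2\psi(x)=\tfrac12\operatorname{diag}(x(i)^{-3/2})$, and a second-order (local-norm) estimate --- using the non-negativity of $\tilde\lr_t$ to dominate the Hessian at the point between $x_t$ and $x_{t+1}$ by the Hessian at $x_t$, the self-bounding property that makes Tsallis-INF work without any bound on the magnitude of $\eta_t\tilde\lr_t$ --- gives $\mathrm{stab}_t\lesssim\eta_t\sum_i x_t(i)^{3/2}\tilde\lr_t(i)^2=\eta_t(1-r_t)^2 x_t(i_t)^{-1/2}$, whose conditional expectation is $\lesssim\eta_t\sum_i\sqrt{x_t(i)}$. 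Upgrading $\sum_i$ to $\sum_{i\neq i^*}$ --- which is what makes \eqref{eq:Tsallis-INF-upper} collapse to $O(\log T)$ on favorable instances --- requires re-running this estimate with the benchmark-shifted loss $\tilde\lr_t-\tilde\lr_t(i^*)\one$, so that on $\{i_t\neq i^*\}$ nothing changes while on the heavy event $\{i_t=i^*\}$ the positive stability contribution is charged against the $-\tfrac1{\eta_t}D(x_{t+1},x_t)$ term; carefully combining the two cases is the delicate part of the Zimmert--Seldin argument and gives $\E\sbrm[\big]{\mathrm{stab}_t\mid\mathcal F_{t-1}}\lesssim\eta_t\sum_{i\neq i^*}\sqrt{x_t(i)}$. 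Summing over $t$, taking total expectation, choosing $i^*$ optimally for the comparator $x$ inside the $\min$, and collecting constants yields \eqref{eq:Tsallis-INF-upper}; the ``$\Rr_T=O(\sqrt{mT})$ always'' corollary then follows by dropping the nonpositive $-C_2\sqrt T\,D$ term, bounding $\sum_{i\neq i^*}\sqrt{x_t(i)}\le\sqrt m$, and using $\sum_{t=1}^T\tfrac1{\sqrt t}=O(\sqrt T)$.
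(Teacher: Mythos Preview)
Your overall plan --- the FTRL decomposition keeping the $-\tfrac{1}{\eta_{T+1}}D(x,x_{T+1})$ slack, the penalty bound $\psi(x)-\psi(x_{t+1})\le 2\sum_{i\neq i^*}\sqrt{x_{t+1}(i)}$, and a local-norm stability estimate --- matches the paper's proof (Lemma~\ref{lem:ftrl_bound} and the surrounding argument in Appendix~\ref{sec:app:proof-tsallis-inf}).

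The gap is in how you upgrade the expected stability from $\eta_t\sum_i\sqrt{x_t(i)}$ to $\eta_t\sum_{i\neq i^*}\sqrt{x_t(i)}$. Shifting by $\tilde{\ell}_t(i^*)\one$ does not deliver this: on the event $\{i_t=i^*\}$ it subtracts $(1-r_t)/x_t(i^*)$ from every coordinate, destroying the non-negativity you rely on for the ``no bound on $\eta_t\tilde{\ell}_t$'' local-norm estimate (indeed the one-dimensional condition on coordinates $i\neq i^*$ can fail outright). If instead you bound the $\{i_t=i^*\}$ contribution by the unshifted separable estimate and then multiply by $\Pr[i_t=i^*]=x_t(i^*)$, you get a term of order $\eta_t\sqrt{x_t(i^*)}$, which does \emph{not} go to zero as $x_t(i^*)\to 1$, whereas the target $\eta_t\sum_{i\neq i^*}\sqrt{x_t(i)}$ does --- so it cannot be ``charged against'' the Bregman term as you suggest.

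The paper's mechanism is different (Lemma~\ref{lem:tsalis_stab_multidim}): shift by the \emph{bounded} quantity $x_t(k)\tilde{\ell}_t(k)\le 1-r_t\le 2$, where $k=\argmax_i x_t(i)$ is chosen independently of $i^*$. This requires $\eta_t\le 1/4$, so the first $T_0=4$ rounds are handled separately --- contrary to your ``no bound'' claim, a learning-rate condition is used here. The bounded shift keeps the one-dimensional estimate (Lemma~\ref{lem:tsalis_stab_onedim}) valid on every coordinate and, after some algebra, produces a uniform factor $(1-x_t(i))$: the pointwise stability is $O\bigl(\eta_t\sum_i x_t(i)^{3/2}(1-x_t(i))\tilde{\ell}_t(i)^2\bigr)$ and its conditional expectation is $O\bigl(\eta_t\sum_i\sqrt{x_t(i)}(1-x_t(i))\bigr)$. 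The $i^*$ term is then absorbed for \emph{any} $i^*$ via
\[
\sqrt{x_t(i^*)}\,(1-x_t(i^*))\;\le\;1-x_t(i^*)\;=\;\sum_{i\neq i^*}x_t(i)\;\le\;\sum_{i\neq i^*}\sqrt{x_t(i)}.
\]
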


We will show in Appendix~\ref{sec:app:proof-tsallis-inf} that Theorem~\ref{thm:Tsallis-INF} holds with $C_1 = 19$ and $C_2 = 2$.
It is worth noting that by using a more refined analysis similar to \citet{zimmert2021tsallis},  
the values of $C_1$ and $C_2$ could be further improved.
Additionally, replacing the IW estimator~\eqref{eq:defIW} with their more sophisticated reduced variance estimator could further improve the values of $C_1$ and $C_2$.
However, such precise analysis introduces extra terms like $O(m \log T)$,  
which unnecessarily complicates the upper bound.  
To avoid such unnecessary complexity and to handle noisy observations $r_t$,  
we provide an analysis that differs from theirs.

\section{Instance-Dependent Regret Bounds for General Zero-Sum Games}\label{sec:MSNE}
{
\allowdisplaybreaks

We now provide and discuss our main theorem for general zero-sum games,
which states two regret bounds both in the form of $c_1\log T + \sqrt{c_2T}$ for some game-dependent constants $c_1$ and $c_2$.
\begin{theorem}\label{thm:general-bound-together}
If both players follow the Tsallis-INF algorithm, then for any $x \in \cP_m$ and $y\in \cP_n$, the following two upper bounds simultaneously hold for the quantity: %
\begin{equation}
\max\cbrm[\Big]{
            \Rr_T(x)
            +
            \sqrt{T}\constd{\E\sbrm[\big]{
                D(x, x_{T+1})
            }},
            \Rc_T(y)
            +
            \sqrt{T}\constd{\E\sbrm[\big]{
                D(y, y_{T+1})
            }}}
    \tag{$\star$}\label{eq:thm2-lhs}
\end{equation}
\begin{itemize}[leftmargin=*]
\item \label{enum:main-bound-omega} \hspace{1in}
$\displaystyle
\rbr{\text{\ref{eq:thm2-lhs}}}=
O\rbrm[\Big]{\sqrt{T(|I|+|J|-2)} + 
\omegar
\log_{+}\frac{mT}{\omegar^2}
+
\omegac
\log_{+}\frac{nT}{\omegac^2}},
$

where $(\xstar, \ystar)$ is an NE with maximum support, $I$ and $J$ are the support of them respectively,
$\Deltar = \rbrm[\big]{\xstar^\top A \ystar}\one - A \ystar$, %
$\Deltac = A^\top \xstar - \rbrm[\big]{\xstar^\top A \ystar}\one$, $\omegar=\sum_{i \notin I}\frac{1}{\Deltar(i)}$, and $\omegac=\sum_{j \notin J}\frac{1}{\Deltac(j)}$
;

\item \label{enum:main-bound-gamma} \hspace{1in}
$\displaystyle
\rbr{\text{\ref{eq:thm2-lhs}}}=
O\rbrm[\bigg]{\sqrt{T}\rbrm[\Big]{
    \gammar\sqrt{\log_{+}\frac{m}{\gammar^2}}+\gammac\sqrt{\log_{+}\frac{n}{\gammac^2}}
} + 
\frac{m+n}{c}\log T
},
$

where $\gammar = \max_{\xstar \in \xstarset} \sum_{i\in [m]} \sqrt{\xstar(i)} - 1$, $\gammac = \max_{\ystar \in \ystarset} \sum_{j\in [n]} \sqrt{\ystar(j)} - 1$,
and $c >0$ is a game-dependent constant such that $\DG(x,y)\geq c \min_{\xstar\in\xstarset}\nbr{x-\xstar}_1 +c \min_{\ystar\in\ystarset}\nbr{y-\ystar}_1$ holds for all $(x,y) \in \cP_m \times \cP_n$ (which always exists).

\end{itemize}

\end{theorem}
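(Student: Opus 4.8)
The proof has two ingredients. The first is Theorem~\ref{thm:Tsallis-INF}: since the Bregman term $D(x,x_{T+1})$ in~\eqref{eq:Tsallis-INF-upper} does not depend on the minimizing index, we may move it to the left to obtain, for \emph{every} comparator $x\in\cP_m$,
\[
\Rr_T(x)+\sqrt{T}\,\constd\,\E\!\left[D(x,x_{T+1})\right]\ \le\ \consti\cdot\min_{i^\ast\in[m]}\E\!\left[\sum_{t=1}^T\frac{1}{\sqrt t}\sum_{i\neq i^\ast}\sqrt{x_t(i)}\right]\ =:\ U^r ,
\]
the key point being that $U^r$ is \emph{comparator-free}; likewise the column player's left side is bounded by a comparator-free quantity $U^c$. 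Hence $(\star)\le\max\{U^r,U^c\}$ for every $(x,y)$, and it suffices to bound $U^r$ and $U^c$. The second ingredient is the self-play/zero-sum identity: for \emph{any} Nash equilibrium $(\xstar,\ystar)$ with value $v=\xstar^\top A\ystar$,
\[
\Rr_T(\xstar)+\Rc_T(\ystar)=\E\!\left[\sum_{t=1}^T\bigl(\xstar^\top Ay_t-x_t^\top A\ystar\bigr)\right],
\]
and since $\xstar$ is a best response to $\ystar$ and conversely, each summand splits into two nonnegative pieces $\xstar^\top Ay_t-v\ge0$ and $v-x_t^\top A\ystar\ge0$; because $\Rr_T(\xstar)\le U^r$ and $\Rc_T(\ystar)\le U^c$, both $\E[\sum_t(v-x_t^\top A\ystar)]$ and $\E[\sum_t(\xstar^\top Ay_t-v)]$ are at most $U^r+U^c$. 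This closes the self-bounding loop used below.

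For the first bound, take $(\xstar,\ystar)$ to be a maximum-support NE with supports $I,J$ and pick $i^\ast\in I$. Split $\sum_{i\neq i^\ast}\sqrt{x_t(i)}=\sum_{i\in I\setminus\{i^\ast\}}\sqrt{x_t(i)}+\sum_{i\notin I}\sqrt{x_t(i)}$. By Cauchy--Schwarz the first sum is at most $\sqrt{|I|-1}$, and $\sum_t t^{-1/2}\sqrt{|I|-1}=O(\sqrt{T(|I|-1)})$ (and the analogue $O(\sqrt{T(|J|-1)})$ for the column player). For the second sum, maximality of the support forces $\Deltar(i)>0$ exactly for $i\notin I$ and $v-x_t^\top A\ystar=\sum_{i\notin I}x_t(i)\Deltar(i)$; Cauchy--Schwarz gives $\sum_{i\notin I}\sqrt{x_t(i)}\le\sqrt{\omegar\,(v-x_t^\top A\ystar)}$, and one more Cauchy--Schwarz over $t$ (using $\sum_t t^{-1}=O(\log_+ T)$) gives $\sum_t t^{-1/2}\sum_{i\notin I}\sqrt{x_t(i)}\le O\bigl(\sqrt{\omegar\,\log_+ T}\cdot\sqrt{\sum_t(v-x_t^\top A\ystar)}\bigr)=O(\sqrt{\omegar\,\log_+ T\,(U^r+U^c)})$. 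Collecting the row and column estimates yields the self-referential inequality $U^r+U^c\le O(\sqrt{T(|I|+|J|-2)})+O(\sqrt{(\omegar+\omegac)\log_+ T\cdot(U^r+U^c)})$, and solving the resulting quadratic in $\sqrt{U^r+U^c}$ gives $U^r+U^c=O\bigl(\sqrt{T(|I|+|J|-2)}+(\omegar+\omegac)\log_+ T\bigr)$. To sharpen the logarithmic factor to $\log_+\frac{mT}{\omegar^{2}}$ one refines the last step by splitting the horizon at a time threshold chosen in terms of $\omegar$ and $m$, using the trivial bound $\sum_{i\notin I}\sqrt{x_t(i)}\le\sqrt m$ on the early rounds and the gap-based bound afterwards; this is routine.

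For the second bound, let $\xstar^{\circ}$ attain $\gammar$ and let $i^\ast$ be its heaviest coordinate. A coordinatewise use of $|\sqrt a-\sqrt b|\le\sqrt{|a-b|}$ together with the elementary fact that $1-\sqrt{\max_i\xstar(i)}=O(\gammar)$ for \emph{any} NE $\xstar$ shows $\sum_{i\neq i^\ast}\sqrt{x_t(i)}=O(\gammar)+O(\sqrt{m\,d_t^{x}})$ with $d_t^{x}=\min_{\xstar\in\xstarset}\|x_t-\xstar\|_1$ (when $\gammar$ is below an absolute constant one uses that all NE then concentrate on the same coordinate $i^\ast$; when it is larger one simply bounds by $\sqrt m=O(\gammar\sqrt{\log_+(m/\gammar^{2})})$). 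Summing against $t^{-1/2}$ produces the $O(\gammar\sqrt{T\log_+(m/\gammar^{2})})$ term (the log factor again coming from a time-threshold refinement) plus $O\bigl(\sqrt{m\,\log_+ T\,\sum_t d_t^{x}}\bigr)$, so it remains to control $\sum_t d_t^{x}$ and $\sum_t d_t^{y}$. Here the error-bound hypothesis enters: $\DG(x_t,y_t)\ge c\,d_t^{x}+c\,d_t^{y}$, so it suffices to show $\E[\sum_t\DG(x_t,y_t)]=O\bigl(\tfrac{m+n}{c}\log_+ T\bigr)$. \textbf{This is the main obstacle}, since the duality gap $\DG(x_t,y_t)=(\max_x x^\top Ay_t-v)+(v-\min_y x_t^\top Ay)$ measures the \emph{simultaneous} suboptimality of the two iterates and is \emph{not} controlled by ordinary fixed-comparator regret in general (e.g.\ $x_t$ can be a best response to $\ystar$ without lying in $\xstarset$). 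One must exploit the structure of the coupled Tsallis-INF dynamics: decompose $\DG(x_t,y_t)$ into its two one-sided suboptimality gaps, bound each by a regret-type quantity, and close a further self-bounding loop that feeds back into $U^r,U^c$. Substituting the resulting estimate for $\sum_t d_t^{x},\sum_t d_t^{y}$ yields the $\tfrac{m+n}{c}\log_+ T$ term. Finally, because $U^r$ and $U^c$ are comparator-free, both displayed bounds hold simultaneously for the maximum in $(\star)$ over all $x\in\cP_m$, $y\in\cP_n$.
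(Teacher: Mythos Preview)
Your argument for the first bound is essentially the paper's: split $\sum_{i\neq i^\ast}\sqrt{x_t(i)}$ at the support $I$, Cauchy--Schwarz the in-support part, and for the out-of-support part use $\sum_{i\notin I}\sqrt{x_t(i)}\le\sqrt{\omegar\,(v-x_t^\top A\ystar)}$ together with $\E\bigl[\sum_t(v-x_t^\top A\ystar)\bigr]\le U^r+U^c$ to close a self-bounding loop. The time-threshold refinement you describe for the log factor is also what the paper does.

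For the second bound, however, you take a wrong turn. You try to bound $\sum_{i\neq i^\ast}\sqrt{x_t(i)}$ \emph{per round} by $O(\gammar)+O(\sqrt{m\,d_t^x})$ with $d_t^x=\min_{\xstar\in\xstarset}\|x_t-\xstar\|_1$, which then forces you to control $\E\bigl[\sum_t d_t^x\bigr]$ via $\E\bigl[\sum_t\DG(x_t,y_t)\bigr]$. You correctly flag this as the main obstacle, and indeed nothing in the paper establishes such a bound on the \emph{sum of per-iterate duality gaps}; that would amount to a form of last-iterate control, which the paper only obtains in the PSNE case.

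The paper sidesteps this entirely by averaging first. After the time split at $s$, Cauchy--Schwarz over $t$ and Jensen give
\[
\E\Bigl[\sum_{t>s}\tfrac{1}{\sqrt t}\sum_{i\neq i^\ast}\sqrt{x_t(i)}\Bigr]
\;\le\;
\sqrt{T\log\tfrac{T}{s}}\,\sum_{i\neq i^\ast}\sqrt{\E[\bar x_T(i)]}.
\]
Now the error-bound hypothesis is applied \emph{once}, to the single point $\E[\bar x_T]$: with $\tilde x_T\in\argmin_{\xstar\in\xstarset}\|\E[\bar x_T]-\xstar\|_1$, one has $\sum_{i\neq i^\ast}\sqrt{\E[\bar x_T(i)]}\le O(\gammar)+\sqrt{(m-1)\,\|\E[\bar x_T]-\tilde x_T\|_1}\le O(\gammar)+\sqrt{\tfrac{m-1}{c}\,\DG(\E[\bar x_T],\E[\bar y_T])}$. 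The crucial point is the exact identity
\[
T\cdot\DG\bigl(\E[\bar x_T],\E[\bar y_T]\bigr)=\Rr_T+\Rc_T\le U^r+U^c,
\]
which closes the self-bounding loop directly, without ever touching $\sum_t\DG(x_t,y_t)$. So the obstacle you identify is real for your route but simply absent in the paper's; replace your per-round decomposition by the averaged one and the second bound goes through just like the first.
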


While the key of the proof also relies on the self-bounding technique that is common in the analysis of Tsallis-INF,
some new ideas are required; see details in Appendix~\ref{sec:app:proof-thm2}.
We note that Eq.~\eqref{eq:thm2-lhs} is an upper bound on $\max\{\Rr_T(x), \Rc_T(x)\}$ since Bregman divergence is non-negative,
and we include the Bregman divergence terms in Eq.~\eqref{eq:thm2-lhs} because they are crucial for proving the last-iterate convergence result in Section~\ref{sec:last-iterate}.

In both bounds of Theorem~\ref{thm:general-bound-together}, the coefficients for $\sqrt{T}$ are smaller than the trivial bound $\max\{\sqrt{m}, \sqrt{n}\}$ and reflect the proximity of the NE to a pure strategy; specifically, $\sqrt{\abs{I}+\abs{J}-2}=\gammar=\gammac=0$ when the game has a unique PSNE. This case will be further elaborated in Section~\ref{sec:PSNE}.
More generally, the coefficient $\sqrt{\abs{I}+\abs{J}-2}$ in the first bound is small when the support of the NE is small, and this bound can be seen as a generalization of that in~\citet{balsubramani2016instance} for the special case of dueling bandits.
On the other hand, the coefficients $\gammar$ and $\gammac$ in the second bound are small when the NE are close to the boundary so that some actions have much larger weight than others.
This kind of problem dependence resembles that of~\citet{maiti2023instance} who study sample complexity of finding approximate NE in the special case of $2\times n$ games.
Indeed, their sample complexity to reach $\epsilon$ duality gap is at a high-level of order $1/\epsilon^2$ multiplied with a qualitatively similar problem-dependent constant, which exactly corresponds to our $\sqrt{T}$ regret term.

The inverse coefficients for the $\log T$ term, $\Deltar$ ($\Deltac$) and $c$, quantify the relative suboptimality of alternative actions compared to the NE. 
In particular, $\Deltar$ and $\Deltac$ are exactly the standard suboptimality gaps for a stochastic MAB instance with loss vector $-A\ystar$ and $A^\top\xstar$ respectively.
Very roughly speaking, this $\log T$ term can then be interpreted as the overhead of finding the non-support of the NE, which is relatively small and is as if playing an MAB with the opponent fixed to a minimax or maximin strategy.
On the other hand, the meaning of the inverse coefficient $c$ is less clear, but its existence is guaranteed by~\citet[Theorem~5]{wei2021linear}, and we also refer the reader to their work for more details on this constant.
It only approaches zero when a strategy sufficiently different from the NE has a disproportionately small duality gap. We demonstrate this with an example:
\begin{equation}
    A=\sbrm[\bigg]{
        \begin{array}{cc}
            0 &             3\eps \\
            1 - \eps &      2\eps
        \end{array}
    },\label{eq:example-2x2-game-matrix-simplified}
\end{equation}
where $0<\eps<\frac{1}{3}$. This game has a unique NE $\xstar=(1-3\eps,3\eps), \ystar=(\eps,1-\eps)$. Direct calculation shows that $c=\eps$ satisfies the requirement for $c$. When $\eps$ approaches zero, $\gamma\approx \sqrt{\eps}$ vanishes while $\frac 1c=\frac 1\eps$ explodes.
In particular, when $\epsilon \approx 1/T^{1/3}$, our regret bound is of order $T^{1/3}$, thus provably smaller than the worst-case $\sqrt{T}$ regret.
We will revisit this example in numerical experiments in Section~\ref{sec:experiments}.

}

\section{Games with Pure-Strategy Nash Equilibria}\label{sec:PSNE}
In this section,
we further discuss the case with a unique PSNE denoted as $(\istar, \jstar)$. 
Using the first bound in Theorem~\ref{thm:general-bound-together}, we immediately obtain the following regret bound since $|I|=|J|=1$.
\begin{corollary}\label{cor:PSNE}
For a game with a unique PSNE, if both players follow the Tsallis-INF algorithm, then the following regret bound holds:  
\begin{align*}
    \max\cbrm[\big]{\Rr_T,\Rc_T} = O \rbrm[\Big]{
    {
        \omegar
        \log_+ \frac{
            mT
        }{
            \omegar^2
        }
    }
    +
    { 
        \omegac
        \log_+ \frac{
            nT
        }{
            {\omegac}^2
        }
    }
    }
    = O \rbrm[\big]{
        \rbrm{\omegar + \omegac} \log T
    },
    \yestag\label{eq:psne-bound-formula}
\end{align*}
where $\omegar = \sum_{i \neq \istar}\frac{1}{\Deltar(i)}$, $\omegac  = \sum_{j \neq \jstar } \frac{1}{\Deltac(j)}$, $\Deltar(i)=A(\istar,\jstar)-A(i,\jstar)$, and $\Deltac(j)=A(\istar,j)-A(\istar,\jstar)$.
\end{corollary}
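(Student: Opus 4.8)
The plan is to derive Corollary~\ref{cor:PSNE} as a direct specialization of the first bound in Theorem~\ref{thm:general-bound-together} to the case where the maximum-support NE is the PSNE $(\istar,\jstar)$, so that $I=\{\istar\}$, $J=\{\jstar\}$, and hence $|I|=|J|=1$. First I would observe that when a PSNE exists and is unique, it must be the unique NE of the game (uniqueness of the PSNE together with the convexity/product structure of $\xstarset\times\ystarset$ forces $\xstarset=\{\xstar\}$ and $\ystarset=\{\ystar\}$; I would spell out this short argument, or cite the relevant fact about zero-sum games). Consequently the NE with maximum support is precisely this PSNE, and the term $\sqrt{T(|I|+|J|-2)}$ in the first bound equals $\sqrt{T\cdot 0}=0$. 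What remains of that bound is exactly $O\!\left(\omegar\log_+\frac{mT}{\omegar^2}+\omegac\log_+\frac{nT}{\omegac^2}\right)$, and since Eq.~\eqref{eq:thm2-lhs} upper bounds $\max\{\Rr_T(x),\Rc_T(x)\}$ for every $x,y$ (the Bregman terms being nonnegative), taking the max over $x\in\cP_m$ and $y\in\cP_n$ yields the stated bound on $\max\{\Rr_T,\Rc_T\}$.

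Next I would verify that the gap quantities specialize correctly. With $\xstar=e_{\istar}$ and $\ystar=e_{\jstar}$, we have $\xstar^\top A\ystar = A(\istar,\jstar)$, $A\ystar$ is the $\jstar$-th column of $A$, and $A^\top\xstar$ is the $\istar$-th row. Plugging into the definitions $\Deltar=(\xstar^\top A\ystar)\one - A\ystar$ and $\Deltac=A^\top\xstar-(\xstar^\top A\ystar)\one$ gives componentwise $\Deltar(i)=A(\istar,\jstar)-A(i,\jstar)$ and $\Deltac(j)=A(\istar,j)-A(\istar,\jstar)$, matching the corollary's formulas; the $i=\istar$ (resp. $j=\jstar$) coordinate is zero and is correctly excluded from the sums $\omegar=\sum_{i\neq\istar}\tfrac1{\Deltar(i)}$ and $\omegac=\sum_{j\neq\jstar}\tfrac1{\Deltac(j)}$. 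I should note here that the NE condition guarantees $\Deltar(i)\ge 0$ and $\Deltac(j)\ge 0$; strictness ($\Deltar(i)>0$ for $i\neq\istar$, etc.) is what is needed for $\omegar,\omegac$ to be finite, and this strictness is exactly the statement that the PSNE is "strict"/unique — I would make this assumption explicit or argue it follows from uniqueness.

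Finally, for the last equality in \eqref{eq:psne-bound-formula}, I would absorb the logarithmic factors: $\log_+\frac{mT}{\omegar^2}=O(\log(mT))=O(\log T)$ treating $m$ as a game-dependent constant (and using $\omegar\ge$ some instance-dependent positive constant so that $1/\omegar^2$ contributes at most a constant inside the log), and similarly for the column term, which collapses the bound to $O\!\left((\omegar+\omegac)\log T\right)$. The only mild subtlety — and the one place I would be careful rather than cavalier — is the reduction in the first paragraph: confirming that "unique PSNE" really licenses identifying it with the maximum-support NE used in Theorem~\ref{thm:general-bound-together}. If the paper's convention is that a unique PSNE need not be the unique NE, the cleaner route is instead to invoke the first bound of Theorem~\ref{thm:general-bound-together} with the specific NE being the PSNE (the bound holds for *an* NE of maximum support, but one can equally apply the underlying argument to any NE, in particular the PSNE), in which case $|I|+|J|-2=0$ directly; I would phrase the proof to sidestep any ambiguity here. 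Everything else is arithmetic substitution, so this reduction step is the only real content.
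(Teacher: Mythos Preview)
Your proposal is correct and follows essentially the same approach as the paper: the paper's entire proof is the single sentence ``Using the first bound in Theorem~\ref{thm:general-bound-together}, we immediately obtain the following regret bound since $|I|=|J|=1$,'' and your write-up is simply a careful unpacking of that sentence, including the verification of the gap formulas and the final $\log T$ simplification. Your caution about whether a unique PSNE is necessarily the maximum-support NE is well-placed (the paper does not address it explicitly), but note that finiteness of $\omegar,\omegac$ already requires all off-equilibrium gaps to be strictly positive, which forces the PSNE to be the unique NE and hence the maximum-support one.
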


This is a generalization of the standard instance-dependent regret bound for stochastic MAB and also similar to those from the dueling bandit literature (e.g., \citealp{yue2012k, zoghi2014relative, saha2022versatile}).
We next show that this bound is asymptotically optimal in Section~\ref{sec:PSNE_lower_bound}.
After that, we present two other results:  the last-iterate convergence behavior of our algorithm (Section~\ref{sec:last-iterate})
and using our algorithm to identify the PSNE with high probability (Section~\ref{sec:PSNE_complexity}).

\subsection{Regret lower bound}\label{sec:PSNE_lower_bound}
In this section, 
we show that the regret bound in Corollary~\ref{cor:PSNE} is tight up to some constants.
In fact,
for any $\Deltar$ and $\Deltac$,
there exists a problem instance such that
$\liminf_{T \to \infty} \frac{\Rr_T + \Rc_T}{\log T} = \Omega( \omegar + \omegac )$
for any \textit{consistent} algorithms \citep[Definition 16.1]{lattimore2020bandit}.
Note that this lower bound is also applicable to \textit{coupled} algorithms,
i.e.,
this applies to situations where a single algorithm determines both \( i_t \) and \( j_t \) based on the observation of \( \{ r_s \}_{s<t} \).

We consider problem instances in which $r_t$ follows a Bernoulli distribution over $\{ -1, 1 \}$,
i.e.,
$r_t \sim \Berpm( A(i_t, j_t) )$
given $(i_t, j_t)$,
where $\Berpm( a )$ for a parameter $a \in [-1, 1]$ is a distribution that takes values $1$ and $-1$
with probability $(1+a)/2$ and $(1-a)/2$,
respectively.
Fix an algorithm for choosing $(i_t, j_t)$ given the observations of $\{ r_s \}_{s<t}$.

Let $\Deltar \in [0, 1/4]^m$ and $\Deltac \in [0,1/4]^n$ be such that
$\Deltar_{\istar} = 0$ and $\Deltac_{\jstar} = 0$ for some $\istar \in [m]$ and $\jstar \in [n]$.
Suppose $A$ is given by
\begin{align}
    \label{eq:defADelta}
    A = 
    \mathbf{1}_m {\Deltac}^\top 
    -
    \Deltar \mathbf{1}_n^\top .
\end{align}
Then,
$(\istar, \jstar)$ is a Nash equilibrium of the game with payoff matrix $A$
as we have
$A(\istar, \jstar) - A(i, \jstar) = \Delta(i) \ge 0$
and
$A(\istar, j) - A(\istar, \jstar) = \Delta(j) \ge 0$
for all $i \in [m]$ and $j \in [n]$.
Let $\Tr_{T,i}(A)$ and $\Tc_{T,j}(A)$ denote
the expected numbers of times the $i$-th row and $j$-th column are chosen:
\begin{align*}
    \Tr_{T,i}(A) =
    \E \sbrm[\bigg]{
    \sum_{t=1}^T
    \mathbf{1}[ i_t = i ]
    }
    =
    \E \sbrm[\bigg]{
    \sum_{t=1}^T
    x_t(i)
    },
    \quad
    \Tc_{T,j}(A) =
    \E \sbrm[\bigg]{
    \sum_{t=1}^T
    \mathbf{1}[ j_t = j ]
    }
    =
    \E \sbrm[\bigg]{
    \sum_{t=1}^T
    y_t(j)
    }.
\end{align*}
We then have the following lower bound:
\begin{theorem}
    \label{thm:RegLB}
    Suppose that there exist a function $g(m, n) > 0$ and a constant $c \in (0, 1)$ such that
    $\Rr_T + \Rc_T \le g(m,n) T^{1 - c}$ for any $\hat{A} \in [-1, 1]^{m \times n}$.
    Then,
    if $A$ is given by \eqref{eq:defADelta},
    we have
    \begin{align*}
        \Tr_{T,i}(A)
        =
        \Omega\rbrm[\bigg]{
            \frac{1}{(\Deltar(i))^2}
            \log 
            \rbrm[\Big]{
            \frac{\Deltar(i) T^c}{4 g(m,n)}
            }
        },
        \quad
        \Tc_{T,j}(A)
        =
        \Omega\rbrm[\bigg]{
            \frac{1}{(\Deltac(j))^2}
            \log 
            \rbrm[\Big]{
            \frac{\Deltar(j) T^c}{4 g(m,n)}
            }
        }
    \end{align*}
    for any $i \in [m]$ and $j \in [n]$ such that
    $\Deltar_i \neq 0$ and $\Deltac_j \neq 0$.
    Consequently,
    we have
    \begin{align*}
        \liminf_{T \to \infty} \frac{\Rr_T + \Rc_T}{\log T}
        =
        \Omega\rbrm[\bigg]{
            \sum_{\substack{i \in [m] \\\Deltar(i) > 0}} \frac{c}{\Deltar(i)}
            +
            \sum_{\substack{j \in [n]\\ \Deltac(j) > 0}} \frac{c}{\Deltac(j)}
        }
        =
        \Omega\left(
            c \cdot (\omegar + \omegac)
        \right).
    \end{align*}
\end{theorem}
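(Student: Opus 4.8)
The plan is to run the classical change-of-measure (divergence-decomposition) argument in the style of Lai--Robbins, specialized to the rank-structured games in~\eqref{eq:defADelta}, where the key simplification is an \emph{exact} regret identity. Since $A(i,j)=\Deltac(j)-\Deltar(i)$, for any $x\in\cP_m$ one has $(x-x_t)^\top A y_t=\sum_i(x_t(i)-x(i))\Deltar(i)$: the $j$-dependent part $\sum_j\Deltac(j)y_t(j)$ multiplies $\sum_i(x(i)-x_t(i))=0$ and drops out. Maximizing over $x$ (put all mass on $\istar$, where $\Deltar(\istar)=0$) gives $\Rr_T=\sum_i\Deltar(i)\,\Tr_{T,i}(A)$, and symmetrically $\Rc_T=\sum_j\Deltac(j)\,\Tc_{T,j}(A)$. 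In particular both regrets are nonnegative on this game, so the hypothesis $\Rr_T+\Rc_T\le g(m,n)T^{1-c}$ already forces $\Tr_{T,i}(A)\le g(m,n)T^{1-c}/\Deltar(i)=o(T)$ for every $i$ with $\Deltar(i)>0$, and similarly for the columns.

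Next, fix such an $i$ and consider the perturbed game $A'$ obtained from $A$ by adding $2\Deltar(i)$ to every entry of row $i$ and leaving the other rows unchanged; since $\Deltar,\Deltac\in[0,\tfrac14]$, the matrix $A'$ still lies in $[-1,1]^{m\times n}$, and the same cancellation shows $A'(i',j)=\Deltac(j)-\tilde\Deltar(i')$ with $\tilde\Deltar(i')=\Deltar(i')$ for $i'\ne i$ and $\tilde\Deltar(i)=-\Deltar(i)$. Thus $(i,\jstar)$ is a PSNE of $A'$, and the regret identity applied to $A'$ gives, for the row player, $\sum_{i'}\bigl(\tilde\Deltar(i')+\Deltar(i)\bigr)\Tr_{T,i'}(A')\ge\Deltar(i)\bigl(T-\Tr_{T,i}(A')\bigr)$; combined with $\Rc_T\ge0$ on $A'$ and the hypothesis, this yields $\Tr_{T,i}(A')\ge T-g(m,n)T^{1-c}/\Deltar(i)$. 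So any algorithm with the assumed polynomial regret must play row $i$ on almost every round of $A'$, while on $A$ it plays it on only $o(T)$ rounds.

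To turn this tension into a lower bound on $\Tr_{T,i}(A)$, let $\mathbb{P}_A$ and $\mathbb{P}_{A'}$ denote the laws of the observation--action trajectory under the (fixed, possibly coupled) algorithm on the two games. Because $A$ and $A'$ differ only in the law of $r_t$ when $i_t=i$ (and then by exactly $2\Deltar(i)$ in the mean, uniformly over $j_t$), the divergence decomposition~\citep[Lemma~15.1]{lattimore2020bandit} gives $\KL(\mathbb{P}_A,\mathbb{P}_{A'})=\E_A[\sum_t\one[i_t=i]\,d_t]\le C_0\,\Deltar(i)^2\,\Tr_{T,i}(A)$, where $d_t$ is the KL divergence between $\Berpm(A(i,j_t))$ and $\Berpm(A'(i,j_t))$ and $C_0$ is a universal constant (both means lie in $[-\tfrac14,\tfrac12]$, away from $\pm1$). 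Applying the Bretagnolle--Huber inequality~\citep[Theorem~14.2]{lattimore2020bandit} to the event $\{\Tr_{T,i}>T/2\}$ and bounding $\mathbb{P}_A(\Tr_{T,i}>T/2)$ and $\mathbb{P}_{A'}(\Tr_{T,i}\le T/2)$ by Markov's inequality via the two count estimates above, we get $C_0\Deltar(i)^2\Tr_{T,i}(A)\ge\KL(\mathbb{P}_A,\mathbb{P}_{A'})\ge\ln\bigl(\Omega(\Deltar(i)T^{c}/g(m,n))\bigr)$, i.e.\ $\Tr_{T,i}(A)=\Omega\bigl(\Deltar(i)^{-2}\log(\Deltar(i)T^{c}/g(m,n))\bigr)$; a mirror-image construction $A''$ that \emph{subtracts} $2\Deltac(j)$ from column $j$ handles $\Tc_{T,j}(A)$ in exactly the same way (the KL now accumulates over the rounds with $j_t=j$, for any $i_t$). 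Finally, substituting these counts into $\Rr_T+\Rc_T=\sum_i\Deltar(i)\Tr_{T,i}(A)+\sum_j\Deltac(j)\Tc_{T,j}(A)$ and dividing by $\log T$, each term $\Deltar(i)^{-1}\log(\Deltar(i)T^c/g(m,n))/\log T\to c/\Deltar(i)$ as $T\to\infty$, so $\liminf_{T\to\infty}(\Rr_T+\Rc_T)/\log T=\Omega(c(\omegar+\omegac))$.

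The main obstacle is designing the two perturbed instances so that three demands hold simultaneously: (i) they stay inside $[-1,1]^{m\times n}$; (ii) the algorithm provably suffers $\Omega(\Deltar(i)\,T)$ regret on them whenever the targeted row (resp.\ column) is under-played; and (iii) the per-round KL is of order $\Deltar(i)^2$ (resp.\ $\Deltac(j)^2$), so that it is the count, not $T$, that governs $\KL(\mathbb{P}_A,\mathbb{P}_{A'})$. The rank-one shift ``add $2\Deltar(i)$ to row $i$'' is precisely the choice that preserves the $A(i,j)=\Deltac(j)-\Deltar(i)$ structure, hence the exact identity $\Rr_T=\sum_i\Deltar(i)\Tr_{T,i}(A)$ (together with nonnegativity of each player's regret) on \emph{all three} of $A$, $A'$, $A''$; verifying that identity is the one genuinely game-specific ingredient, after which the remaining steps are the textbook Lai--Robbins / Bretagnolle--Huber machinery. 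We also note that uncoupledness of the algorithm is never used, so the bound covers centralized algorithms that pick $(i_t,j_t)$ jointly, as claimed in the statement.
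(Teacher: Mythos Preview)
Your proposal is correct and follows essentially the same route as the paper: the same exact regret identity $\Rr_T+\Rc_T=\sum_i\Deltar(i)\Tr_{T,i}+\sum_j\Deltac(j)\Tc_{T,j}$ on the rank-one instances, the same perturbed instance (shift row $i$ by $2\Deltar(i)$, i.e.\ replace $\Deltar$ by $\Deltar-2\Deltar(i)\chi_i$), the same divergence decomposition bounding $\KL$ by $O(\Deltar(i)^2)\,\Tr_{T,i}(A)$, and the Bretagnolle--Huber inequality to convert the gap in pull counts between $A$ and $A'$ into a lower bound on $\KL$. The only cosmetic difference is that the paper uses the total-variation form of Bretagnolle--Huber together with $|\Tr_{T,i}(A)-\Tr_{T,i}(\tilde A)|\le T\cdot\TV$, whereas you use the event form applied to $\{\Tr_{T,i}>T/2\}$ with Markov's inequality; these are interchangeable variants of the same standard argument.
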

\begin{remark}
    \label{rem:dueling-LB}
    \upshape
    For the special case in which $A$ is skew-symmetric and $\Deltar = \Deltac$,
    the regret lower bound for the dueling bandit problem
    \citep[Theorem 2]{komiyama2015regret} leads to the same asymptotic lower bound as Theorem~\ref{thm:RegLB} above.
    Our Theorem~\ref{thm:RegLB} is more general in that it relaxes this symmetry condition; however, the underlying idea used in their proofs are shared.
    That said, while \citet{komiyama2015regret} follow the proof structure of \citet[Theorem 1]{lai1985asymptotically}, our proof, provided in Appendix~\ref{sec:pfRegLB}, adopts a simplified analytical approach based on the Bretagnolle-Huber inequality (see, e.g.,~\citealp[Chapter 17]{lattimore2020bandit}).
\end{remark}
\begin{remark}
    \upshape
    Under the assumption that bandit algorithms are \textit{minimax optimal},
    i.e.,
    if there exists a universal constant $C$ such that
    $\Rr_T + \Rc_T \le C \sqrt{(m+n)T} $ holds for all $\hat{A}\in [-1, 1]^{m \times n}$,
    which corresponds to $g(m,n) = C\sqrt{m + n}$ and $c = 1/2$,
    we obtain the following finite-time lower bound:
    \begin{align*}
        R_T(A)
        =
        \Omega
        \rbrm[\bigg]{
            \sum_{\substack{i \in [m] \\\Deltar(i) > 0}} \frac{1}{\Deltar(i)}
            \log {
                \frac{(\Deltar(i))^2 T}{16 C^2 (m+n)}
            }
            +
            \sum_{\substack{j \in [n]\\ \Deltac(j) > 0}} \frac{1}{\Deltac(j)}
            \log {
                \frac{(\Deltac(j))^2 T}{16 C^2 (m+n)}
            }
        }.
    \end{align*}
    This matches upper bound in \eqref{eq:psne-bound-formula} up to a constant factor,
    under the conditions that
    $n = \Theta(m)$
    and that
    the values of non-zero $\Deltar(i)$'s and $\Deltac(j)$'s are equivalent up to a constant factor.
\end{remark}

\subsection{Last-iterate convergence}\label{sec:last-iterate}
Somewhat surprisingly, we show that Tsallis-INF also ensures 
the following last-iterate convergence guarantee.
\begin{proposition}
    \label{prop:last-iterate}
    For a game with a unique PSNE, if both players use the Tsallis-INF algorithm,
    the output distributions converge to the PSNE (in expectation) as follows: for any $t$,
    \begin{align}
        \E \sbr{
        D(x_{\star}, x_t)
        +
        D(y_{\star}, y_t)
        }
        =
        O \rbrm[\Big]{
            \frac{1}{\sqrt{t}}
            \rbrm[\big]{
                \omegar \log_{+}{
                    \frac{mt}{\omegar^2}
                }
                +
                \omegac \log_{+}{
                    \frac{nt}{\omegac^2}
                }
            }
        },
        \label{eq:li1}
    \end{align}
    where $D(\cdot, \cdot)$ represents the Bregman divergence associated with the $(1/2)$-Tsallis entropy.
    Consequently,
    we have
    \begin{align}
        \E \sbr{
            \sqrt{
                \DG \rbr{
                    x_t,
                    y_t
                }
            }
        }
        =
        O \rbrm[\Big]{
            \frac{1}{\sqrt{t}}
            \rbrm[\big]{
                \omegar \log_{+}{
                    \frac{mt}{\omegar^2}
                }
                +
                \omegac \log_{+}{
                    \frac{nt}{\omegac^2}
                }
            }
        }.
        \label{eq:li2}
    \end{align}
\end{proposition}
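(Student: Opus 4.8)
The plan is to obtain both displays purely from the regret guarantee of Theorem~\ref{thm:general-bound-together}, exploiting the fact that for a game with a unique PSNE the maximum-support NE is the PSNE $(\istar,\jstar)$ itself, so that $\abs{I}=\abs{J}=1$, the term $\sqrt{T(\abs{I}+\abs{J}-2)}$ vanishes, and $\Deltar,\Deltac,\omegar,\omegac$ coincide with the quantities in Corollary~\ref{cor:PSNE} (equivalently, the ones in the statement). Instantiating the first bound of Theorem~\ref{thm:general-bound-together} at $x=\xstar$ and $y=\ystar$ then shows that, for \emph{every} horizon $T$, each of $\Rr_T(\xstar)+\sqrt{T}\,\constd\,\E[D(\xstar,x_{T+1})]$ and $\Rc_T(\ystar)+\sqrt{T}\,\constd\,\E[D(\ystar,y_{T+1})]$ is at most $O\bigl(\omegar\log_{+}(mT/\omegar^{2})+\omegac\log_{+}(nT/\omegac^{2})\bigr)$.

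First I would add these two inequalities and discard the regret terms via their sign. Because $(\istar,\jstar)$ is a PSNE we have $A(\istar,j)\ge A(\istar,\jstar)$ for all $j\in[n]$ and $A(i,\jstar)\le A(\istar,\jstar)$ for all $i\in[m]$; hence for \emph{any} $x_t\in\cP_m$ and $y_t\in\cP_n$, $\xstar^{\top}Ay_t\ge A(\istar,\jstar)\ge x_t^{\top}A\ystar$, so $\Rr_T(\xstar)+\Rc_T(\ystar)=\E\bigl[\sum_{t=1}^{T}(\xstar^{\top}Ay_t-x_t^{\top}A\ystar)\bigr]\ge 0$. Consequently $\sqrt{T}\bigl(\E[D(\xstar,x_{T+1})]+\E[D(\ystar,y_{T+1})]\bigr)=O\bigl(\omegar\log_{+}(mT/\omegar^{2})+\omegac\log_{+}(nT/\omegac^{2})\bigr)$; dividing by $\sqrt{T}$ and re-indexing with $T=t-1$ yields~\eqref{eq:li1} for $t\ge 2$, absorbing $\frac{1}{\sqrt{t-1}}\le\frac{\sqrt{2}}{\sqrt{t}}$ and the monotonicity of $\log_{+}$ into the constant. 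The case $t=1$ is handled directly: $x_1$ and $y_1$ are uniform, so $D(\xstar,x_1)+D(\ystar,y_1)=2(\sqrt{m}-1)+2(\sqrt{n}-1)$, which is at most twice the right-hand side of~\eqref{eq:li1} at $t=1$ because $\omegar\ge\frac{m-1}{2}$ and $\omegac\ge\frac{n-1}{2}$ (each gap being at most $2$).

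For~\eqref{eq:li2} the plan is to prove the deterministic inequality $\DG(x_t,y_t)\le 2\bigl(D(\xstar,x_t)+D(\ystar,y_t)\bigr)^{2}$ and then take expectations. This splits into two pieces. The first is a Lipschitz bound: from $\DG(x_t,y_t)=\max_{x}x^{\top}Ay_t-\min_{y}x_t^{\top}Ay$ and $\abs{A(i,j)}\le 1$ one gets $\max_{x}x^{\top}Ay_t\le\max_{x}x^{\top}A\ystar+\nbr{y_t-\ystar}_1$ and $\min_{y}x_t^{\top}Ay\ge\min_{y}\xstar^{\top}Ay-\nbr{x_t-\xstar}_1$; the PSNE identities $\max_{x}x^{\top}A\ystar=\min_{y}\xstar^{\top}Ay=A(\istar,\jstar)$ then give $\DG(x_t,y_t)\le\nbr{x_t-\xstar}_1+\nbr{y_t-\ystar}_1$. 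The second piece uses that $\xstar=e_{\istar}$ is a vertex of $\cP_m$: on one hand $\nbr{x_t-\xstar}_1=2\sum_{i\ne\istar}x_t(i)$, and on the other the definition of $D$ gives $D(\xstar,x_t)\ge\sum_{i\ne\istar}\sqrt{x_t(i)}\ge\sqrt{\sum_{i\ne\istar}x_t(i)}$, so $\nbr{x_t-\xstar}_1\le 2D(\xstar,x_t)^{2}$, and symmetrically for the column player. Combining these with $\sqrt{a^{2}+b^{2}}\le a+b$ yields $\sqrt{\DG(x_t,y_t)}\le\sqrt{2}\,\bigl(D(\xstar,x_t)+D(\ystar,y_t)\bigr)$; taking expectations and invoking~\eqref{eq:li1} gives~\eqref{eq:li2}.

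The step I expect to be the crux is the quadratic comparison $\nbr{x_t-\xstar}_1\le 2D(\xstar,x_t)^{2}$ (and its column analogue): it is precisely this \emph{quadratic}, rather than linear, dependence of the $\ell_1$-distance to a pure strategy on $D$ that lets~\eqref{eq:li2} inherit the full $O(1/\sqrt{t})$ rate of~\eqref{eq:li1}, instead of only the $O(1/t^{1/4})$ rate that a naive combination of $\DG(x_t,y_t)\le\nbr{x_t-\xstar}_1+\nbr{y_t-\ystar}_1$ with Jensen's inequality would give. Everything else — the sign of $\Rr_T(\xstar)+\Rc_T(\ystar)$, the duality-gap Lipschitz bound, the re-indexing, and the case $t=1$ — is routine.
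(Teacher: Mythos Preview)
Your proposal is correct and follows essentially the same approach as the paper: instantiate the first bound of Theorem~\ref{thm:general-bound-together} at $(\xstar,\ystar)$ with $|I|=|J|=1$, use $\Rr_T(\xstar)+\Rc_T(\ystar)\ge 0$ to isolate the Bregman term and obtain~\eqref{eq:li1}, then combine the Lipschitz bound for $\DG$ with the pointwise inequality $D(\xstar,x_t)\ge\sum_{i\ne\istar}\sqrt{x_t(i)}\ge\tfrac{1}{\sqrt{2}}\sqrt{\|x_t-\xstar\|_1}$ to derive~\eqref{eq:li2}. The paper's proof is the same argument with slightly looser constants, and you are in fact more careful than the paper about the re-indexing $T\mapsto t-1$ and the base case $t=1$.
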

Even though $\E[\sqrt{\DG(x_t, y_t)}] = O(1/\sqrt{t})$ (ignoring other factors) only imply $\E[\DG(x_t, y_t)] = O(1/\sqrt{t})$ but not necessarily $\E[\DG(x_t, y_t)] = O(1/t)$ (so the last-iterate convergence might be slower than the average-iterate convergence),
this rate is already much better than the generic $O(1/t^{1/6})$ rate of~\citet{cai2023uncoupled} for general zero-sum games.
Our proof is also particularly simple and is in fact a simple corollary of the regret bound of Theorem~\ref{thm:general-bound-together}. \\

\begin{proof}
    Fix arbitrary $T \in \mathbb{N}$.
    From the first bound of Theorem~\ref{thm:general-bound-together},
    we have
    \begin{align*}
        &
        \Rr_T(\xstar)
        +
        \Rc_T(\ystar)
        +
        C_2 \sqrt{T} \E \left[
            D(x_{\star}, x_{T+1})
            +
            D(y_{\star}, y_{T+1})
        \right]
        =
        O \rbrm[\Big]{
        {
            \omegar
            \log_+ \frac{
                mT
            }{
                \omegar^2
            }
        }
        +
        { 
            \omegac
            \log_+ \frac{
                nT
            }{
                {\omegac}^2
            }
        }
        }.
    \end{align*}
    Since $(\xstar, \ystar)$ is a Nash equilibrium, we know that 
    $\Rr_T(\xstar) + \Rc_T(\ystar) \geq 0$.
    This implies
    \begin{align*}
        \E \left[
            D(x_{\star}, x_{T+1})
            +
            D(y_{\star}, y_{T+1})
        \right]
        &
        =
        O \rbrm[\Big]{
            \frac{1}{\sqrt{T}}
            \rbrm[\big]{
                \omegar \log_{+} {
                    \frac{mT}{\omegar^2}
                }
                +
                \omegac \log_{+} {
                    \frac{nT}{\omegac^2}
                }
            }
        },
    \end{align*}
    which completes the proof of \eqref{eq:li1}.
    The Bregmann divergence associated with $(1/2)$-Tsallis entropy 
    is bounded as
    \begin{align*}
            D( \xstar, x_t )
        =
            \sum_{i=1}^m
            \frac{1}{\sqrt{x_t(i)}}\rbrm[\Big]{
                \sqrt{\xstar(i)} - \sqrt{x_t(i)}
            }^2
        \ge
            \sum_{i \in [m] \setminus \{ \istar \}}
            \sqrt{x_t(i)}
        \ge
        \frac{1}{2}
            \sqrt{
                \| x_t - \xstar \|_1.
            }
    \end{align*}
    As $\DG$ is a $1$-Lipschitz function w.r.t.~the $L^1$ norm (Lemma~\ref{lem:DGLipschitz} in Appndix~\ref{sec:app:duality-gap}),
    we have
    \begin{align*}
        \E \sbr{
            \sqrt{
                \DG( x_t, y_t )
            }
        }
        &
        \le
        \E \sbr{
            \sqrt{
                \DG( \xstar, \ystar )
                +
                \| x_t - \xstar \|_1
                +
                \| y_t - \ystar \|_1
            }
        }
        \\
        &
        \le
        2 
        \E \sbr{
            D(\xstar, x_t)
            +
            D(\ystar, y_t)
        }.
    \end{align*}
    From this and \eqref{eq:li1},
    we obtain \eqref{eq:li2} as desired.
\end{proof}

\subsection{Sample complexity of identifying PSNE}\label{sec:PSNE_complexity}
While the main focus of our work is regret minimization, we show that our algorithm can also find the exact PSNE with high probability, again using its regret guarantee.
Specifically,
define $\Delta_{\min} = \min\cbr{ \min_{i \in [m] \setminus \{ \istar \}} \Deltar(i), \min_{j \in [n] \setminus \{ \jstar \}} \Deltac(j) }$.
We prove the following.
\begin{theorem}
    For output sequences $\{ i_t \}_{t=1}^T$ and $\{ j_t \}_{t=1}^T$ generated by the Tsallis-INF algorithm,
    let $\hat{i}_T$ and $\hat{j}_T$ be the most frequently chosen arms in these sequences,
    i.e.,
    $\hat{i}_T \in \argmax_{ i \in [m] } \absm[\big]{\{ t \in [T] \mid i_t = i \} }$ and
    $\hat{j}_T \in \argmax_{ j \in [n] } \absm[\big]{ \{ t \in [T] \mid j_t = j \} }$.
    Then,
    there exists a universal constant $\alpha > 0$ such that
    $(\hat{i}_T, \hat{j}_T) = (\istar, \jstar)$
    holds with probability at least $3/4$ for
    $T \ge \alpha \frac{\omegar + \omegac}{\Delta_{\min}}$.
\end{theorem}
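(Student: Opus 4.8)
The plan is to derive this identification guarantee directly from the regret bound of Corollary~\ref{cor:PSNE}, by translating ``small regret'' into ``few suboptimal pulls'' and then into ``correct most-frequent arm'' via Markov's inequality. First I would make the elementary observation that if $\sum_{t=1}^T \one[i_t\neq\istar] < T/2$, then $\sum_{t=1}^T \one[i_t=\istar] > T/2 > \sum_{t=1}^T \one[i_t=i]$ for every $i\neq\istar$, so $\istar$ is the unique most frequently chosen row and $\hat{i}_T=\istar$; the same reasoning applies to the column player. Hence it suffices to show
\begin{align*}
\prob\Big(\textstyle\sum_{t}\one[i_t\neq\istar]\geq T/2\Big)+\prob\Big(\textstyle\sum_{t}\one[j_t\neq\jstar]\geq T/2\Big)\leq\tfrac14 .
\end{align*}
By Markov's inequality and $\E\big[\sum_t\one[i_t\neq\istar]\big]=\sum_{i\neq\istar}\Tr_{T,i}$ (recall $\Tr_{T,i}=\E[\sum_t x_t(i)]$), the left-hand side is at most $\tfrac{2}{T}\big(\sum_{i\neq\istar}\Tr_{T,i}+\sum_{j\neq\jstar}\Tc_{T,j}\big)$.

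Next I would bound the expected number of suboptimal pulls by the regret. Since $(\istar,\jstar)$ is a PSNE we have $\Deltar\geq0$, $\Deltac\geq0$, and $\Deltar(\istar)=\Deltac(\jstar)=0$; expanding $(\xstar-x_t)^\top Ay_t+x_t^\top A(y_t-\ystar)=\sum_i\Deltar(i)x_t(i)+\sum_j\Deltac(j)y_t(j)$ (using $\xstar=e_{\istar}$, $\ystar=e_{\jstar}$ and that $x_t,y_t$ are distributions) and summing over $t$ gives the identity
\begin{align*}
\sum_{i\neq\istar}\Deltar(i)\,\Tr_{T,i}+\sum_{j\neq\jstar}\Deltac(j)\,\Tc_{T,j}=\Rr_T(\xstar)+\Rc_T(\ystar)\leq\Rr_T+\Rc_T .
\end{align*}
Because every nonzero gap is at least $\Delta_{\min}$, this yields $\sum_{i\neq\istar}\Tr_{T,i}+\sum_{j\neq\jstar}\Tc_{T,j}\leq(\Rr_T+\Rc_T)/\Delta_{\min}$, and Corollary~\ref{cor:PSNE} bounds the numerator by $O\big(\omegar\log_{+}\tfrac{mT}{\omegar^2}+\omegac\log_{+}\tfrac{nT}{\omegac^2}\big)$. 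Combining with the first step, the failure probability is $O\big(\tfrac{1}{T\Delta_{\min}}(\omegar\log_{+}\tfrac{mT}{\omegar^2}+\omegac\log_{+}\tfrac{nT}{\omegac^2})\big)$.

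Finally I would take $T\geq\alpha(\omegar+\omegac)/\Delta_{\min}$ for a sufficiently large constant $\alpha$. Then $\tfrac{\omegar+\omegac}{T\Delta_{\min}}\leq\tfrac1\alpha$, so the failure probability is $O(1/\alpha)$ times the logarithmic factor $\max\{\log_{+}\tfrac{mT}{\omegar^2},\log_{+}\tfrac{nT}{\omegac^2}\}$ evaluated at this $T$; to control the latter I would use the elementary inequalities $\omegar\geq(m-1)/2$ and $\omegac\geq(n-1)/2$ (each gap is at most $2$, so each reciprocal gap is at least $1/2$), which bound $m/\omegar^2$ and $n/\omegac^2$ and keep the arguments of these logarithms from growing with $T$, so that choosing $\alpha$ large enough pushes the whole expression below $1/4$. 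A union bound over the row and column events then completes the proof, and the $1-\delta$ amplification mentioned in the introduction follows by repeating the run $O(\log(1/\delta))$ times and returning the coordinate-wise most frequent pair.

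The step I expect to be the main obstacle is this last one: verifying that the regret bound of Corollary~\ref{cor:PSNE}, evaluated at the threshold $T\asymp(\omegar+\omegac)/\Delta_{\min}$, is only a small constant fraction of the ``budget'' $T\Delta_{\min}$. This is precisely where the refined $\log_{+}$ form of the regret bound is essential rather than the coarser $(\omegar+\omegac)\log T$ form, since the latter's $\log T$ would grow with the threshold and defeat the argument; the simplex/gap inequalities $\omegar\geq(m-1)/2$ and $\omegac\geq(n-1)/2$ are what make the $\log_{+}$ term quantitatively harmless. The remaining ingredients — the frequency-count reduction, Markov's inequality, and the PSNE regret identity — are routine.
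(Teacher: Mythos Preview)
Your overall approach matches the paper's exactly: Markov's inequality on the count of suboptimal pulls, then bound this count by $(\Rr_T+\Rc_T)/\Delta_{\min}$ via the PSNE regret identity $\sum_i\Deltar(i)\Tr_{T,i}+\sum_j\Deltac(j)\Tc_{T,j}\le\Rr_T+\Rc_T$, then plug in the refined $\log_+$ form of Corollary~\ref{cor:PSNE}, and finally verify that the resulting expression is at most $1/4$ once $T$ exceeds the stated threshold. The paper carries out precisely these four steps in this order.

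The one place you diverge is the very last computation, and there your treatment does not close. After substituting $T\ge\alpha(\omegar+\omegac)/\Delta_{\min}$, the term to control is $\tfrac{\omegar}{\Delta_{\min}T}\log_{+}\tfrac{mT}{\omegar^{2}}$ (and its column analogue). Your inequality $\omegar\ge(m-1)/2$ does bound the ratio $m/\omegar$ by an absolute constant, but the argument of the $\log_{+}$ then reduces only to $O(T/\omegar)$, which at the threshold is $\asymp\alpha(\omegar+\omegac)/(\omegar\Delta_{\min})$---still instance-dependent through $1/\Delta_{\min}$ (and through $\omegac/\omegar$). So the claim that these ``arguments of the logarithms do not grow'' is not justified, and you cannot conclude that a \emph{universal} $\alpha$ suffices from what you have written. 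The paper handles this step differently: it invokes $\omegar\ge m\Delta_{\min}$ (and $\omegac\ge n\Delta_{\min}$) to replace $\log_{+}\tfrac{mT}{\omegar^{2}}$ by $\log_{+}\tfrac{\Delta_{\min}T}{\omegar}$, after which the whole term has the self-referential form $\tfrac{1}{z}\log_{+}z$ with $z=\Delta_{\min}T/\omegar\ge\alpha$; monotonicity of $z\mapsto\tfrac{1}{z}\log_{+}z$ then yields the universal bound $\tfrac{1}{\alpha}\log_{+}\alpha$, and $\alpha$ is chosen to make this at most $1/(8C_3)$. To align with the paper you should swap your $(m-1)/2$ inequality for this maneuver, so that the $\Delta_{\min}$ in the denominator of the prefactor and the $\Delta_{\min}$ implicit in the log argument cancel and leave an instance-free quantity.
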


\begin{proof}
    From the definition of $\hat{i}_T$
    and Markov's inequality,
    we have
    \begin{align*}
        \Pr \sbrm[\big]{ \hat{i}_T \neq i_{\star} }
        &
        \le
        \Pr \sbrm[\Big]{ \sum_{t=1}^T \mathbf{1}[i_t = i_{\star}] \le \frac{T}{2}  }
        =
        \Pr \sbrm[\Big]{ T - \sum_{t=1}^T \mathbf{1}[i_t = i_{\star}] \ge \frac{T}{2}  }
        \\
        &
        \le
        \frac{2}{T}
        \E \sbrm[\Big]{ T - \sum_{t=1}^T \mathbf{1}[i_t = i_{\star}]  }
        =
        2
        -
        \frac{2}{T}
        \E \sbrm[\Big]{ \sum_{t=1}^T x_{t}(\istar)  }
        =
        2(1 - \bar{x}_{T}(\istar)).
        \yestag\label{eq:Prx}
    \end{align*}
    As we have
        $
        \bar{x}_T \cdot
        \Deltar
        \ge
        \sum_{i \neq i_{\star}}
        \bar{x}_{t}(i)
        \Deltar(i)
        \ge
        \sum_{i \neq i_{\star}}
        \bar{x}_{T}(i)
        \Deltar_{\min}
        =
        \Deltar_{\min} ( 1 - \bar{x}_{T}(\istar)),
        $
    by combining this with \eqref{eq:Prx},
    we obtain
    $
        \Pr \sbrm[\big]{ \hat{i}_T \neq i_{\star} }
        \le
        2(1 - \bar{x}_{T}( i_{\star}))
        \le
        \frac{2}{\Deltar_{\min}}
        \bar{x}_T \cdot\Deltar.
    $
    As a similar bound holds for $\Pr \sbrm[\big]{ \hat{j}_T \neq j_{\star} }$,
    we have
    \begin{align}
        \Pr \sbrm[\Big]{ (\hat{i}_T, \hat{j}_T)  \neq ( \istar, \jstar) }
        \le
        \Pr \sbrm[\big]{ \hat{i}_T \neq i_{\star}} + \Pr\sbrm[\big]{ \hat{j}_T \neq j_{\star} }
        \le
        \frac{2}{\Delta_{\min}}
        \rbrm[\Big]{
            \bar{x}_T \cdot \Deltar
            +
            \bar{y}_T \cdot \Deltac
        }
        \label{eq:PrDelta}
    \end{align}
    From the definition of $\Deltar$
    and Theorem~\ref{thm:general-bound-together},
    we have
    \begin{align*}
        \frac{2}{\Delta_{\min}}
        \rbrm[\Big]{
            \bar{x}_T \cdot \Deltar
            +
            \bar{y}_T \cdot \Deltac
        }
        & \le
        \frac{2}{\Delta_{\min}} \frac{\Rr_T + \Rc_T}{T}
        \\
        &
        \leq 2C_3
        \rbrm[\Big]{
            {
                \frac{\omegar}{\Delta_{\min}T}
                \log_+ \frac{ mT }{ \omegar^2 }
            }
            +
            { 
                \frac{\omegac}{\Delta_{\min}T}
                \log_+ \frac{ nT }{ \omegac^2 }
            }
        } \\
        &
        \leq 2C_3
        \rbrm[\Big]{
            {
                \frac{\omegar}{\Delta_{\min}T}
                \log_+ \frac{ \Delta_{\min}T }{ \omegar }
            }
            +
            { 
                \frac{\omegac}{\Delta_{\min}T}
                \log_+ \frac{ \Delta_{\min}T }{ \omegac }
            }
        },\\
        &
        \leq 2C_3
        \rbrm[\Big]{
            {
                \frac{1}{\alpha}
                \log_+ \alpha
            }
            +
            { 
                \frac{1}{\alpha}
                \log_+ \alpha
            }
        } \leq \frac{1}{4},
    \end{align*}
    where $C_3$ is the contant factor hidden by the $O(\cdot)$ symbol in \eqref{eq:psne-bound-formula},
    and in the third inequality we used the fact that $\omegar \ge m \Delta_{\min}$ and $\omegac \ge n \Delta_{\min}$.
    The last inequality holds if we take $\alpha=8C_3+4$.
    By combining this with \eqref{eq:PrDelta},
    we obtain
    $\Pr \sbrm[\big]{ (\hat{i}_T, \hat{j}_T)  \neq ( \istar, \jstar) } \le 1/4$,
    which completes the proof.
\end{proof}
From this,
we can further boost the confidence and identify the PSNE with probability at least $(1-\delta)$ with 
$O\rbrm[\big]{\frac{\omegar + \omegac}{\Delta_{\min}} \log \rbrm{ 1/\delta } }$ samples.
More concretely,
consider repeating $S > 1$ independent trials of calculating $\hat{i}_T$.
Let $\hat{i}_{T, s}$ be the result for the $s$-th trial.
Let $\tilde{i}_{T,S} \in \argmax_{i \in [m]} \absm[\big]{\{ s \in [S] \mid \hat{i}_{T, s} = i \} }$
be the arm most frequently chosen in these $S$ trials.
We then have
\begin{align*}
    \Pr \sbrm[\big]{
    \tilde{i}_{T,S}
    \neq 
    i_{\star}
    }
    &
    \le
    \Pr \sbrm[\bigg]{
    \sum_{s=1}^S
    \mathbf{1}\sbrm[\big]{
    \hat{i}_{T, s}
    =
    i_{\star}
    }
    \le
    S/2
    }
    \\
    &
    \le
    \Pr\sbrm[\bigg]{
    \sum_{s=1}^S
    X_s
    \le
    S/2
    \mid
    X_s \sim \mathrm{Ber}(3/4),
    ~
    \mbox{i.i.d. for $s \in [S]$}
    }
    \le \exp \rbr{ \Omega( - S ) }.
\end{align*}
Hence,
for any $\delta \in (0, 1)$,
by setting $S = \Theta ( 1 / \delta )$,
we have 
$
\Pr \left[
(\tilde{i}_{T,S}, \tilde{j}_{T,S})
=
(\istar, \jstar)
\right]
\ge 1 - \delta
$.
We note that,
to perform this procedure, it is necessary to know an approximate value of $\frac{\omegar + \omegac}{\Delta_{\min}}$.

We also note that due to Lemma~\ref{lem:sqrtk-ratio},
our sample complexity is at most $O\rbrm[\big]{\sqrt{\max\cbrm{n,m}}}$ times the information-theoretic optimal,
which is $O\rbrm[\big] { 
\sum_{i \in [m] \setminus \{ \istar \}} \frac{1}{{\Deltar}^2_i}
+
\sum_{j \in [n] \setminus \{ \jstar \}} \frac{1}{{\Deltac}^2_j}
}
$ and is achieved by the Midsearch algorithm of~\citet{maiti2024midsearch}.
However, their algorithm is coupled; that is, Midsearch requires the algorithm to control both players at the same time,
while our algorithm is a no-regret uncoupled learning dynamic.

\section{Numerical Experiments}\label{sec:experiments}

\begin{figure}[t]
    \centering
    \includegraphics[width=0.8\textwidth]{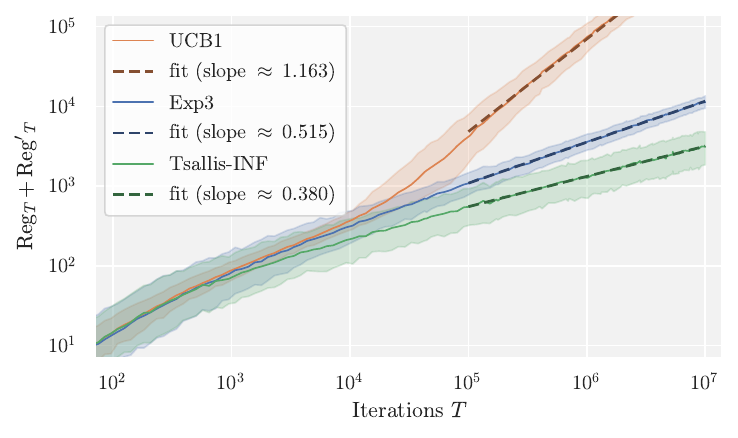}
    \caption{
    Regret scaling for Tsallis-INF and two other bandit algorithms. Each configuration $(T)$ is run for 512 trials. The interval between the 10th and 90th percentile is overlaid. The thicker dashed line represents a linear fit on the $T\geq 10^5$ subset of the log-log data.}
    \label{fig:regret-comparison}
\end{figure}

To validate our theoretical results,
we conduct a few numerical experiments.

The first experiment compares
Tsallis-INF against two baselines in terms of the regret: 
the classical UCB1~\citep{auer2002finite} and Exp3~\citep{auer2002nonstochastic} algorithms, 
which are known to have $O(T)$ and $\tilde{O}(\sqrt{T})$ regret bounds respectively in the adversarial setting.
We compare them on the game associated with $A$ defined in \eqref{eq:example-2x2-game-matrix-simplified},
with varying $T$ and $\eps=T^{-1/3}$,
where feedback $r_t$ follows a Bernoulli distribution over $\{ -1, 1 \}$ such that $ \E[r_t \mid i_t, j_t] = A(i_t, j_t)$.
As discussed, Theorem~\ref{thm:general-bound-together} predicts a regret of $\tilde{O}(T^{1/3})$ for Tsallis-INF.
The result of the experiment agrees with all these bounds in Figure~\ref{fig:regret-comparison},
where the asymptotic slope in the log-log plot (shown with a linear fit on the $T\geq 10^5$ region) is close to the theoretical prediction.

\begin{figure}[t]
    \centering
    \includegraphics[width=0.8\textwidth]{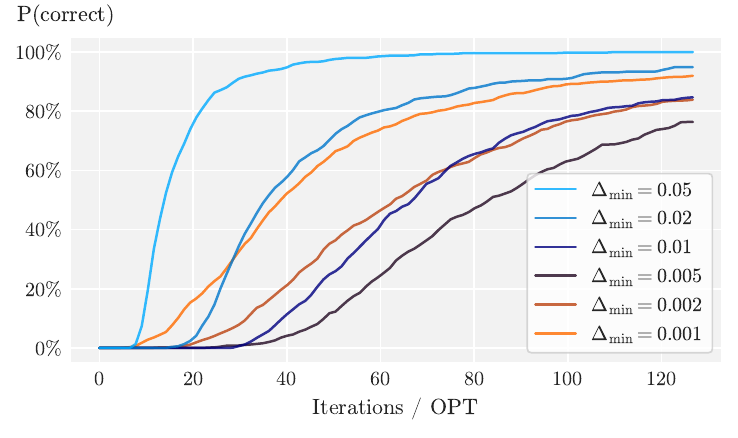}
    \caption{
        Experimental validation of Tsallis-INF's PSNE identification capability.
        The plot shows the algorithm's success rate in correctly identifying PSNE
        against the number of itrations.
        We use a hard instance of a $256\times 256$ matrix and $\Delta_1=0.1$,
        running 512 trials for each $\Delta_{\min}$ values
        over a horizon of $128\OPT$ iterations,
        where $\OPT$ is the theoretical lower bound for PSNE identification.
        The $x$-axis is scaled by $1/\OPT$.
        }
    \label{fig:PSNE-id-rate}
\end{figure}

We have discussed in Section~\ref{sec:PSNE_complexity} that Tsallis-INF needs $\frac{\omegar+\omegac}{\Delta_{\min}}$ iterations to identify the PSNE of a game. To validate our theoretical bounds, we conduct our second experiment using the following hard instance  introduced by \citet{maiti2024midsearch}:
\begin{equation}
    A=\begin{bNiceArray}{ccccc}[nullify-dots, margin, custom-line = {letter=I, tikz=dashed}, cell-space-limits = 4pt]
        0 & 2{\Delta_{\min}} & \Block{1-3}{} 2{\Delta_1} &\Cdots& 2{\Delta_1} \\
        -2{\Delta_{\min}} & \Block{4-4}{} 
                       0      & 1      & \Cdots & 1      \\
        \Block{3-1}{}
        -2{\Delta_1} & -1     & \Ddots & \Ddots & \Vdots \\
        \Vdots       & \Vdots & \Ddots & \Ddots & 1      \\
        -2{\Delta_1} & -1     & \Cdots & -1     & 0      \\
    \end{bNiceArray},
    \label{eq:psne-experiment-array}
\end{equation}
where the top-left entry is the PSNE. We set the number of actions $n=m=256$ and the gap $\Delta_1=0.1$, and vary the value of $\Delta_{\min}$.
Let $\OPT$ represent the theoretical optimal bound for identifying PSNE (ignoring log terms), defined as 
$\OPT=
\sum_{i \in [m] \setminus \{ \istar \}} \frac{1}{{\Deltar}^2_i}
+
\sum_{j \in [n] \setminus \{ \jstar \}} \frac{1}{{\Deltac}^2_j}
$,
which simplifies to $\frac{1}{2\Delta_{\min}^2}+\frac{m-2}{2\Delta_1^2}$ in this experiment.
\citepos{maiti2024midsearch} achieve the optimal $\tilde{O}(\OPT)$ sample complexity,
and their Figure~2 suggests that the sample complexity of Tsallis-INF divided by $\OPT$ is unbounded as $\Delta_{\min}$ decreases,
but our analysis in Section~\ref{sec:PSNE_complexity} disagrees with this trend.
As shown in Figure~\ref{fig:PSNE-id-rate},
the number of iterations needed to identify the PSNE divided by $\OPT$
decreases and then increases
as $\Delta_{\min}$ varies.
Lemma~\ref{lem:sqrtk-ratio} predicts the minimum ratio occurs when $\frac{\Delta_{\min}}{\Delta_1}=\frac{1}{\sqrt{m}+1}=1/17$,
and among the values we tested,
the minimum is reached when $\frac{\Delta_{\min}}{\Delta_1}=0.005/0.1=1/20$,
closely matching the prediction.
This supports our derived bound of $\tilde{O}\rbrm[\big]{\sqrt{m}\cdot \OPT}$.

The code for reproducing the experiments is available on 
\url{https://github.com/EtaoinWu/instance-dependent-game-learning}.

\section{Conclusions}\label{sec:conclusions}
Prior work on learning in games has primarily focused on the full-information setting, where each player perfectly learns their gradient.
In this paper, we investigate the more realistic, partial information setting where only a noisy version of a single realized reward is revealed to the players.
Although it is impossible to optimize for all inputs,
we demonstrated that Tsallis-INF, an existing best-of-both-worlds optimal bandit algorithm, enjoys improvements by exploiting easier instances with larger gaps.
These improvements cover three aspects: regret minimization that bounds long-term average performance, last-iterate convergence guarantees that ensure day-to-day behavior for myopic agents, and a simple way to identify PSNE.

Several important questions remain open. A natural step forward is generalizing our work to general-sum multiplayer games, which is not yet explored due to the added intricacy from the misaligned incentives of players. Equally important is extending our results to extensive-form games and continuous games, each of which introducing extra challenges with their structural complexity. Our algorithm leaves a $O(\sqrt{\max\{m,n\}})$ gap in sample complexity for pure strategy Nash equilibrium identification, and whether this gap is unavoidable in uncoupled learning dynamics remains unknown. More broadly, our work suggests that learning to play games under uncertainty may be more achievable with instance-dependent improvements, opening up new possibilities in other game-theoretic environments.

\bibliographystyle{abbrvnat}
\bibliography{reference.bib}

\newpage
\appendix

\section{Technical Lemmas}\label{sec:app:lemmas}

\begin{lemma}\label{lem:sqrt-bound-ineq}
    For any real numbers $a,b,c$, if $a>0, c>0$, then $a\leq b+\sqrt{a c}$ implies $a\leq 2b+c$.
\end{lemma}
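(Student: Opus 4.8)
The plan is to reduce the claim to a single application of the AM--GM inequality (equivalently, Young's inequality with conjugate exponents $2,2$). First I would note that the hypotheses $a>0$ and $c>0$ guarantee $ac>0$, so $\sqrt{ac}$ is a well-defined nonnegative real and the assumed inequality $a\le b+\sqrt{ac}$ is meaningful; no sign assumption is placed on $b$, and the argument should not use one.

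The key step is the elementary bound
\[
\sqrt{ac}=\sqrt{a}\cdot\sqrt{c}\le \tfrac12\rbr{a+c},
\]
which follows immediately from $(\sqrt{a}-\sqrt{c})^2\ge 0$. Substituting this into the hypothesis gives $a\le b+\tfrac12(a+c)$; subtracting $\tfrac12 a$ from both sides yields $\tfrac12 a\le b+\tfrac12 c$, and multiplying through by $2$ gives exactly $a\le 2b+c$, as desired.

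I do not expect any real obstacle here, since the lemma is essentially a one-line consequence of AM--GM; the only point requiring a little care is that $b$ may be negative, so the rearrangement must be done without assuming $b\ge 0$ (which the above does). For completeness I would remark that an alternative derivation is possible --- set $u=\sqrt{a}>0$, read $a\le b+\sqrt{ac}$ as the quadratic inequality $u^2-\sqrt{c}\,u-b\le 0$ in $u$, solve to get $u\le \tfrac12(\sqrt{c}+\sqrt{c+4b})$ (the discriminant $c+4b$ being nonnegative, as otherwise the quadratic has no real root and is strictly positive, contradicting the hypothesis), and then bound $\sqrt{c(c+4b)}\le \tfrac12\rbr{c+(c+4b)}$ again by AM--GM --- but this route is strictly longer and produces the same constant, so I would present only the short argument.
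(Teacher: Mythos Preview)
Your proof is correct and follows essentially the same idea as the paper's: both arguments reduce to the AM--GM inequality $\sqrt{ac}\le\tfrac12(a+c)$, with the paper simply packaging this via the auxiliary convex function $g(t)=t-\sqrt{t}-\tfrac12(t-1)\ge 0$ (minimum $0$ at $t=1$) and then rescaling by $c$. Your direct application of AM--GM is, if anything, the cleaner presentation.
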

\begin{proof}
    The function $g(t) = t-\sqrt{t}-\frac 12(t-1)$ defined on $\fR_{\geq 0}$ is convex and has a minimum of $0$ at $t=1$. This implies that
    \[
        a-\sqrt{ac}-\frac 12\rbr{a-c}=c \cdot g\rbr{\frac{a}{c}}\geq 0.
    \]
    Therefore, when $b\geq a-\sqrt{ac}$, we also have $2b\geq a-c$.
\end{proof}

\begin{lemma}
    \label{lem:sumszt-2}
    Let $a, b > 0$
    and suppose that $0\le z_t \le b$ holds for all $t = 1, 2, \ldots, T$.
    Also, assume that $\sum_{t=1}^T z_t^2\leq a$.
    We then have
    \begin{align}
        \sum_{t=1}^T \frac{z_t}{\sqrt{t}}
        \le
        f\rbr{a,b},
        \quad
        \mbox{where}
        \quad
        f(a, b) :=
            \min_{s \in [T]} \cbrm[\Big]{
            \sqrt{ a \log \frac{T}{s} } + 2 b \sqrt{s}
            }. \label{eq:sumszt-2}
    \end{align}
    Note that $f$ is a concave function.
\end{lemma}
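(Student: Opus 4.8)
# Proof Proposal for Lemma~\ref{lem:sumszt-2}

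\textbf{Overall strategy.} The plan is to prove the bound for an \emph{arbitrary fixed} $s \in [T]$ and then take the minimum, since the left-hand side $\sum_{t=1}^T z_t/\sqrt{t}$ does not depend on $s$. For a fixed $s$, I would split the sum at index $s$: write $\sum_{t=1}^T \frac{z_t}{\sqrt{t}} = \sum_{t=1}^{s} \frac{z_t}{\sqrt{t}} + \sum_{t=s+1}^{T} \frac{z_t}{\sqrt{t}}$. The first (small-index) block is controlled by the uniform bound $z_t \le b$, and the second (large-index) block is controlled by the variance budget $\sum_t z_t^2 \le a$ via Cauchy--Schwarz.

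\textbf{Bounding the first block.} Using $z_t \le b$ and $\sum_{t=1}^s \frac{1}{\sqrt t} \le \int_0^s \frac{dt}{\sqrt t} = 2\sqrt s$, we get $\sum_{t=1}^s \frac{z_t}{\sqrt t} \le 2b\sqrt s$. This matches the $2b\sqrt s$ term in $f(a,b)$ exactly.

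\textbf{Bounding the second block.} By Cauchy--Schwarz,
\[
    \sum_{t=s+1}^{T} \frac{z_t}{\sqrt{t}}
    \le
    \sqrt{\sum_{t=s+1}^{T} z_t^2}
    \cdot
    \sqrt{\sum_{t=s+1}^{T} \frac{1}{t}}
    \le
    \sqrt{a}
    \cdot
    \sqrt{\sum_{t=s+1}^{T} \frac{1}{t}}.
\]
It remains to check $\sum_{t=s+1}^{T} \frac 1t \le \log\frac{T}{s} = \ln(T/s)/\ln 2$; actually the cleaner route is $\sum_{t=s+1}^T \frac1t \le \int_s^T \frac{dt}{t} = \ln(T/s) \le \log(T/s)$ (recall $\log$ is base-$2$ in this paper, so $\ln x \le \log x$ for $x\ge 1$). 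This yields the $\sqrt{a\log(T/s)}$ term. Adding the two blocks and minimizing over $s \in [T]$ gives the claimed bound.

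\textbf{Concavity of $f$.} For the final sentence, note that for each fixed $s$ the map $(a,b)\mapsto \sqrt{a\log(T/s)} + 2b\sqrt s$ is concave (square root is concave, $b\mapsto 2b\sqrt s$ is linear, and a sum of concave functions is concave), and a pointwise minimum of concave functions is concave. The main (though mild) obstacle is just being careful with the base of the logarithm and the off-by-one in the integral comparison $\sum_{t=s+1}^T \frac1t \le \ln(T/s)$; everything else is routine.
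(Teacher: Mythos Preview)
Your proposal is correct and follows essentially the same argument as the paper: split at an arbitrary $s\in[T]$, bound the first block by $z_t\le b$ and $\sum_{t\le s}1/\sqrt{t}\le 2\sqrt{s}$, and bound the second block by Cauchy--Schwarz together with $\sum_{t=s+1}^T 1/t \le \ln(T/s)\le \log(T/s)$. Your additional justification of the concavity claim (pointwise minimum over $s$ of concave functions of $(a,b)$) is correct and in fact fills in a point the paper leaves implicit.
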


\begin{proof}
    We split the sum into the first $n$ terms and the last $T-s$ terms:
    \begin{align}
        \sum_{t=1}^T \frac{z_t}{\sqrt{t}}
        =
        \sum_{t=1}^{s} \frac{z_t}{\sqrt{t}}
        +
        \sum_{t=s+1}^{T} \frac{z_t}{\sqrt{t}}.\label{eq:lem2-sum-two-parts}
    \end{align}
    The first part can be bounded with the fact that $z_t\leq b$:
    \begin{align*}
        \sum_{t=1}^{s} \frac{z_t}{\sqrt{t}}
        & \leq
        \sum_{t=1}^{s} \frac{b}{\sqrt{t}}\\
        & =
        b \sum_{t=1}^{s} \frac{1}{\sqrt{t}} \\
        & \leq
        b \cdot 2\sqrt{s}.\yestag\label{eq:lem2-sum-part-1}
    \end{align*}
    The other part can be bounded with the Cauchy-Schwarz inequality:
    \begin{align*}
        \sum_{t=s+1}^{T} \frac{z_t}{\sqrt{t}}
        & =
        \sum_{t=s+1}^{T} z_t\frac{1}{\sqrt{t}} \\
        & \leq
        \sqrt{\rbrm[\bigg]{\sum_{t=s+1}^{T} z_t^2}\rbrm[\bigg]{\sum_{t=s+1}^{T} \frac{1}{t}}} \\
        & \leq
        \sqrt{a \log{\frac{T}{s}}}.\yestag\label{eq:lem2-sum-part-2}
    \end{align*}
    Note that our $\log(\cdot)$ is of base 2. Our desired inequality can be obtained by plugging~\eqref{eq:lem2-sum-part-1} and \eqref{eq:lem2-sum-part-2} into \eqref{eq:lem2-sum-two-parts}.
\end{proof}

\begin{lemma}
    \label{lem:sqrtk-ratio}
    For $n\geq 2$ and an arbitrary sequence of nonnegative real numbers $x_1,\dots,x_n$, the following inequality holds:
    \begin{equation*}
        \max\cbr{x_1,\dots,x_n}\sum_{i=1}^n x_i \leq \rbrm[\Big]{\frac{1}{2}+\frac{1}{2}\sqrt{n}} \sum_{i=1}^n x_i^2.
    \end{equation*}
    Equality holds when all but one $x_i$ values are equal to $\frac{1}{\sqrt{n}+1}$ times the single outlier.
\end{lemma}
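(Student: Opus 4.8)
\textbf{Proof proposal for Lemma~\ref{lem:sqrtk-ratio}.}
The plan is to reduce the problem to a one-parameter optimization by exploiting homogeneity and the structure of the extremal configuration hinted at in the statement. First I would note that the inequality is homogeneous of degree $2$ in $(x_1,\dots,x_n)$, so I may normalize. Without loss of generality assume the maximum is attained at the first coordinate, i.e.\ $x_1 = \max_i x_i$, and normalize $x_1 = 1$; then each $x_i \in [0,1]$ and the claim becomes
\begin{equation*}
    1 + \sum_{i=2}^n x_i \;\le\; \rbrm[\Big]{\tfrac12 + \tfrac12\sqrt{n}}\rbrm[\Big]{1 + \sum_{i=2}^n x_i^2}.
\end{equation*}

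Next I would argue that, for fixed $n$, the left-hand side minus a multiple of the right-hand side is maximized when all the ``free'' variables $x_2,\dots,x_n$ take a common value. The cleanest way is to observe that we want to maximize $\sum_{i=2}^n x_i$ subject to $\sum_{i=2}^n x_i^2 = S$ for a fixed $S \in [0, n-1]$ with each $x_i \in [0,1]$; by Cauchy-Schwarz (or Lagrange multipliers, or simply concavity of $t \mapsto \sqrt{t}$ applied coordinatewise) the sum $\sum x_i$ is largest when the $x_i$ are all equal, giving $\sum_{i=2}^n x_i \le \sqrt{(n-1)S}$. So it suffices to prove, for all $S \in [0, n-1]$,
\begin{equation*}
    1 + \sqrt{(n-1)S} \;\le\; \rbrm[\Big]{\tfrac12 + \tfrac12\sqrt{n}}(1 + S).
\end{equation*}
This is now a genuinely one-dimensional inequality. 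Writing $u = \sqrt{S} \ge 0$ and $K = \tfrac12 + \tfrac12\sqrt{n}$, it reads $K u^2 - \sqrt{n-1}\,u + (K-1) \ge 0$, a quadratic in $u$ with positive leading coefficient; I would check its discriminant is nonpositive, i.e.\ $(n-1) \le 4K(K-1) = (1+\sqrt{n})^2 - (1+\sqrt{n}) = n + 2\sqrt{n} + 1 - 1 - \sqrt{n} = n + \sqrt{n}$, which holds with room to spare. That settles the inequality; tracking the equality conditions in Cauchy-Schwarz (all free variables equal) and in the quadratic (discriminant case, which would force a specific relation between $S$ and $n$) recovers the stated extremal configuration where all but one $x_i$ equal $\tfrac{1}{\sqrt n + 1}$ times the outlier — one should double-check that this configuration indeed makes the discriminant bound tight rather than the looser $n-1 \le n+\sqrt n$, which suggests the equality analysis needs a slightly more careful pass (perhaps the normalization should be by the \emph{common} value rather than the outlier, or one verifies directly by substitution).

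The main obstacle I anticipate is not the analytic inequality itself, which is elementary once reduced, but getting the equality characterization exactly right: the two relaxation steps (Cauchy-Schwarz on the free coordinates, and then bounding the resulting univariate expression) each have their own equality cases, and reconciling them to produce the clean statement ``all but one $x_i$ equal $\tfrac{1}{\sqrt n+1}$ times the single outlier'' requires checking that the critical $S$ from the quadratic analysis corresponds precisely to $n-1$ free variables each equal to $\tfrac{1}{\sqrt n + 1}$ (so $S = (n-1)/(\sqrt n+1)^2$) and then verifying both sides agree by direct substitution. Given the discriminant is strictly negative in the bound above, it is likely that equality in the \emph{lemma} is attained only at a boundary/special value rather than from the discriminant vanishing, so I would plan to simply substitute the claimed extremal vector into both sides of the original inequality as a final sanity check rather than relying on chasing equality through the relaxations.
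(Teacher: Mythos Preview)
Your approach is essentially the same as the paper's: both reduce to a one-variable problem by observing (via Cauchy--Schwarz in your version, Jensen in the paper's) that the non-maximal coordinates should all be equal at the extremum, and then analyze the resulting one-parameter expression. The paper parametrizes by $k = \bar{x}/x_1$ and bounds $f(k) = \frac{1+(n-1)k}{1+(n-1)k^2}$ via an AM--GM trick; you parametrize by $u = \sqrt{S}$ and check the discriminant of the resulting quadratic. These are equivalent bookkeepings of the same idea.

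However, you made an arithmetic slip that is the source of all your hesitation about the equality case. With $K = \tfrac12(1+\sqrt{n})$ you have $4K = 2(1+\sqrt{n})$, so
\[
4K(K-1) = 4K^2 - 4K = (1+\sqrt{n})^2 - 2(1+\sqrt{n}) = n-1,
\]
not $n+\sqrt{n}$ (you dropped a factor of $2$ on the linear term). Thus the discriminant $(n-1) - 4K(K-1)$ is \emph{exactly zero}, not strictly negative. This is good news: the quadratic $Ku^2 - \sqrt{n-1}\,u + (K-1)$ is a perfect square with its unique zero at $u = \frac{\sqrt{n-1}}{2K} = \frac{\sqrt{n-1}}{1+\sqrt{n}}$, i.e., $S = \frac{n-1}{(1+\sqrt{n})^2}$. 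Combined with the Cauchy--Schwarz equality condition (all free coordinates equal), this forces $x_i = \sqrt{S/(n-1)} = \frac{1}{\sqrt{n}+1}$ for $i \geq 2$, which is precisely the claimed extremal configuration. So once the arithmetic is corrected, your equality analysis completes itself and the hedging in your final paragraph is unnecessary.
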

\begin{proof}
    Without loss of generality we assume that $x_1\geq x_2\geq \dots \geq x_n$. Let $\bar{x}=\frac{1}{n-1} \sum_{i=2}^n x_i$ be the average of the last $n-1$ numbers. Due to the strict convexity of the square function, we see from Jensen's inequality that
    \begin{equation}
        \sum_{i=1}^n x_i^2 = x_1^2 + \sum_{i=2}^n x_i^2 \geq x_1^2 + (n-1) \bar{x}^2,\label{eq:square-bounded-by-average}
    \end{equation}
    with equality when all $x_2, \dots, x_n$ are equal to $\bar{x}$. Consider the function $f(k)=\frac{1+(n-1)k}{1+(n-1)k^2}$ defined on $k\in [0, 1]$. Note that
    \begin{equation}
        f(k)
        =    \frac{1+(n-1)k}{1+(n-1)k^2}
        =    1+\frac{n-1}{\frac{1}{k}-1+\frac{n}{\frac{1}{k}-1}+2}
        \leq 1+\frac{n-1}{2\sqrt{n}+2}
        =    \frac12 \rbrm[\big]{\sqrt{n}+1}\label{eq:fk-bounded-by-sqrtn} 
    \end{equation}
    where the inequality is due to AM-GM, and is tight when $k=\frac{1}{\sqrt{n}+1}$.
    If we plug in $k=\frac{\bar{x}}{x_1}$, we get
    \begin{align*}
        1+(n-1)\frac{\bar{x}}{x_1}
        &\leq
        \frac12 \rbrm[\big]{\sqrt{n}+1} \rbrm[\Big]{1+(n-1)\rbrm[\big]{\frac{\bar{x}}{x_1}}^2}.\\
    \end{align*}
    We multiply both sides by $x_1^2$ and get
    \begin{align*}
        x_1(x_1+(n-1)\bar{x})
        &\leq
        \frac12 \rbrm[\big]{\sqrt{n}+1} \rbrm[\big]{x_1^2+(n-1)\bar{x}^2},\\
    \end{align*}
    which means that
    \begin{align*}
        x_1\sum_{i=1}^n x_i
        &\leq
        \frac12 \rbrm[\big]{\sqrt{n}+1} \rbrm[\big]{x_1^2+(n-1)\bar{x}^2}.\\
    \end{align*}
    Together with \eqref{eq:square-bounded-by-average} this completes the proof.
    Equality holds when both \eqref{eq:square-bounded-by-average} and \eqref{eq:fk-bounded-by-sqrtn} are tight, i.e., $x_2=\dots=x_n=\bar{x}=\frac{1}{\sqrt{n}+1}x_1$.
\end{proof}

\section{Proof of Theorem \ref{thm:Tsallis-INF}}\label{sec:app:proof-tsallis-inf}

This appendix provides the proof of Theorem~\ref{thm:Tsallis-INF}.
We include the proof of Theorem~\ref{thm:Tsallis-INF},  
as the corresponding proof is not provided in \citet{zimmert2021tsallis},  
and several aspects differ from their setting:  
the range of the loss is different,  
noisy feedback $r_t$ such that $\E[r_t \mid i_t, j_t] = A(i_t, j_t)$ is observed,  
and a negative term is introduced to ensure last-iterate convergence.

We begin by providing the following standard regret upper bound of FTRL.
By refining an analysis of FTRL,
we obtain a negative term of $- \frac{1}{\eta_{T+1}} D (x^*, x_{T+1} )$,
which is used to provide the last-iterate convergence result of  Proposition~\ref{prop:last-iterate}.
\begin{lemma}\label{lem:ftrl_bound}
    Let $\mathcal{X} \in \mathbb{R}^n$ be a non-empty compact convex set. 
    Let $\psi$ be a continuously differentiable convex function over $\mathcal{X}$.
    Suppose that $x_t$ is given by FTRL with the regularizer function $\psi$ 
    and learning rates $\eta_1 \ge \eta_2 \ge \cdots \ge \eta_{T+1} > 0$,
    as follows:
    \begin{align}
        \label{eq:defFTRL}
        x_{t} \in \argmin_{ x \in \cX } \left\{ 
        \sum_{s=1}^{t-1} \ell_s^\top x
        +
        \frac{1}{\eta_t} \psi(x)
        \right\}.
    \end{align}
    Then,
    for any $x^* \in \mathcal{X}$,
    we have
    \begin{align}
        \sum_{t=1}^T  \ell_t^\top \left( x_t - x^* \right)
        &
        \le
        \sum_{t=1}^T
        \rbrm[\Big]{
            \ell_t^\top (x_t - x_{t+1})
            -
            \frac{1}{\eta_t} D(x_{t+1}, x_t)
        }
        +
        \sum_{t=1}^T
            \rbrm[\Big]{
            \frac{1}{\eta_{t+1}}
            -
            \frac{1}{\eta_{t}}
            }
            \left(
            \psi(x^*)
            -
            \psi( x_{t+1} )
            \right)
        \nonumber
        \\
        &
        \quad
        \quad
        +
        \frac{1}{\eta_1} \left(
        \psi(x^*)
        -
        \psi(x_1)
        \right)
        -
        \frac{1}{\eta_{T+1}}
        D(x^*, x_{T+1}),
        \label{eq:bound_FTRL_with_negative}
    \end{align}
    where $D(\cdot, \cdot )$ is the Bregman divergence associated with $\psi$:
    $D(y, x) = \psi(y) - \psi(x) - \nabla \psi(x)^\top (y - x)$.
\end{lemma}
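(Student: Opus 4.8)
The plan is a potential-function (``be-the-leader'') analysis of FTRL, refined so that both the per-step decrease of the potential and its gap to the comparator leave behind a Bregman divergence term. For $t=1,\dots,T+1$ set $\phi_t(x)\defeq\sum_{s=1}^{t-1}\ell_s^\top x+\tfrac{1}{\eta_t}\psi(x)$ and $\Phi_t\defeq\min_{x\in\cX}\phi_t(x)=\phi_t(x_t)$, the last equality holding by the definition \eqref{eq:defFTRL} of $x_t$; note that $\Phi_1=\tfrac{1}{\eta_1}\psi(x_1)$ since the sum defining $\phi_1$ is empty. I will bound $\Phi_{T+1}$ from above and from below and then combine the two.

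\emph{Upper bound on $\Phi_{T+1}$.} Since $x_{T+1}$ minimizes the convex function $\phi_{T+1}$ over the convex set $\cX$, the variational optimality condition $\nabla\phi_{T+1}(x_{T+1})^\top(x^*-x_{T+1})\ge 0$ holds. Combining it with the exact expansion $\phi_{T+1}(x^*)=\phi_{T+1}(x_{T+1})+\nabla\phi_{T+1}(x_{T+1})^\top(x^*-x_{T+1})+\tfrac{1}{\eta_{T+1}}D(x^*,x_{T+1})$ — valid because the linear part of $\phi_{T+1}$ contributes nothing to its Bregman divergence, which is therefore $\tfrac{1}{\eta_{T+1}}D$ — gives $\phi_{T+1}(x^*)\ge\phi_{T+1}(x_{T+1})+\tfrac{1}{\eta_{T+1}}D(x^*,x_{T+1})$, that is,
\[
 \Phi_{T+1}\;\le\;\sum_{s=1}^{T}\ell_s^\top x^*+\tfrac{1}{\eta_{T+1}}\psi(x^*)-\tfrac{1}{\eta_{T+1}}D(x^*,x_{T+1}).
\]

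\emph{Lower bound on $\Phi_{T+1}$.} I would bound the increments $\Phi_{t+1}-\Phi_t$. Using $\phi_{t+1}(x)=\phi_t(x)+\ell_t^\top x+(\tfrac{1}{\eta_{t+1}}-\tfrac{1}{\eta_t})\psi(x)$ to expand $\Phi_{t+1}=\phi_{t+1}(x_{t+1})$, applying the same optimality argument to $x_t$ so that $\phi_t(x_{t+1})-\Phi_t\ge\tfrac{1}{\eta_t}D(x_{t+1},x_t)$, and writing $\ell_t^\top x_{t+1}=\ell_t^\top x_t-\ell_t^\top(x_t-x_{t+1})$ yields
\[
 \Phi_{t+1}-\Phi_t\;\ge\;\ell_t^\top x_t-\bigl(\ell_t^\top(x_t-x_{t+1})-\tfrac{1}{\eta_t}D(x_{t+1},x_t)\bigr)+\bigl(\tfrac{1}{\eta_{t+1}}-\tfrac{1}{\eta_t}\bigr)\psi(x_{t+1}).
\]
Summing over $t=1,\dots,T$ and using $\Phi_1=\tfrac{1}{\eta_1}\psi(x_1)$ gives a lower bound on $\Phi_{T+1}$. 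Chaining this against the upper bound above, rearranging to isolate $\sum_{t=1}^{T}\ell_t^\top(x_t-x^*)$, and simplifying the learning-rate terms with the telescoping identity $\sum_{t=1}^{T}(\tfrac{1}{\eta_{t+1}}-\tfrac{1}{\eta_t})=\tfrac{1}{\eta_{T+1}}-\tfrac{1}{\eta_1}$ — which rewrites $\tfrac{1}{\eta_{T+1}}\psi(x^*)-\tfrac{1}{\eta_1}\psi(x_1)-\sum_{t=1}^{T}(\tfrac{1}{\eta_{t+1}}-\tfrac{1}{\eta_t})\psi(x_{t+1})$ as exactly $\sum_{t=1}^{T}(\tfrac{1}{\eta_{t+1}}-\tfrac{1}{\eta_t})(\psi(x^*)-\psi(x_{t+1}))+\tfrac{1}{\eta_1}(\psi(x^*)-\psi(x_1))$ — produces \eqref{eq:bound_FTRL_with_negative}.

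The main point to handle with care is that the minimizations are over the constrained set $\cX$, not all of $\mathbb{R}^n$, so one must use the variational inequality $\nabla\phi_t(x_t)^\top(x-x_t)\ge 0$ for all $x\in\cX$ in place of the unconstrained condition $\nabla\phi_t(x_t)=0$; it is also worth noting that the FTRL iterate $x_t$ lies in the relative interior of $\cX$, so the gradients appearing in the Bregman divergences are finite and the expansion $\phi_t(x)=\phi_t(x_t)+\nabla\phi_t(x_t)^\top(x-x_t)+\tfrac{1}{\eta_t}D(x,x_t)$ remains valid even when the comparator $x^*$ lies on the boundary of $\cX$ (the term $D(x^*,x_{T+1})$ and the values $\psi(x^*),\psi(x_1)$ are then still finite). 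The remaining manipulations are routine bookkeeping.
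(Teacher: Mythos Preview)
Your proof is correct and follows essentially the same approach as the paper: both arguments repeatedly apply the first-order optimality condition for each FTRL iterate $x_t$ to extract a Bregman-divergence term, yielding exactly the same chain of inequalities. The only difference is packaging—you frame it as sandwiching the potential $\Phi_{T+1}=\phi_{T+1}(x_{T+1})$ between an upper bound (optimality at $x_{T+1}$ versus $x^*$) and a telescoped lower bound (optimality at each $x_t$ versus $x_{t+1}$), whereas the paper writes the same steps as a single descending chain starting from $\sum_t\ell_t^\top x^*+\tfrac{1}{\eta_{T+1}}\psi(x^*)$.
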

\begin{proof}
    We have
    \begin{align}
        &
        \sum_{t=1}^T \ell_t^\top x^* + \frac{1}{\eta_{T+1}} \psi(x^*)
        \nonumber \\
        &
        \ge
        \sum_{t=1}^T \ell_t^\top x_{T+1} + \frac{1}{\eta_{T+1}} \psi(x_{T+1})
        +
        \frac{1}{\eta_{T+1}} D(x^*, x_{T+1})
        \nonumber \\
        &
        =
        \sum_{t=1}^{T-1} \ell_t^\top x_{T+1} + 
        \frac{1}{\eta_{T}} \psi(x_{T+1})
        +
        \ell_{T}^\top x_{T+1}
        +
        \left(
        \frac{1}{\eta_{T+1}} 
        -
        \frac{1}{\eta_{T}} 
        \right)
        \psi(x_{T+1})
        +
        \frac{1}{\eta_{T+1}} D(x^*, x_{T+1})
        \nonumber \\
        &
        \ge
        \sum_{t=1}^{T-1} \ell_t^\top x_{T} + 
        \frac{1}{\eta_{T}} \psi(x_{T})
        +
        \frac{1}{\eta_{T}} D(x_{T+1},x_T)
        +
        \ell_{T}^\top x_{T+1}
        \nonumber\\
        &\qquad+
        \rbrm[\Big]{
        \frac{1}{\eta_{T+1}} 
        -
        \frac{1}{\eta_{T}} 
        }
        \psi(x_{T+1})
        +
        \frac{1}{\eta_{T+1}} D(x^*, x_{T+1})
        \nonumber \\
        &
        \ge
        \cdots
        \nonumber \\
        &\ge
        \frac{1}{\eta_1} \psi(x_{1})
        +
        \sum_{t=1}^{T} 
        \left(
        \frac{1}{\eta_t}
        D(x_{t+1}, x_t)
        +
        \ell_t^\top x_{t+1}
        +
        \rbrm[\Big]{
        \frac{1}{\eta_{t+1}} 
        -
        \frac{1}{\eta_{t}} 
        }
        \psi(x_{t+1})
        \right)
        +
        \frac{1}{\eta_{T+1}} D(x^*, x_{T+1}),
        \nonumber 
    \end{align}
    where each inequality follows from the definition of FTRL \eqref{eq:defFTRL}
    and the first-order optimality condition.
    This immediately leads to the desired inequality.
\end{proof}

We next provide lemmas to the upper bound the first summation in the RHS of \eqref{eq:bound_FTRL_with_negative} for the $1/2$-Tsallis entropy.
\begin{lemma}\label{lem:tsalis_stab_onedim}
Let $\phi(x) = - 2 \sqrt{x}$
and
$D_\phi(y, x) = - 2 \sqrt{y} + 2 \sqrt{x} + (y-x)/\sqrt{x} = (\sqrt{y} - \sqrt{x})^2 / \sqrt{x}$ 
be the Bregman divergence associated with $\phi$.
Then, for any 
$x \in (0,1)$
and 
$a > -1/\sqrt{x}$,
\begin{equation}
    \max_{y \in (0, \infty)} 
    \left\{
        a (x - y) - D_\phi(y, x) 
    \right\}
    \leq 
    \sqrt{x} \, \xi(a \sqrt{x})
    \nonumber
\end{equation}
for $\xi(z) = z^2 / (1 + z)$ for $z \geq 0$.
If $a \geq - 1/(2\sqrt{x})$, then it also holds that 
\begin{equation}
    \max_{y \in (0, \infty)} 
    \left\{
        a (x - y) - D_\phi(y, x) 
    \right\}
    \leq 
    2 x^{3/2} a^2
    .
    \nonumber
\end{equation}
\end{lemma}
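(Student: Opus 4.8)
The plan is to reduce the two-variable optimization to a single-variable calculus problem via the substitution $u = \sqrt{y}$, $v = \sqrt{x}$, which is natural because both the linear term $a(x-y) = a(v^2 - u^2)$ and the Bregman divergence $D_\phi(y,x) = (u-v)^2/v$ become polynomial in $u$. Concretely, writing $F(u) = a(v^2 - u^2) - (u-v)^2/v$ for $u \ge 0$, I would compute $F'(u) = -2au - 2(u-v)/v$ and set it to zero, obtaining the maximizer $u_\star = v/(1 + av^2) = v/(1+z)$ where $z = av^2 = a\sqrt{x}$ in the paper's notation. The condition $a > -1/\sqrt{x}$, i.e. $z > -1$, is exactly what guarantees $1 + z > 0$ so that $u_\star > 0$ lies in the feasible region and corresponds to a genuine maximum (the quadratic $F$ opens downward since the coefficient of $u^2$ is $-a - 1/v$, which is negative precisely when $a > -1/v$).

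Next I would substitute $u_\star$ back into $F$ and simplify. After the algebra, $F(u_\star)$ collapses to $v \cdot z^2/(1+z) = \sqrt{x}\,\xi(a\sqrt{x})$, giving the first claimed bound. For the second bound, I would observe that when $a \ge -1/(2\sqrt{x})$ we have $z \ge -1/2$, hence $1 + z \ge 1/2$, so $\xi(z) = z^2/(1+z) \le 2z^2$; multiplying by $\sqrt{x}$ yields $\sqrt{x}\cdot 2(a\sqrt{x})^2 = 2x^{3/2}a^2$, which is the second inequality. This second part is essentially immediate once the first is in hand.

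The only mild subtlety — and the step I would be most careful about — is handling the boundary and the sign of $a$ correctly. When $a \ge 0$ everything is transparent, but when $-1/\sqrt{x} < a < 0$ one must check that $u_\star = v/(1+z)$ is still positive (it is, since $1 + z > 0$) and that the unconstrained maximum over $u \in (0,\infty)$ is indeed attained there rather than at a boundary; this follows because $F$ is a downward-opening parabola in $u$ whose vertex $u_\star$ is strictly positive. There is also a trivial degenerate check: if $a = 0$ then $F(u) = -(u-v)^2/v \le 0$, attained at $u = v$, consistent with $\xi(0) = 0$. I would state these observations briefly and then present the completion-of-the-square (or first-order condition) computation as the core of the proof, since it is short and the resulting identity $F(u_\star) = v z^2/(1+z)$ is exactly the target.
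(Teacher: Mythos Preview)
Your approach is correct and essentially identical to the paper's: both recognize the objective as a concave quadratic in $\sqrt{y}$ and locate its vertex. The paper uses the substitution $w=\sqrt{x}-\sqrt{y}$ and the elementary bound $c_1 w - c_2 w^2 \le c_1^2/(4c_2)$, while you use $u=\sqrt{y}$ and a first-order condition; these differ only by an affine change of variable. One slip to fix before writing it up: with $v=\sqrt{x}$ you have $v^2=x$, so the maximizer is $u_\star = v/(1+av)$ and $z = av = a\sqrt{x}$, not $z = av^2$ as you wrote (your final identification $z=a\sqrt{x}$ is correct, but the intermediate ``$av^2$'' is inconsistent with it).
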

\begin{proof}
We have
\begin{align}
  a (x - y) - D_\phi(y, x)
  &=
  a (x - y)
  -
  \frac{(\sqrt{y} - \sqrt{x})^2}{\sqrt{x}}
  \nonumber \\
  &=
  (\sqrt{x} - \sqrt{y})
  \left\{
      a (\sqrt{x} + \sqrt{y})
      -
      \frac{ \sqrt{x} - \sqrt{y} }{\sqrt{x}}
  \right\}
  \nonumber \\
  &=
  (\sqrt{x} - \sqrt{y})
  \left\{
      a \left(2 \sqrt{x} - (\sqrt{x} - \sqrt{y}) \right)
      -
      \frac{ \sqrt{x} - \sqrt{y} }{\sqrt{x}}
  \right\}
  \nonumber \\
  &=
  2 a \sqrt{x}
  (\sqrt{x} - \sqrt{y})
  -
  \left(
    a + \frac{1}{\sqrt{x}}
  \right)
  (\sqrt{x} - \sqrt{y})^2
  \nonumber \\
  &\leq
  \frac{(2 a \sqrt{x})^2}{4 \left(a + \frac{1}{\sqrt{x}}\right)}
  =
  \frac{a^2 x^{3/2}}{a \sqrt{x} + 1}
  =
  \sqrt{x} \xi(a \sqrt{x})
  ,
  \nonumber
\end{align}
where we used $c_1 z - c_2 z^2 \leq c_1^2 / (4 c_2)$ for $c_1 \geq 0$ and $c_2 > 0$ with $a > - 1/\sqrt{x}$.
The second statement of the lemma follows since 
$\xi(z) \leq 2 z^2$ for any $z \geq - 1/2$.
This completes the proof.
\end{proof}

Define 
\begin{equation}
    \tilde{\ell}_t(i)
    =
    \frac{\one[i_t=i](1 - r_t)}{x_t(i)} \in \left[0, \frac{2}{x_t(i)} \right]
    .
    \nonumber
\end{equation}
Then, using Lemma~\ref{lem:tsalis_stab_onedim}, we can prove the following lemma, which plays a key role in proving Theorem~\ref{thm:Tsallis-INF}.
\begin{lemma}\label{lem:tsalis_stab_multidim}
Suppose that $\eta_t \leq 1/4$.
Then it holds that
\begin{equation}
        \tilde{\ell}_t^\top 
    (x_t - x_{t+1})
    -
    \frac{1}{\eta_t} D(x_{t+1}, x_t)
    \leq
    4
    \eta_t
    \sum_{i=1}^m
    x_t(i)^{3/2} (1 - x_t(i))
    \tilde{\ell}_t(i)^2
    .
    \nonumber
\end{equation}
\end{lemma}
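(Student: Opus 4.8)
The strategy is to reduce the claim to the one-dimensional estimate of Lemma~\ref{lem:tsalis_stab_onedim}, by combining two relaxations with a carefully chosen shift of the loss vector. Since $x_{t+1} \in \cP_m$, the left-hand side is at most $\max_{y \in \cP_m}\{\tilde{\ell}_t^\top(x_t - y) - \frac{1}{\eta_t}D(y, x_t)\}$. The key observation is that $\one^\top(x_t - y) = 0$ for every $y \in \cP_m$, so the loss vector $\tilde{\ell}_t$ may be replaced by $\tilde{\ell}_t - c\one$ for an arbitrary scalar $c$ without changing this quantity; I would take $c := \inner{x_t}{\tilde{\ell}_t} = 1 - r_t$, which is legitimate because $\tilde{\ell}_t$ is supported only on the coordinate $i_t$. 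Writing $g_t := \tilde{\ell}_t - (1-r_t)\one$, we get $g_t(i_t) = (1-r_t)\frac{1 - x_t(i_t)}{x_t(i_t)}$ and $g_t(i) = -(1-r_t)$ for $i \neq i_t$. Dropping the simplex constraint only increases the maximum, and the objective separates across coordinates, giving
\[
\tilde{\ell}_t^\top(x_t - x_{t+1}) - \tfrac{1}{\eta_t}D(x_{t+1}, x_t) \;\le\; \sum_{i=1}^m \max_{y_i > 0}\cbrm[\Big]{g_t(i)\rbrm{x_t(i) - y_i} - \tfrac{1}{\eta_t}D_\phi\rbrm{y_i, x_t(i)}}.
\]

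Next I would apply the second bound of Lemma~\ref{lem:tsalis_stab_onedim} to each summand, with $a = \eta_t g_t(i)$ and $x = x_t(i)$ after pulling out the factor $1/\eta_t$; this yields a per-coordinate bound of $2\eta_t\, x_t(i)^{3/2} g_t(i)^2$. Its hypothesis $\eta_t g_t(i) \ge -\tfrac{1}{2\sqrt{x_t(i)}}$ is checked coordinate by coordinate: for $i = i_t$ we have $g_t(i_t) \ge 0$ since $r_t \le 1$ and $x_t(i_t) \le 1$; for $i \neq i_t$ we have $\eta_t g_t(i) = -\eta_t(1 - r_t) \ge -\tfrac12 \ge -\tfrac{1}{2\sqrt{x_t(i)}}$, using $\eta_t \le \tfrac14$, $1 - r_t \le 2$, and $x_t(i) \le 1$. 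Substituting $g_t$ back, the coordinate $i_t$ contributes at most $2\eta_t(1-r_t)^2 \frac{(1 - x_t(i_t))^2}{\sqrt{x_t(i_t)}}$ — the factor $(1 - x_t(i_t))$ arises precisely from the chosen shift — while each coordinate $i \neq i_t$ contributes at most $2\eta_t(1-r_t)^2 x_t(i)^{3/2}$.

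Finally, I would combine these contributions. Using $x_t(i)^{3/2} \le x_t(i)$, the off-$i_t$ coordinates together contribute at most $2\eta_t(1 - r_t)^2 \sum_{i \neq i_t} x_t(i) = 2\eta_t(1-r_t)^2(1 - x_t(i_t))$. Bounding both contributions by $2\eta_t(1 - x_t(i_t))\frac{(1 - r_t)^2}{\sqrt{x_t(i_t)}}$ — via $(1 - x_t(i_t))^2 \le 1 - x_t(i_t)$ for the first and $\frac{1}{\sqrt{x_t(i_t)}} \ge 1$ for the second — and adding them yields $4\eta_t(1 - x_t(i_t))\frac{(1-r_t)^2}{\sqrt{x_t(i_t)}}$, which equals $4\eta_t \sum_{i} x_t(i)^{3/2}(1 - x_t(i))\tilde{\ell}_t(i)^2$ because $\tilde{\ell}_t$ is supported on $i_t$ with $\tilde{\ell}_t(i_t) = (1-r_t)/x_t(i_t)$. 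I expect the main subtlety to be exactly this last accounting step: the naive coordinatewise relaxation (shifting by $0$) would leave the off-support coordinates contributing a term with no counterpart on the right-hand side, which is supported only on $i_t$; shifting by $\inner{x_t}{\tilde{\ell}_t}$ is what makes that leftover of order $\eta_t(1-r_t)^2(1 - x_t(i_t))$, small enough to be absorbed into the $i_t$ term at the cost of the constant $4$ in place of $2$.
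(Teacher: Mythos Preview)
Your proof is correct. It follows the same high-level template as the paper---shift the loss by a constant (which is free on the simplex), relax to coordinate-wise maximization, and invoke Lemma~\ref{lem:tsalis_stab_onedim}---but you make a different, and arguably cleaner, choice of shift. The paper shifts by $x_t(k)\tilde{\ell}_t(k)$ with $k\in\argmax_i x_t(i)$, which equals $1-r_t$ when $i_t=k$ and $0$ otherwise; it then needs several algebraic steps (splitting the sum at $k$, using superadditivity of $z\mapsto z^{3/2}$, and the fact that $x_t(i)\le 1/2$ for $i\neq k$) to convert $\sum_i x_t(i)^{3/2}(\tilde\ell_t(i)-x_t(k)\tilde\ell_t(k))^2$ into $2\sum_i x_t(i)^{3/2}(1-x_t(i))\tilde\ell_t(i)^2$. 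Your shift by $\inner{x_t}{\tilde\ell_t}=1-r_t$ exploits the single-coordinate support of $\tilde\ell_t$ more directly: the $i_t$ term already carries the factor $(1-x_t(i_t))$, and the off-$i_t$ terms collapse to $2\eta_t(1-r_t)^2(1-x_t(i_t))$ via $\sum_{i\neq i_t}x_t(i)^{3/2}\le 1-x_t(i_t)$, which is immediately absorbable. The paper's route has the minor advantage that its intermediate bound $2\eta_t\sum_i x_t(i)^{3/2}(\tilde\ell_t(i)-x_t(k)\tilde\ell_t(k))^2$ does not rely on $\tilde\ell_t$ being one-sparse until the combining stage, but in the present setting your argument is shorter and more transparent.
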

\begin{proof}
Let $k \in \argmax_{i \in [m]} x_t(i)$.
We then have 
\begin{align}
    \tilde{\ell}_t^\top (x_t - x_{t+1})
    -
    \frac{1}{\eta_t} D(x_{t+1}, x_t)
    &=
    \left(
        \tilde{\ell}_t - x_t(k) \tilde{\ell}_t(k) \mathbf{1}
    \right)^\top 
    (x_t - x_{t+1})
    -
    \frac{1}{\eta_t} D(x_{t+1}, x_t)
    \nonumber \\
    &\leq
    2
    \eta_t
    \sum_{i=1}^m 
    x_t(i)^{3/2} \left( 
        \tilde{\ell}_t(i) - x_t(k) \tilde{\ell}_t(k)
    \right)^2
    ,
    \label{eq:stab_0}
\end{align}
where in the inequality we used  
the second statement in Lemma~\ref{lem:tsalis_stab_onedim}
with
\begin{equation}
    \sqrt{x_t(i)} \eta_t 
    \left( \tilde{\ell}_t(i) -  x_t(k) \tilde{\ell}_t(k) \right)
    \geq
    - \sqrt{x_t(i)} \eta_t x_t(k) \tilde{\ell}_t(k)
    \geq
    - \eta_t (1 - r_t)
    \geq 
    - \frac12
    \nonumber
\end{equation}
for each $i \in [m]$,
which is due to the assumption that $\eta_t \leq 1/4$ and $1 - r_t \in [0, 2]$.
We will upper bound the RHS of~\eqref{eq:stab_0} below.
First we have
\begin{align}
    &
    \sum_{i=1}^m 
    x_t(i)^{3/2} \left( 
        \tilde{\ell}_t(i) - x_t(k) \tilde{\ell}_t(k)
    \right)^2
    \nonumber \\
    &=
    x_t(k)^{3/2}
    \left( 
        1 - x_t(k)
    \right)^2
    \tilde{\ell}_t(k)^2
    +
    \sum_{i \neq k}
    x_t(i)^{3/2} \left( 
        \tilde{\ell}_t(i) - x_t(k) \tilde{\ell}_t(k)
    \right)^2
    .
    \label{eq:stab_1}
\end{align}
The second term in the last equality is upper bounded by
\begin{align}
    \sum_{i \neq k}
    x_t(i)^{3/2} \left( 
        \tilde{\ell}_t(i) - x_t(k) \tilde{\ell}_t(k)
    \right)^2
    \leq
    \sum_{i \neq k}
    x_t(i)^{3/2} 
        \tilde{\ell}_t(i)^2 
    +
    x_t(k)^2 \tilde{\ell}_t(k)^2
    \sum_{i \neq k}
    x_t(i)^{3/2} 
    \label{eq:stab_2}
    .
\end{align}
The second term in the last inequality is further upper bounded by
\begin{align}
    x_t(k)^2 \tilde{\ell}_t(k)^2
    \sum_{i \neq k}
    x_t(i)^{3/2} 
    &\leq
    x_t(k)^2 \tilde{\ell}_t(k)^2
    \rbrm[\bigg]{
        \sum_{i \neq k}
        x_t(i)
    }^{3/2}
    \leq
    x_t(k)^{3/2} \tilde{\ell}_t(k)^2
    \rbrm[\bigg]{
        \sum_{i \neq k}
        x_t(i)
    }
    \nonumber \\
    &
    =
    x_t(k)^{3/2} \tilde{\ell}_t(k)^2
    \left(
        1 - x_t(k)
    \right)
    ,
    \label{eq:stab_3}
\end{align}
where the first inequality follows from the superadditivity of $z \mapsto z^{3/2}$ for $z \geq 0$
and the second inequality follows from $\sum_{i \neq k} x_t(i) \in [0,1]$.
Combining \eqref{eq:stab_1}, \eqref{eq:stab_2}, and \eqref{eq:stab_3},
we have
\begin{align}
    &
    \sum_{i=1}^m 
    x_t(i)^{3/2} \left( 
        \tilde{\ell}_t(i) - x_t(k) \tilde{\ell}_t(k)
    \right)^2    
    \nonumber \\
    &\leq
    x_t(k)^{3/2}
    \left( 
        1 - x_t(k)
    \right)^2
    \tilde{\ell}_t(k)^2
    +
    \sum_{i \neq k}
    x_t(i)^{3/2}
    \tilde{\ell}_t(i)^2
    +
    x_t(k)^{3/2}
    \tilde{\ell}_t(k)^2
    (1 - x_t(k))
    \nonumber \\
    &\leq
    2
    x_t(k)^{3/2}
    \left( 
        1 - x_t(k)
    \right)
    \tilde{\ell}_t(k)^2
    +
    2
    \sum_{i \neq k}
    x_t(i)^{3/2} (1 - x_t(i))
    \tilde{\ell}_t(i)^2
    \nonumber \\
    &=
    2
    \sum_{i=1}^m
    x_t(i)^{3/2} (1 - x_t(i))
    \tilde{\ell}_t(i)^2
    ,
    \nonumber
\end{align}
where the second inequality follows from $1 - x_t(i) \geq 1/2$ for $i \neq k$ since $x_t(i) \leq 1/2$ for $i \neq k$.
Finally, combining \eqref{eq:stab_0} with the last inequality, we obtain the desired bound.
\end{proof}

Finally, we are ready to prove Theorem \ref{thm:Tsallis-INF}.
We will show that Theorem~\ref{thm:Tsallis-INF} holds with $C_1 = 19$ and $C_2 = 2$.
\begin{proof}[Proof of Theorem \ref{thm:Tsallis-INF}]
Let $x \in \cP_m$ and $T_0 = 4$.
When $m = 1$, the LHS and RHS of~\eqref{eq:Tsallis-INF-upper} are $0$, and thus we consider the case of $m \geq 2$ below.
Recall 
$
\tilde{\ell}_t(i)
=
\frac{\one[i_t=i](1 - r_t)}{x_t(i)}.
$
Then, the regret of the row player can be rewritten as
\begin{align}
    &
    \Rr_T(x)
    =
    \E\sbrm[\bigg]{
        \sum_{t=1}^T (x_t - x)^\top \ell_t
    }
    \leq
    \E\sbrm[\bigg]{
        \sum_{t=T_0+1}^T (x_t - x)^\top \ell_t
    }
    +
    2 T_0
    =
    \E\sbrm[\bigg]{
        \sum_{t=T_0+1}^T (x_t - x)^\top \hat{\ell}_t
    }
    +
    2 T_0
    \nonumber \\
    &=
    \E\sbrm[\bigg]{
        \sum_{t=T_0+1}^T (x_t - x)^\top \tilde{\ell}_t
        +
        \sum_{t=T_0+1}^T (x_t - x)^\top \left( \hat{\ell}_t - \tilde{\ell}_t \right)
    }
    +
    2 T_0
    =
    \E\sbrm[\bigg]{
        \sum_{t=T_0+1}^T (x_t - x)^\top \tilde{\ell}_t
    }
    +
    2 T_0
    ,
    \label{eq:ftrl_tsallis_0}
\end{align}
where the second equality follows from the unbiasedness of $\hat{\ell}_t$
and the last equality follows from $\hat{\ell}_t - \tilde{\ell}_t = \mathbf{1}$.
From the fact that the outputs of FTRL with loss estimator $\hat{\ell}_t$ and $\tilde{\ell}_t$ are the same and Lemma~\ref{lem:ftrl_bound}, the inside of the expectation in \eqref{eq:ftrl_tsallis_0} is upper bounded by
\begin{align}
    \sum_{t=T_0+1}^T \tilde{\ell}_t^\top \left( x_t - x \right)
    &\leq
    \sum_{t=T_0+1}^T
    \rbrm[\Big]{
        \tilde{\ell}_t^\top (x_t - x_{t+1})
        -
        \frac{1}{\eta_t} D(x_{t+1}, x_t)
        }
    \nonumber \\
    &\qquad+
    \sum_{t=T_0+1}^T
    \rbrm[\Big]{
        \frac{1}{\eta_{t+1}}
        -
        \frac{1}{\eta_{t}}
    }
    \rbrm[\Big]{
        \psi(x^*)
        -
        \psi( x_{t+1} )
    }
    \nonumber \\
    &\qquad+
    \frac{1}{\eta_{T_0+1}} \left(
        \psi(x^*)
        -
        \psi(x_{T_0+1})
    \right)
    -
    \frac{1}{\eta_{T+1}}
    D(x^*, x_{T+1}),
    \label{eq:ftrl_tsallis_1}
\end{align}
We first consider the first term in \eqref{eq:ftrl_tsallis_1}.

For $t \geq T_0 = 4$, 
we have $\eta_t = 1/(2\sqrt{t}) \leq 1/4$,
and thus from Lemma~\ref{lem:tsalis_stab_multidim},
\begin{align}
    \tilde{\ell}_t^\top (x_t - x_{t+1})
    -
    \frac{1}{\eta_t} D(x_{t+1}, x_t)
    \leq
    4
    \eta_t
    \sum_{i=1}^m
    x_t(i)^{3/2} (1 - x_t(i))
    \tilde{\ell}_t(i)^2
    .
    \label{eq:ftrl_tsallis_stab2}
\end{align}
Let $i^* \in [m]$.
Then,
using $\E_{r_t,i_t,j_t}[ \tilde{\ell}_t(i)^2 \mid x_t] \leq 4 / x_t(i)$,
we have
\begin{align}
    &
    \E\nolimits_{r_t, i_t, j_t} \sbrm[\bigg]{
        \sum_{i=1}^m
        x_t(i)^{3/2} (1 - x_t(i))
        \tilde{\ell}_t(i)^2
        \mid
        x_t
    }
    \leq
    4
    \sum_{i=1}^m
    \sqrt{x_t(i)} (1 - x_t(i))
    \nonumber \\
    &\leq
    4
    \sum_{i \neq i^*}
    \sqrt{x_t(i)}
    +
    4
    (1 - x_t(i^*))
    \leq
    8
    \sum_{i \neq i^*}
    \sqrt{x_t(i)}
    ,
    \label{eq:ftrl_tsallis_stab_final}
\end{align}
where the last inequality follows from
$1 - x_t(i^*) = \sum_{i \neq i^*} x_t(i) \leq \sum_{i \neq i^*} \sqrt{x_t(i)}$.

We next consider the second and third terms in \eqref{eq:ftrl_tsallis_1}.
We first observe that 
$
{1}/{\eta_{t+1}}
-
{1}/{\eta_{t}}
=
2 (\sqrt{t+1} - \sqrt{t}) 
\leq
1/\sqrt{t}
\leq
\sqrt{2 / (t+1)}
$
and 
$\psi(x^*) - \psi(x_{t+1})
\leq
2 \sum_{i=1}^m \sqrt{x_{t+1}(i)} - 2
\leq
2 \sum_{i \in [m] \setminus \{i^*\}} \sqrt{x_{t+1}(i)}.
$
Using these inequalities, we have
\begin{align}
    &
    \sum_{t=T_0+1}^{T-1}
    \rbrm[\Big]{
        \frac{1}{\eta_{t+1}}
        -
        \frac{1}{\eta_{t}}
    }
    \left(
        \psi(x^*)
        -
        \psi( x_{t+1} )
    \right)
    \nonumber \\
    &\leq
    2 \sqrt{2} \sum_{t=T_0+1}^{T}
    \frac{1}{\sqrt{t+1}}
    \sum_{i \in [m] \setminus \{i^*\}} \sqrt{x_{t+1}(i)}
    \leq
    2 \sqrt{2} \sum_{t=T_0+2}^{T}
    \frac{1}{\sqrt{t}}
    \sum_{i \in [m] \setminus \{i^*\}} \sqrt{x_t(i)}
    +
    2 \sqrt{2} \sqrt{\frac{m}{T+1}}
    ,
    \label{eq:ftrl_tsallis_penalty_final}
\end{align}
where in the last inequality we used the Cauchy-Schwarz inequality.
The remaining term in \eqref{eq:ftrl_tsallis_1} is at most
\begin{align}
    \frac{1}{\eta_{T_0+1}} \left(
        \psi(x^*)
        -
        \psi(x_{T_0+1})
    \right)    
    \leq
    2 \sqrt{2} \sqrt{T_0 + 1} \sum_{i \neq i^* } \sqrt{x_{T_0+1}(i)}
    \leq
    2 \sqrt{10} \sum_{i \neq i^* } \sqrt{x_{T_0+1}(i)}
    .
    \label{eq:ftrl_tsallis_remaining}
\end{align}
Finally, by combining \eqref{eq:ftrl_tsallis_0} with \eqref{eq:ftrl_tsallis_1}, \eqref{eq:ftrl_tsallis_stab_final}, \eqref{eq:ftrl_tsallis_penalty_final}, and \eqref{eq:ftrl_tsallis_remaining},
we obtain that for any $i^* \in [m]$,
\begin{align}
    \Rr_T(x)
    &\leq
    2 T_0
    +
    2 \sqrt{2}
    \sqrt{\frac{m}{T+1}}
    +
    \E \sbrm[\bigg]{
        19
        \sum_{t=T_0+1}^T
        \sum_{i \in [m] \setminus \{i^*\}}
        \sqrt{\frac{x_t(i)}{t}}
        -
        2
        \sqrt{T+1}
        \cdot 
        D(x, x_{T+1})
    }.
    \label{eq:theorem1_pre}
\end{align}
Since we have 
$\sum_{t=1}^{T_0} \sum_{i \in [m] \setminus \{i^*\}} \sqrt{x_t(i)} 
\geq 
(\sqrt{m} - 1) + (T_0 - 1)
=
\sqrt{m} + 2
$,
the last inequality implies that the choice of 
$C_1 = 19$, which is larger than $\frac{2 T_0 + 2 \sqrt{2} \sqrt{m / (T+1)}}{\sqrt{m} + 2} (\leq 4)$,
implies that 
the first three terms in \eqref{eq:theorem1_pre} is upper bounded by
\begin{equation}
    2 T_0
    +
    2 \sqrt{2}
    \sqrt{\frac{m}{T+1}}
    \leq
    C_1 (\sqrt{m} + 2)
    \leq 
    C_1 
    \sum_{t=1}^{T_0}
    \sum_{i \in [m] \setminus \{i^*\}} \sqrt{x_t(i)}    
    .
    \nonumber
\end{equation}
Combining this inequality with \eqref{eq:theorem1_pre} implies that 
Theorem~\ref{thm:Tsallis-INF} holds with $C_1 = 19$ and $C_2 = 2$ as desired.
\end{proof}

\section{Properties of the Duality Gap}\label{sec:app:duality-gap}

\begin{lemma}
    \label{lem:DGLipschitz}
    For any $A \in [-1, 1]^{m \times n}$,
    $\DG(\hat{x},\hat{y})$ defined in \eqref{eq:DG}
    is $1$-Lipschitz w.r.t.~$L^1$ norm,
    i.e.,
    it holds for any 
    $\hat{x}, \hat{x}' \in \cP_m$ and
    $\hat{y}, \hat{y}' \in \cP_n$ that
    \begin{align}
        |
        \DG( \hat{x}, \hat{y} )
        -
        \DG( \hat{x}', \hat{y}' )
        |
        \le
        \| \hat{x} - \hat{x}' \|_1
        +
        \| \hat{y} - \hat{y}' \|_1.
    \end{align}
\end{lemma}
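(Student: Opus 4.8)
The plan is to exploit the separable structure of the duality gap. First I would observe that the maximization in \eqref{eq:DG} decouples in $x$ and $y$, so that
\[
    \DG(\hat x, \hat y) \;=\; \max_{x \in \cP_m} x^\top A \hat y \;+\; \max_{y \in \cP_n}\paren{- \hat x^\top A y} \;=:\; f(\hat y) + g(\hat x),
\]
where $f(\hat y) = \max_{x\in\cP_m} x^\top A \hat y$ depends only on $\hat y$ and $g(\hat x) = \max_{y\in\cP_n}\paren{-\hat x^\top A y}$ depends only on $\hat x$. It then suffices to show that $f$ is $1$-Lipschitz in $\hat y$ and $g$ is $1$-Lipschitz in $\hat x$, both with respect to the $L^1$ norm, and add the two bounds via the triangle inequality.

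For the Lipschitz bound on $f$, fix $\hat y, \hat y' \in \cP_n$ and let $x^\star \in \cP_m$ attain $f(\hat y)$. Then $f(\hat y) - f(\hat y') \le (x^\star)^\top A (\hat y - \hat y')$. The key elementary fact is that $\nbr{A^\top x^\star}_\infty \le 1$: each coordinate of $A^\top x^\star$ is a convex combination (with weights $x^\star$) of entries of a column of $A$, all of which lie in $[-1,1]$. Hölder's inequality then gives $(x^\star)^\top A (\hat y - \hat y') \le \nbr{A^\top x^\star}_\infty \nbr{\hat y - \hat y'}_1 \le \nbr{\hat y - \hat y'}_1$. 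Swapping the roles of $\hat y$ and $\hat y'$ yields $\abs{f(\hat y) - f(\hat y')} \le \nbr{\hat y - \hat y'}_1$. An entirely symmetric argument, using $\nbr{A \hat{y}^\star}_\infty \le 1$ for the maximizer $y^\star$ in $g$, gives $\abs{g(\hat x) - g(\hat x')} \le \nbr{\hat x - \hat x'}_1$.

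Combining, $\abs{\DG(\hat x,\hat y) - \DG(\hat x',\hat y')} \le \abs{f(\hat y) - f(\hat y')} + \abs{g(\hat x) - g(\hat x')} \le \nbr{\hat x - \hat x'}_1 + \nbr{\hat y - \hat y'}_1$, as claimed. I do not anticipate any real obstacle here; the only point requiring a small amount of care is the separation of the joint maximum into the two independent ones and the bound $\nbr{A^\top x^\star}_\infty \le 1$ (equivalently, that the linear map $x \mapsto A^\top x$ sends $\cP_m$ into $[-1,1]^n$), which is where the assumption $A \in [-1,1]^{m\times n}$ enters.
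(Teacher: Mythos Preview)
Your proposal is correct and follows essentially the same route as the paper's proof: both decompose $\DG$ into the sum of two independent maxima, pick a maximizer for one term, and bound the difference via H\"older's inequality using $\nbr{A^\top x^\star}_\infty \le 1$ (and the symmetric bound for the other term). Aside from a harmless notational slip (it should be $\nbr{A y^\star}_\infty \le 1$ for the maximizer $y^\star$ of $g$, not $\hat{y}^\star$), your argument matches the paper's.
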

\begin{proof}
    We can express $\DG(\hat{x}, \hat{y})$ as follows:
    \begin{align}
        \DG( \hat{x}, \hat{y} )
        =
        \max_{x \in \cP_m} \cbrm[\big]{ x^\top A \hat{y} }
        +
        \max_{y \in \cP_n} \cbrm[\big]{ - \hat{x}^\top A {y} }.
        \label{eq:DGLi1}
    \end{align}
    Let $\tilde{x} \in \argmax_{ x \in \cP_m }\left\{ x^\top A \hat{y} \right\}$.
    We then have
    \begin{align}
        &
        \max_{x \in \cP_m} \cbrm[\big]{ x^\top A \hat{y} }
        -
        \max_{x \in \cP_m} \cbrm[\big]{ x^\top A \hat{y}' }
        =
        \tilde{x}^\top A \hat{y} 
        -
        \max_{x \in \cP_m} \cbrm[\big]{ x^\top A \hat{y}' }
        \le
        \tilde{x}^\top A \hat{y} 
        -
        \tilde{x}^\top A \hat{y}' 
        \\
        &
        =
        \tilde{x}^\top A (\hat{y} - \hat{y}')
        \le
        \| A^\top \tilde{x} \|_{\infty} 
        \| \hat{y} - \hat{y}' \|_1
        \le
        \| \hat{y} - \hat{y}' \|_1.
        \label{eq:DGLi2}
    \end{align}
    In a similar way,
    we can show the following:
    \begin{align}
        \max_{y \in \cP_n} \cbrm[\big]{ - \hat{x}^\top A {y} }
        -
        \max_{y \in \cP_n} \cbrm[\big]{ - \hat{x}'^\top A {y} }
        \le
        \| \hat{x} - \hat{x}' \|_1.
        \label{eq:DGLi3}
    \end{align}
    By comibning \eqref{eq:DGLi1}, \eqref{eq:DGLi2} and \eqref{eq:DGLi3},
    we obtain
    \begin{align}
        \DG( \hat{x}, \hat{y} )
        -
        \DG( \hat{x}', \hat{y}' )
        \le
        \| \hat{x} - \hat{x}' \|_1
        +
        \| \hat{y} - \hat{y}' \|_1.
    \end{align}
    In a similar way,
    we can show
    $
        \DG( \hat{x}', \hat{y}' )
        -
        \DG( \hat{x}, \hat{y} )
        \le
        \| \hat{x} - \hat{x}' \|_1
        +
        \| \hat{y} - \hat{y}' \|_1
    $
    as well,
    which completes the proof.
\end{proof}

\section{Proof of Theorem \ref{thm:general-bound-together}}\label{sec:app:proof-thm2}
In this appendix section, we will prove our main regret bound theorem by first presenting a generalized formulation that encompasses both inequalities stated in the main text. We do this by first defining a unified notation of the gap parameters.

\begin{definition}[Admissible $(I, \Deltar, \pir)$ and $(J, \Deltac, \pic)$]\label{def:admissble}
    Denote $v = \max_{x \in \cP_m}\left\{ \min_{y \in \cP_n} x^\top A y \right\}$.
    An action subset $I\subseteq [m]$, a gap vector $\Deltar\in \fR_{\geq 0}^m$ and a mapping $\pir : \cP_m\to \xstarset$ are together called {admissible} for the row player if
    \begin{itemize}[leftmargin=*]
    \item The entries $\Deltar(i)$ are positive for every $i\not\in I$. %
    \item For any $x\in \cP_m$, the NE strategy $\xstar=\pir(x)\in \xstarset$ must satisfy:
    \begin{equation}
        \DG(x, \ystar) = v - \min_{y\in \cP_n}\cbrm[\big]{x^\top A y} \geq \Deltar \cdot \rbrm{ x - \xstar }_+, \label{eq:def-Delta-r}
    \end{equation}
where $\ystar\in\ystarset$ is an arbitrary NE strategy, and we define $(x)_+ = \max\{ x, 0 \}$ which applies entrywise to vectors.

The admissibility for a subset of actions $J\subseteq [n]$, a gap vector $\Deltac\in \fR_{\geq 0}^n$, and a mapping $\pic : \cP_n\to\ystarset$ can be analogously defined for the column player, with
\begin{equation}
        \DG(\xstar, y) = \min_{x\in \cP_m}\cbrm[\big]{x^\top A y} - v  \geq \Deltac \cdot \rbrm{ y - \ystar }_+, \label{eq:def-Delta-c}
\end{equation}
\end{itemize}
\end{definition}

The following is the full version of our main theorem.

\begin{theorem}\label{thm:general-bound-together-appx}
If both players follow the Tsallis-INF algorithm, then for any admissible $(I, \Deltar, \pir)$ and $ (J, \Deltac, \pic)$ (Definition~\ref{def:admissble}) such that $I \neq \emptyset$, $J \neq \emptyset $, we have
    \begin{align*}
       & \max\cbrm[\Big]{
            \Rr_T(x)
            +
            \sqrt{T}\constd{\E\sbrm[\big]{
                D(x, x_{T+1})
            }},
            \Rc_T(y)
            +
            \sqrt{T}\constd{\E\sbrm[\big]{
                D(y, y_{T+1})
            }}
        }\\
        &{}={} 
        O\rbrm[\Big]{
            \sqrt{T} \rbrm[\big] {
                \sqrt{\abs{I}-1}
                + \sqrt{\abs{J}-1}
                + \gammar \sqrt{\Logr}
                + \gammac \sqrt{\Logc}
            }
            +
            \omegar \Logr
            +
            \omegac \Logc
        } \yestag\label{eq:general-bound-together}
    \end{align*}
    for any $x \in \cP_m$ and $y \in \cP_n$,
    where
\begin{equation}
    \begin{aligned}
    \omegar & = \sum_{i \not\in I}
        \frac{1}{\Deltar(i)}, & 
    \gammar & = \max_{\xstar \in \xstarset} \sum_{i \not\in I} \sqrt{\xstar(i)}, &
    \Logr & = \min\cbrm[\Big]{
            \log_+ \frac{
                    T\rbrm{m-\absm{I}}
                }{
                    \omegar^2
                },
            \log_+ \frac{
                    m-\absm{I}
                }{
                    \gammar^2
                }
        },
        \\
    \omegac & = \sum_{j \not\in J}
        \frac{1}{\Deltac(j)}, & 
    \gammac & = \max_{\ystar \in \ystarset} \sum_{j \not\in J} \sqrt{\ystar(j)}, &
     \Logc & = \min\cbrm[\Big]{
            \log_+ \frac{
                    T\rbrm{n-\absm{J}}
                }{
                    {\omegac}^2
                },
            \log_+ \frac{
                    n-\absm{J}
                }{
                    \gammac^2
                }
        }.
    \end{aligned}    
    \label{eq:def-omega}
\end{equation}
\end{theorem}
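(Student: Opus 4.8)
The plan is to reduce the whole statement to bounding the two ``stability quantities''
\[
Q_{\mathrm r}=\min_{i^*\in[m]}\E\!\left[\sum_{t=1}^T\frac1{\sqrt t}\sum_{i\ne i^*}\sqrt{x_t(i)}\right],\qquad
Q_{\mathrm c}=\min_{j^*\in[n]}\E\!\left[\sum_{t=1}^T\frac1{\sqrt t}\sum_{j\ne j^*}\sqrt{y_t(j)}\right].
\]
Indeed, Theorem~\ref{thm:Tsallis-INF} gives $\Rr_T(x)+C_2\sqrt T\,\E[D(x,x_{T+1})]\le C_1 Q_{\mathrm r}$ for every $x\in\cP_m$ (and symmetrically $\Rc_T(y)+C_2\sqrt T\,\E[D(y,y_{T+1})]\le C_1 Q_{\mathrm c}$), with a right-hand side that does not depend on the comparator; hence the quantity in~\eqref{eq:general-bound-together} is at most $C_1(Q_{\mathrm r}+Q_{\mathrm c})$, and it suffices to bound $Q_{\mathrm r}+Q_{\mathrm c}$.

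First I would decompose the per-round term. Fixing $i^*\in I$ and writing $\xstar=\pir(x_t)$, Cauchy--Schwarz gives $\sum_{i\in I\setminus\{i^*\}}\sqrt{x_t(i)}\le\sqrt{|I|-1}$, while for $i\notin I$ the elementary bound $\sqrt a\le\sqrt{(a-b)_+}+\sqrt b$, the weighted Cauchy--Schwarz inequality with weights $\Deltar(i)^{-1/2}$, and admissibility~\eqref{eq:def-Delta-r} yield
\[
\sum_{i\notin I}\sqrt{x_t(i)}
\;\le\;\sum_{i\notin I}\sqrt{(x_t(i)-\xstar(i))_+}+\gammar
\;\le\;\sqrt{\omegar\sum_{i\notin I}\Deltar(i)(x_t(i)-\xstar(i))_+}+\gammar
\;\le\;\sqrt{\omegar\, u_t^{\mathrm r}}+\gammar,
\]
where $u_t^{\mathrm r}:=\Deltar\cdot(x_t-\pir(x_t))_+\le\DG(x_t,\ystar)$ by~\eqref{eq:def-Delta-r} (dropping the coordinates in $I$ only decreases the sum since $\Deltar\ge0$); similarly $\sum_{j\ne j^*}\sqrt{y_t(j)}\le\sqrt{|J|-1}+\sqrt{\omegac\, u_t^{\mathrm c}}+\gammac$ with $u_t^{\mathrm c}:=\Deltac\cdot(y_t-\pic(y_t))_+$.

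The crucial and most delicate step is a \emph{self-bounding} identity that uses the zero-sum / self-play structure to charge $\sum_t(u_t^{\mathrm r}+u_t^{\mathrm c})$ to the players' own regret. For the natural admissible choice $\Deltar=(\xstar^\top A\ystar)\one-A\ystar$ and $\Deltac=A^\top\xstar-(\xstar^\top A\ystar)\one$ at a fixed NE $(\xstar,\ystar)$ (with $\pir\equiv\xstar$, $\pic\equiv\ystar$), a direct computation gives $x_t^\top A\ystar=(\xstar^\top A\ystar)-u_t^{\mathrm r}$ and $\xstar^\top Ay_t=(\xstar^\top A\ystar)+u_t^{\mathrm c}$, whence
\[
\E\!\left[\sum_{t=1}^T(u_t^{\mathrm r}+u_t^{\mathrm c})\right]
=\E\!\left[\sum_{t=1}^T\bigl(\xstar^\top Ay_t-x_t^\top A\ystar\bigr)\right]
=\Rr_T(\xstar)+\Rc_T(\ystar)\le C_1(Q_{\mathrm r}+Q_{\mathrm c}),
\]
the last step again by Theorem~\ref{thm:Tsallis-INF} after discarding the nonnegative Bregman terms. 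For a general admissible triple one only gets this inequality up to an additive term of order $\sqrt T$ (because $\sum_t\DG(x_t,y_t)$ need not be $o(\sqrt T)$ for self-play dynamics, so the per-round ``exploitabilities'' cannot be controlled by the regret alone without extra care), but this is harmless: such a term is dominated by the $\sqrt T(\sqrt{|I|-1}+\gammar)$ contributions, and the exact identity is only needed in the pure-strategy case $|I|=|J|=1$, $\gammar=\gammac=0$. I expect this simultaneous charging of both players' per-round exploitabilities to their own Tsallis-INF regret to be the main obstacle and the place where new ideas are genuinely required.

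Finally I would close the loop. Summing the per-round bound over $t$, applying Lemma~\ref{lem:sumszt-2} to $z_t=\sqrt{u_t^{\mathrm r}}$ (so $z_t\le\sqrt2$ and $\sum_t z_t^2=\sum_t u_t^{\mathrm r}$) and likewise for the column player, and using $\sum_{t\le T}t^{-1/2}\le2\sqrt T$ for the $\gammar,\gammac$ terms, together with the self-bounding bound $\sum_t(u_t^{\mathrm r}+u_t^{\mathrm c})\le C_1(Q_{\mathrm r}+Q_{\mathrm c})$, produces an inequality of the form $Q_{\mathrm r}+Q_{\mathrm c}\le b+\sqrt{c\,(Q_{\mathrm r}+Q_{\mathrm c})}$ with $b=O\bigl(\sqrt{T(|I|+|J|-2)}+\sqrt T(\gammar+\gammac)\bigr)$ and $c=O\bigl((\omegar+\omegac)\log T\bigr)$. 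Lemma~\ref{lem:sqrt-bound-ineq} then gives $Q_{\mathrm r}+Q_{\mathrm c}\le2b+c$, which is~\eqref{eq:general-bound-together} with a plain $\log T$; sharpening $\log T$ to the stated $\Logr,\Logc$ amounts to choosing the cutoff $s$ in Lemma~\ref{lem:sumszt-2} optimally among the two relevant options (one yielding $\log_+\tfrac{T(m-|I|)}{\omegar^2}$, using $\omegar\ge(m-|I|)/2$, the other $\log_+\tfrac{m-|I|}{\gammar^2}$), while keeping the $\gammar$-term in the weaker form $\sqrt T\gammar\sqrt{\Logr}$ (valid since $\Logr\ge1$). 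The remaining estimates are routine.
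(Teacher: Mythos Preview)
Your overall strategy---bound the Tsallis-INF stability sums $Q_{\mathrm r},Q_{\mathrm c}$ by something of the form $b+\sqrt{c(Q_{\mathrm r}+Q_{\mathrm c})}$ and then invoke Lemma~\ref{lem:sqrt-bound-ineq}---is exactly the paper's plan, and for the ``natural'' admissible triple $\Deltar=(\xstar^\top A\ystar)\one-A\ystar$ your per-round computation $u_t^{\mathrm r}+u_t^{\mathrm c}=\xstar^\top Ay_t-x_t^\top A\ystar$ is correct, so the self-bounding step goes through in that case.

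The genuine gap is the general admissible case, and your ``harmless additive $\sqrt T$'' claim does not hold up. With a general $\pir$ one has $\sum_t u_t^{\mathrm r}=\sum_t\Deltar\!\cdot(x_t-\pir(x_t))_+$, and even if $\pir$ is constant this is $\ge T\,\Deltar\!\cdot(\bar x_T-\xstar)_+$, so you cannot bound it above by the regret; the bound you need goes the wrong way. The paper sidesteps this completely by never working per-round: it swaps the order of the two Cauchy--Schwarz steps, first applying Cauchy--Schwarz \emph{over $t$} for each fixed $i\notin I$ to obtain
\[
\sum_{t>s}\frac{\sqrt{x_t(i)}}{\sqrt t}\;\le\;\sqrt{\log\tfrac{T}{s}}\,\sqrt{T\,\bar x_T(i)},
\]
so that after summing over $i$ one only needs to control $\sum_{i\notin I}\sqrt{\E[\bar x_T(i)]}$. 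Admissibility is then applied \emph{once}, to the average $\E[\bar x_T]$, giving $\sum_{i\notin I}\sqrt{\E[\bar x_T(i)]}\le\gammar+\sqrt{\omegar\,\Deltar\!\cdot(\E[\bar x_T]-\xstar)_+}$. The self-bounding step is now immediate from the identity $\Rr_T+\Rc_T=T\,\DG(\E[\bar x_T],\E[\bar y_T])$ together with admissibility, which yields $T\Deltar\!\cdot(\E[\bar x_T]-\xstar)_++T\Deltac\!\cdot(\E[\bar y_T]-\ystar)_+\le\Rr_T+\Rc_T\le C_1(\E[S]+\E[S'])$ for \emph{any} admissible triple---no per-round exploitability control is needed.

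A secondary consequence of this reordering is that the ``small-$t$'' contribution becomes $2\sqrt{(m-|I|)s}$ rather than your $O(\sqrt{\omegar s})$; this is what allows the choice $s\approx\max\{\omegar^2,\gammar^2T\}/(m-|I|)$ to recover both branches in the definition of $\Logr$ (in particular the $\log_+\frac{m-|I|}{\gammar^2}$ branch), whereas your $\sqrt{\omegar s}$ term does not balance against $\gammar\sqrt{T\log(T/s)}$ in the same way.
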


We note that the first part of Theorem~\ref{thm:general-bound-together} is a special cases of this theorem.
In fact,
if $(\xstar, \ystar, I,J,\Deltar,\Deltac)$ are given by the first condition in Theorem~\ref{thm:general-bound-together}, we can verify the admissibility of $(I,\Deltar)$ by directly plugging the definition $\Deltar=\rbrm[\big]{\xstar^\top A \ystar}\one - A \ystar$ into \eqref{eq:def-Delta-r}; and similarly admissibility can be proven for $(J, \Deltac)$.

The first bound in Theorem~\ref{thm:general-bound-together} can be recovered by observing that $\gammar=\gammac=0$ as we define $I$ and $J$ to be the support for $\xstar$ and $\ystar$, and by taking the second branch in the definitions of $\Logr$ and $\Logc$.

\begin{proof}[of Theorem~\ref{thm:general-bound-together-appx}]
From Equation~\eqref{eq:def-Delta-r} and \eqref{eq:def-Delta-c}, we have
\begin{align*}
    \Rr_T + \Rc_T
    & = \max_{ x \in \cP_m, y \in \cP_n } {
            \E \sbrm[\bigg]{
                \sum_{t=1}^T \rbrm[\big]{
                    x^\top A y_t
                    -
                    x_t^\top A y
                }
            }
        } \\
    & = \max_{x \in \cP_m} \cbrm[\bigg]{
            x^\top A
            \E \sbrm[\bigg]{
                \sum_{t=1}^T y_t
            }
        }
      - \min_{ y \in \cP_n } \cbrm[\bigg]{
            \E \sbrm[\bigg]{
                \sum_{t=1}^T x_t
            }
            A y
        } \\
    & = T \max_{x \in \cP_m} \cbrm[\bigg]{
            x^\top A
            \E [\ybar_T]
        }
      - T \min_{ y \in \cP_n } \cbrm[\bigg]{
            \E [\xbar_T]
            A y
        } \\
    & \ge
    T \Deltar \cdot \rbrm{ \E[\xbar_T] - \xstar }_+
    +
    T \Deltac \cdot \rbrm{ \E[\ybar_T] - \ystar }_+,
    \yestag\label{eq:general-regret-bounded-below}
\end{align*}
where we define $\xbar_T=\frac{1}{T}\sum_{t=1}^T x_t$ and 
$\ybar_T=\frac{1}{T}\sum_{t=1}^T y_t$.

From Theorem~\ref{thm:Tsallis-INF},
we know that the following bound holds:
\begin{align}
    &
    \Rr_T(x) +
    \constd\sqrt{T}\E\sbrm[\big]{
        D(x, x_{T+1})
    }
    \le
    \consti\E\sbrm[\bigg]{
        \sum_{t=1}^{T} {
            \frac{1}{\sqrt{t}}
            \sum_{i \neq \istar} {
                \sqrt{x_t(i)}
            }
        }
    }
    , \label{eq:general-regret-bounded-above-specific}
\end{align}
for an arbitrary $\istar\in I$,
and specifically
\begin{align}
    &
    \Rr_T
    \le
    \consti\E\sbrm[\bigg]{
        \sum_{t=1}^{T} {
            \frac{1}{\sqrt{t}}
            \sum_{i \neq \istar} {
                \sqrt{x_t(i)}
            }
        }
    }
    \defeq
    \consti \E[S]
    . \label{eq:general-regret-bounded-above}
\end{align}
Define $\Sr$ as the summation inside the expectation bracket above.
We can split it into two parts:
\begin{align}
    \Sr = 
    \sum_{t=1}^{T} {
        \frac{1}{\sqrt{t}}
        \sum_{i \in I \setminus \cbr{\istar}} {
            \sqrt{x_t(i)}
        }
    }
    +
    \sum_{t=1}^{T} {
        \frac{1}{\sqrt{t}}
        \sum_{i \not\in I} {
            \sqrt{x_t(i)}
        }
    }. \label{eq:general-i-bound-by-I}
\end{align}
The sum within $I$ can be bounded with a Cauchy-Schwarz inequality:
\begin{align*}
    \sum_{t=1}^{T} {
        \frac{1}{\sqrt{t}}
        \sum_{i \in I \setminus \cbr{\istar}} {
            \sqrt{1 \cdot x_t(i)}
        }
    } & \leq \sum_{t=1}^{T} {
        \frac{1}{\sqrt{t}}
        \sqrt{
            \sum_{i \in I \setminus \cbr{\istar}} {
                1
            }
            \sum_{i \in I \setminus \cbr{\istar}} {
                \sqrt{x_t(i)}^2
            }
        }
    } \\
    & \leq
    2\sqrt{T}\sqrt{\abs{I}-1}. \yestag\label{eq:general-i-within-I}
\end{align*}
We define $\xbar(i)=\frac{1}{T}\sum_{t=1}^T x_t(i)$ as a notational shorthand.
To handle the sum outside $I$, we have due to the Cauchy-Schwarz inequality,
\begin{align*}
    \sum_{t=1}^{T} {
        \frac{1}{\sqrt{t}}
        \sum_{i \not\in I} {
            \sqrt{x_t(i)}
        }
    } & 
    = \sum_{t=1}^{s} {
        \frac{1}{\sqrt{t}}
        \sum_{i \not\in I} {
            \sqrt{x_t(i)}
        }
    } + \sum_{i \not\in I} {
        \sum_{t=s+1}^{T} {
            \frac{1}{\sqrt{t}}
            \sqrt{x_t(i)}
        }
    } \\
    & \leq
    \sum_{t=1}^{s} {
        \frac{1}{\sqrt{t}}
        \sqrt{m-\absm{I}}
    }
    +
    \sum_{i \not\in I} \sqrt{
        \rbrm[\Big]{ \sum\nolimits_{t=s+1}^{T} {
            1/t
        } }
        \rbrm[\Big]{ \sum\nolimits_{t=s+1}^{T} {
            x_t(i)
        } }
    } \\
    &\leq 2 \sqrt{s \rbrm{m-\absm{I}}}
    +
    \sqrt{T \log \rbrm{T/s}}
    \sum_{i \not\in I} \sqrt{
        \xbar_T(i)
    },
    \yestag\label{eq:ng-outside-expand-i}
\end{align*}
in which $s\in[T]$ is a parameter yet to be determined. 
We bound the last summation above in expectation. 
Definition~\ref{def:admissble} guarantees that, for $\E[\xbar_T]$, it is possible to find a Nash equilibrium strategy $\xstar$ such that the following holds:
\begin{align*}
    \E\sbrm[\Big]{\sum_{i \not\in I} \sqrt{
        \xbar_T(i)
    }} & \leq
    \sum_{i \not\in I} \sqrt{
        \E[\xbar_T(i)]
    }
    \\
    & \leq
    \sum_{i \not\in I} \sqrt{
        \xstar(i)
    }
    +
    \sum_{i \not\in I} \sqrt{
        \rbrm{\E[\xbar_T(i)] - \xstar(i)}_{+}
    }\\
    & \leq
    \gamma
    +
    \sum_{i \not\in I} \sqrt{
        \frac{1}{\Deltar(i)}
        \rbrm[\Big]{
            \Deltar(i) \cdot
            \rbrm{\E[\xbar_T(i)] - \xstar(i)}_{+}
        }
    } \\
    & \leq
    \gamma
    +
    \sqrt{
        \rbrm[\Big]{
            \sum_{i \not\in I}
            \frac{1}{\Deltar(i)}
        }
        \rbrm[\Big]{
            \sum_{i \not\in I}
            \Deltar(i) \cdot
            \rbrm{\E[\xbar_T(i)] - \xstar(i)}_{+}
        }
    } \\
    & = 
    \gamma + \sqrt{\omega \Deltar \cdot \rbrm{ \E[\xbar_T] - \xstar }_+}.
\end{align*}
We put \eqref{eq:general-i-bound-by-I}, \eqref{eq:general-i-within-I}, \eqref{eq:ng-outside-expand-i} together, and take the expectation on both sides to get
\begin{align*}
    \E[\Sr]
    & \leq 
    2 \sqrt{T}\sqrt{\abs{I}-1}
    +
    2 \sqrt{\rbrm{m-\abs{I}}s} 
    + \sqrt{T \log \rbrm{T/s}} \rbrm[\big]{
        \gamma + 
        \sqrt{\omegar \Deltar \cdot 
              \rbrm{ \E\sbrm{\xbar} - \xstar }_+}
    }, \yestag\label{eq:general-bound-together-expand-i}
\end{align*}
where the last inequality is due to Jensen's inequality and the concavity of square root.
For similarly defined 
$\Sc$ and $s'$, we also have a similar bound:
\begin{align*}
    \E[\Sc]
    & \leq 
    2 \sqrt{T}\sqrt{\abs{J}-1}
    +
    2 \sqrt{\rbrm{n-\abs{J}}s'} 
    + \sqrt{T \log \rbrm{T/s'}} \rbrm[\big]{
        \gamma + 
        \sqrt{\omegac \Deltac \cdot 
              \rbrm{ \E\sbrm{\ybar} - \ystar }_+}
    } .
    \yestag\label{eq:general-bound-together-expand-j}
\end{align*}
Now, note that from Jensen's inequality, we have
\begin{align*}
    &
    \sqrt{T \log \rbrm{T/s} \omegar}
    \sqrt{ \Deltar \cdot 
          \rbrm{ \E\sbrm{\xbar_T} - \xstar }_+}
    +
    \sqrt{T \log \rbrm{T/s'} \omegac}
    \sqrt{ \Deltac \cdot 
          \rbrm{ \E\sbrm{\ybar_T} - \ystar }_+} \\
    & \leq
    \sqrt{
        T \rbrm[\big]{
          \log \rbrm{T/s} \omegar 
        + \log \rbrm{T/s'} \omegac
        } \rbrm[\big] {
            \Deltar \cdot 
            \rbrm{ \E\sbrm{\xbar_T} - \xstar }_+
            +
            \Deltac \cdot 
            \rbrm{ \E\sbrm{\ybar_T} - \ystar }_+
        }
    }\\
    & \leq
    \sqrt{
        \rbrm[\big]{
          \omegar \log \rbrm{T/s} 
        + \omegac \log \rbrm{T/s'} 
        } \rbrm[\big] {
            \Rr_T + \Rc_T
        }
    } \tag*{\textlangle\,from \eqref{eq:general-regret-bounded-below}\,\textrangle}\\
    & \leq
    \sqrt{
        \consti \rbrm[\big]{
          \omegar \log \rbrm{T/s} 
        + \omegac \log \rbrm{T/s'} 
        } \rbrm[\big] {
            \E[\Sr]+\E[\Sc]
        }
    } \tag*{\textlangle\,from \eqref{eq:general-regret-bounded-above} and its counterpart\,\textrangle}
\end{align*}
If we add \eqref{eq:general-bound-together-expand-i} and \eqref{eq:general-bound-together-expand-j}, we get the following bound:
\begin{align*}
    \E[\Sr+\Sc]
    \leq {} &
    2\sqrt{T}\rbrm[\big]{\sqrt{\abs{I}-1} + \sqrt{\abs{J}-1}}
    + 2 \sqrt{\rbrm{m-\abs{I}}s} 
    + 2 \sqrt{\rbrm{n-\abs{J}}s'}\\
    &
    {} +
    \gamma\sqrt{T}
    \sqrt{
        \log \rbrm{T/s}
    }
    +
    \gamma\sqrt{T}
    \sqrt{
        \log \rbrm{T/s'}
    } \\
    &
    {} +
    \sqrt{
        \consti \rbrm[\big]{
            \omegar \log \rbrm{T/s}
            +
            \omegac \log \rbrm{T/s'}
        }
        \rbrm{
            \E[S]+\E[S']
        }
    }\\
    \leq{} &
    4\sqrt{T}\rbrm[\big]{\sqrt{\abs{I}-1} + \sqrt{\abs{J}-1}}
    + 4 \sqrt{\rbrm{m-\abs{I}}s} 
    + 4 \sqrt{\rbrm{n-\abs{J}}s'}\\
    &
    {} +
    2\gamma\sqrt{T}
    \sqrt{
        \log \rbrm{T/s}
    }
    +
    2\gamma\sqrt{T}
    \sqrt{
        \log \rbrm{T/s'}
    } \\
    &
    +
        2 \consti
        \omegar \log \rbrm{T/s}
    +
        2 \consti
        \omegac \log \rbrm{T/s'}
    , \yestag\label{eq:general-bound-together-expand}
\end{align*}
where in the last inequality we apply Lemma~\ref{lem:sumszt-2}.
We take $s=\ceilm[\big]{
    \min\cbrm[\big]{
        \frac{T}{2},
        \frac{
            \max\cbrm{\omega^2,\, \gamma^2 T}
        }{
            m-\absm{I}
        }
    }
}$; since $\Deltar_i\leq 2$ for every $i$, we know that $\omegar \geq \frac{1}{2} \rbrm{m-\absm{I}}$, 
so we have $\frac{\omegar^2}{m-\absm{I}}\geq \frac{1}{4}$,
and thus the rounding-up increases $s$ by a factor of at most 4. This implies that $4 \sqrt{\rbrm{m-\abs{I}}s} \leq 16 \gamma \sqrt{T}+16\omegar$.

We also have
\begin{align*}
s & \geq 
    \min\cbrm[\Big]{
        \frac{T}{2},
        \frac{
            \max\cbrm{\omega^2, \gamma^2 T}
        }{
            m-\absm{I}
        }
    }, \\
\frac{T}{s} & \leq
    \max\cbrm[\Big]{
        2,
        \min\cbrm[\Big]{
        \frac{
            T\rbrm{m-\absm{I}}
        }{
            \omega^2
        },
        \frac{
            m-\absm{I}
        }{
            \gamma^2
        }
    }
}, \\
\log \frac{T}{s} & \leq 
\min\cbrm[\Big]{
    \log_+ \frac{
            T\rbrm{m-\absm{I}}
        }{
            \omega^2
        },
    \log_+ \frac{
            m-\absm{I}
        }{
            \gamma^2
        }
} \defeq L.
\end{align*}
A similar definition and respective inequalities are omitted for $s'$. Plugging these bounds into \eqref{eq:general-bound-together-expand} yields
\begin{align*}
    \E[\Sr+\Sc]
    \leq {} & 
    4\sqrt{T\rbrm{\absm{I}-1}} +
    18\gammar\sqrt{
        T \Logr
    }
    +
        (2\consti+16)
        \omegar \Logr
    \\
    &
    {} 
    +
    4\sqrt{T\rbrm{\absm{J}-1}}
    +
    18 \gammac \sqrt{
        T \Logc
    } 
    +
        (2\consti+16)
        \omegac \Logc
    .
\end{align*}
Together with \eqref{eq:general-regret-bounded-above} and the definition of $S$ and $S'$, this completes the proof.
\end{proof}

The second part of Theorem~\ref{thm:general-bound-together}
is a corollary of the following theorem:
\begin{theorem}
	\label{thm:general-bound3}
    Suppose that $c, c'\in (0, 1]$ satisfy 
    \begin{align}
        \label{eq:LBDG}
        \DG({x}, \hat{y})
        \ge
        c \min_{\xstar \in \NEr}\| {x} - \xstar \|_1
        +
        c' \min_{\ystar \in \NEc} \| {y} - \ystar   \|_1
    \end{align}
    for all $x\in \cP_m$ and $y \in \cP_n$.
    Define
    $\gamma \ge 0, \gamma' \ge 0 , \rhor > 0$ and
    $\rhoc > 0$ by
    \begin{align}
        \label{eq:defgamma}
        \gamma
        &
        =
        \max_{x \in \NEr}
        \cbrm[\bigg]{
        \sum_{i = 1}^m
        \sqrt{x(i)}
        }
        -1
        ,
        \quad
        \gamma'
        =
        \max_{y \in \NEc}
        \cbrm[\bigg]{
        \sum_{j = 1}^n
        \sqrt{y(j)}
        }
        -1,
        \\
        \rhor 
        &
        = 
        \gamma\sqrt{T \log_+ \rbrm[\bigg]{ \frac{m-1}{\gamma^2} }} 
        +
        \frac{m-1}{c}
        \log_+\rbrm[\bigg]{
            \frac{c^2 T}{m-1}
        },
        \\
        \rhoc 
        &
        = 
        \gamma'\sqrt{T \log_+ \rbrm[\bigg]{\frac{n-1}{\gamma'^2} }} 
        +
        \frac{n-1}{c'}
        \log_+\rbrm[\bigg]{
            \frac{c'^2 T}{n-1}
        }.
    \end{align}
    If both players follow the Tsallis-INF algoirthm,
    we have
    \begin{align*}
    \Rr_T(x)
    +
    \sqrt{T}\constd{\E\sbrm{
        D(x, x_{T+1})
    }}
    &= O\mleft(\rhor + \sqrt{(\rhor + \rhoc)\frac{m-1}{c} \log_+ \rbrm[\bigg]{\frac{c^2T}{m-1} }}\mright),
    \\
    \Rc_T(y)
    +
    \sqrt{T}\constd{\E\sbrm{
        D(y, y_{T+1})
    }}
    &= O\mleft(\rhoc + \sqrt{(\rhor + \rhoc)\frac{n-1}{c'} \log_+ \rbrm[\bigg]{\frac{c'^2T}{n-1} }}\mright)
    \end{align*}
    for any $x \in \cP_m$ and $y \in \cP_n$.
    Consequently,
    we have
    \begin{align*}
        \limsup_{T \to \infty}
        \frac{\Rr_T}{\sqrt{T}}
        =
        O\mleft(
        \gamma\sqrt{\log_+ \mleft( \frac{m-1}{\gamma^2} \mright)} 
        \mright),
        \quad
        \limsup_{T \to \infty}
        \frac{\Rc_T}{\sqrt{T}}
        =
        O\mleft(
        \gamma'\sqrt{\log_+ \mleft( \frac{n-1}{\gamma'^2} \mright)} 
        \mright).
    \end{align*}
\end{theorem}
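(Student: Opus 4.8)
The plan is to mirror the self-bounding argument from the proof of Theorem~\ref{thm:general-bound-together-appx}, specialized to the singleton subsets $I=\{i^*\}$, $J=\{j^*\}$ and the uniform gap vectors $\Deltar(i)\equiv c$, $\Deltac(j)\equiv c'$, which are admissible in the sense of Definition~\ref{def:admissble} because the sharpness assumption~\eqref{eq:LBDG} gives $\DG(x,\ystar)\ge c\nbr{x-\xstar}_1\ge c\sum_{i\ne i^*}(x(i)-\xstar(i))_+$ for the nearest equilibrium $\xstar$, and symmetrically for the column player. The only new ingredient is an elementary inequality that lets us avoid the crude bound $\max_{\xstar}\sum_{i\ne i^*}\sqrt{\xstar(i)}\le\gamma+1$, which is far too lossy when $\gamma$ is small (and nonzero when $\gamma=0$): \emph{for any $p\in\cP_m$ and any mode $i^*\in\argmax_{i}p(i)$, one has $\sum_{i\ne i^*}\sqrt{p(i)}\le 2\rbr{\sum_{i=1}^m\sqrt{p(i)}-1}$}. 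Indeed this is equivalent to $\sum_{i\ne i^*}\sqrt{p(i)}\ge 2(1-\sqrt{p(i^*)})$, and since $i^*$ is a mode we have $\sum_{i\ne i^*}\sqrt{p(i)}=\sum_{i\ne i^*}\frac{p(i)}{\sqrt{p(i)}}\ge\frac{1-p(i^*)}{\sqrt{p(i^*)}}\ge 2(1-\sqrt{p(i^*)})$, where the last step is $\frac{1-u^2}{u}\ge 2(1-u)$ for $u\in(0,1]$. Applying this with $p=\xstar$ and recalling that $\gamma=\max_{x\in\xstarset}\cbr{\sum_i\sqrt{x(i)}}-1$, we get $\sum_{i\ne i^*}\sqrt{\xstar(i)}\le 2\gamma$ whenever $i^*$ is a mode of $\xstar$.

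First I would lower bound the summed regret: by bilinearity of the payoff, $\Rr_T+\Rc_T=T\cdot\DG(\E[\xbar_T],\E[\ybar_T])$, so~\eqref{eq:LBDG} yields $\Rr_T+\Rc_T\ge cT\delta^r+c'T\delta^c$, where $\delta^r=\min_{\xstar\in\xstarset}\nbr{\E[\xbar_T]-\xstar}_1$ and $\delta^c$ is defined symmetrically. For the upper bound I would invoke Theorem~\ref{thm:Tsallis-INF} with $i^*$ taken to be a mode of the nearest equilibrium $\xstar:=\argmin_{\xstar\in\xstarset}\nbr{\E[\xbar_T]-\xstar}_1$; this is legitimate since the bound of Theorem~\ref{thm:Tsallis-INF} holds for every fixed $i^*\in[m]$, and there is no circularity because the Tsallis-INF iterates do not depend on $i^*$. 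This gives $\Rr_T(x)+\constd\sqrt T\,\E[D(x,x_{T+1})]\le\consti\,\E[\Sr]$ with $\Sr=\sum_{t=1}^T\frac{1}{\sqrt t}\sum_{i\ne i^*}\sqrt{x_t(i)}$, as well as $\Rr_T\le\consti\E[\Sr]$. Splitting the horizon at a threshold $s\in[T]$ (exactly as in Lemma~\ref{lem:sumszt-2}) bounds the first $s$ terms by $2\sqrt{(m-1)s}$ and the tail by $\sqrt{T\log(T/s)}\sum_{i\ne i^*}\sqrt{\xbar_T(i)}$; taking expectations, Jensen's inequality, subadditivity of the square root, the mode inequality above, and Cauchy--Schwarz together give $\E\brackets{\sum_{i\ne i^*}\sqrt{\xbar_T(i)}}\le 2\gamma+\sqrt{\tfrac{m-1}{2}\delta^r}$. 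Choosing $s$ to be essentially $\min\braces{T,\ \max\braces{((m-1)/c)^2,\gamma^2T}/(m-1)}$, as in the proof of Theorem~\ref{thm:general-bound-together-appx}, makes $\sqrt{(m-1)s}$ of order $\tfrac{m-1}{c}+\gamma\sqrt T$ and $\log(T/s)$ at most $\Logr:=\min\braces{\log_+\tfrac{c^2T}{m-1},\ \log_+\tfrac{m-1}{\gamma^2}}$; combining everything with the regret lower bound $cT\delta^r\le\Rr_T+\Rc_T$ then collapses the estimate into $\E[\Sr]=O\rbr{\rho+\sqrt{\tfrac{(m-1)\Logr}{c}(\Rr_T+\Rc_T)}}$, and symmetrically $\E[\Sc]=O\rbr{\rho'+\sqrt{\tfrac{(n-1)\Logc}{c'}(\Rr_T+\Rc_T)}}$.

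Finally I would close the loop with Lemma~\ref{lem:sqrt-bound-ineq}. Writing $\Phi:=\E[\Sr]+\E[\Sc]$, so that $\Rr_T+\Rc_T\le\consti\Phi$, summing the two displays above gives $\Phi=O\rbr{(\rho+\rho')+\sqrt{\rbr{\tfrac{m-1}{c}\Logr+\tfrac{n-1}{c'}\Logc}\Phi}}$; since $\tfrac{m-1}{c}\Logr$ and $\tfrac{n-1}{c'}\Logc$ are, up to constants, exactly the additive summands of $\rho$ and $\rho'$, Lemma~\ref{lem:sqrt-bound-ineq} yields $\Phi=O(\rho+\rho')$. Plugging this back into $\E[\Sr]=O\rbr{\rho+\sqrt{\tfrac{(m-1)\Logr}{c}\Phi}}$ gives $\Rr_T(x)+\constd\sqrt T\,\E[D(x,x_{T+1})]\le\consti\E[\Sr]=O\rbr{\rho+\sqrt{(\rho+\rho')\tfrac{m-1}{c}\log_+\tfrac{c^2T}{m-1}}}$, and symmetrically for the column player; dividing by $\sqrt T$ and noting that all $\log T$ factors are of lower order then yields the stated limits. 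The step I expect to be the main obstacle is the mode inequality of the first paragraph together with the matching choice of $i^*$: without it the coefficient of $\sqrt T$ would be $\Theta(\gamma+1)$ rather than $O(\gamma)$ --- off by an additive $\sqrt{T\log m}$ and, crucially, not vanishing in the pure-equilibrium case $\gamma=0$ --- whereas everything else is a routine adaptation of the proof of Theorem~\ref{thm:general-bound-together-appx}.
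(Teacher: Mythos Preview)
Your proposal is correct and follows essentially the same route as the paper's own proof: apply Theorem~\ref{thm:Tsallis-INF} with $i^*$ chosen as the mode of the nearest equilibrium $\tilde{x}_T\in\argmin_{\xstar\in\xstarset}\|\E[\bar{x}_T]-\xstar\|_1$, split the horizon at a threshold $s$, bound $\sum_{i\ne i^*}\sqrt{\E[\bar x_T(i)]}$ via the mode inequality plus Cauchy--Schwarz and the sharpness hypothesis~\eqref{eq:LBDG}, and then self-bound $T\cdot\DG=\Rr_T+\Rc_T$ before substituting back. Your explicit ``mode inequality'' $\sum_{i\ne i^*}\sqrt{p(i)}\le 2\bigl(\sum_i\sqrt{p(i)}-1\bigr)$ is exactly the step the paper uses (their displayed constant $\tfrac12$ in the chain~\eqref{eq:sqrtxDG} appears to be a typo---your constant $2$ is the correct one, though this only affects the hidden constants in the $O(\cdot)$ bounds).
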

From this theorem and the AM-GM inequality,
we have
\begin{align*}
    \max\mleft\{
        \Rr_T,
        \Rc_T
    \mright\}
    &
    =
    O\mleft(
        \rhor
        +
        \rhoc + \sqrt{(\rhor + \rhoc)\frac{n-1}{c'} 
        \mleft( 
            \log_+ \mleft(\frac{c^2T}{m-1} \mright)
            +
            \log_+ \mleft(\frac{c'^2T}{n-1} \mright)
        \mright)
        }
    \mright)
    \\
    &
    =
    O\mleft(
        \rhor + \rhoc
    \mright),
\end{align*}
which implies that the second part of Theorem~\ref{thm:general-bound-together} holds.
\begin{proof}
From Theorem~\ref{thm:Tsallis-INF},
for any $s \in [T]$ any $i^* \in [m]$,
and any $x \in \cP_m$,
we have
\begin{align}
    &
    \nonumber
    \Rr_T(x)
    +
    \sqrt{T}\constd{\E\sbrm{
        D(x, x_{T+1})
    }}
    \\
    &
    \nonumber
    =
    O\rbr{
        \E\sbrm[\bigg]{
        \sum_{t=1}^T
        \frac{1}{\sqrt{t}}
        \sum_{i \in [m] \setminus \{ i^* \}}
        \sqrt{x_t(i)}
        }
    }
    \\
    &
    \nonumber
    =
    O\mleft(
        \E\sbrm[\bigg]{
        \sum_{t=1}^s
        \frac{1}{\sqrt{t}}
        \sum_{i \in [m] \setminus \{ i^* \}}
        \sqrt{x_t(i)}
        +
        \sum_{t=s+1}^T
        \frac{1}{\sqrt{t}}
        \sum_{i \in [m] \setminus \{ i^* \}}
        \sqrt{x_t(i)}
        }
    \mright)
    \\
    &
    \nonumber
    =
    O\mleft(
        \sqrt{(m-1)s}
        +
        \sum_{i \in [m] \setminus \{ i^* \}}
        \sqrt{
            \E\sbrm[\bigg]{
            \sum_{t=s+1}^T
            x_t(i)
            }
            \log \frac{T}{s}
        }
    \mright)
    \\
    &
    =
    O\mleft(
        \sqrt{(m-1)s}
        +
        \sqrt{
        T
        \log \frac{T}{s}
        }
        \sum_{i \in [m] \setminus \{ i^* \}}
        \sqrt{
            \E \left[
            \bar{x}_T(i)
            \right]
        }
    \mright).
    \label{eq:Rrmx}
\end{align}
Denote
\begin{align}
    \label{eq:deftildex}
    \tilde{x}_T
    \in
    \argmin_{\xstar \in \NEr}\| \E [ \bar{x}_T] - \xstar \|_1,
    \quad
    \tilde{y}_T
    \in
    \argmin_{\ystar \in \NEc}\| \E[\bar{y}_T] - \ystar \|_1.
\end{align}
We then have
\begin{align}
    \nonumber
    &
    \sum_{i \in [m] \setminus \{ i^* \}}
    \sqrt{ \E[ \bar{x}_T(i) ]  }
    \\
    &
    \nonumber
    \le
    \sum_{i \in [m] \setminus \{ i^* \}}
    \sqrt{ \tilde{x}_T(i)} + 
    \sum_{i \in [m] \setminus \{ i^* \}}
    \sqrt{ | \E[ \bar{x}_T(i) ] - \tilde{x}_T(i)|  }
    \\
    &
    \nonumber
    \le
    \sum_{i \in [m] \setminus \{ i^* \}}
    \sqrt{ \tilde{x}_T(i)} 
    +
    \sqrt{
        (m-1)
        \sum_{i \in [m] \setminus \{ i^* \} }
    \abs{ \E[ \bar{x}_T(i) ] - \tilde{x}_T(i) }  }
    &
    (\mbox{Cauchy-Schwarz})
    \\
    &
    \nonumber
    \le
    \frac{1}{2}
    \left(
    \sum_{i =1}^m
    \sqrt{ \tilde{x}_T(i)} 
    -
    1
    \right)
    +
    \sqrt{
        (m-1)
        \nbr{ \E[ \bar{x}_T ] - \tilde{x}_T }_1
    }
    \\
    &
    \nonumber
    \le
    \frac{1}{2}
    \gamma
    +
    \sqrt{
        (m-1)
        \nbr{ \E[ \bar{x}_T ] - \tilde{x}_T }_1
    }
    &(\mbox{From \eqref{eq:defgamma} and \eqref{eq:deftildex}})
    \\
    &
    \le
    \frac{1}{2}
    \gamma
    +
    \sqrt{
        \frac{m-1}{c} \DG(\E[\bar{x}_T], \E[\bar{y}_T])
    }.
    &(\mbox{From \eqref{eq:LBDG} and \eqref{eq:deftildex}})
    \label{eq:sqrtxDG}
\end{align}
The third inequality can be shown 
by setting $i^* \in \argmax_{i \in [m]} \left\{ \tilde{x}_T(i) \right\}$.
From \eqref{eq:Rrmx} and \eqref{eq:sqrtxDG},
we have
\begin{align}
    \nonumber
    &
    \Rr_T(x)
    +
    \sqrt{T}\constd{\E\sbrm{
        D(x, x_{T+1})
    }}
    \\
    &
    =
    O\mleft(
        \sqrt{(m-1)s}
        +
        \sqrt{
        T
        \log \frac{T}{s}
        }
        \left(
        \gamma
        +
        \sqrt{
            \frac{m-1}{c} \DG(\E[\bar{x}_T], \E[\bar{y}_T])
        }
        \right)
    \mright).
    \label{eq:RrDG}
\end{align}
Similarly,
for any $s' \in [T]$,
we have
\begin{align*}
    &
    \Rc_T(y)
    +
    \sqrt{T}\constd{\E\sbrm{
        D(y, y_{T+1})
    }}
    \\
    &
    =
    O\mleft(
        \sqrt{(n-1)s'}
        +
        \sqrt{
        T
        \log \frac{T}{s'}
        }
        \left(
        \gamma'
        +
        \sqrt{
            \frac{n-1}{c'} \DG(\E[\bar{x}_T], \E[ \bar{y}_T])
        }
        \right)
    \mright).
\end{align*}
Here,
as we have
\begin{align*}
    T \cdot
    \DG( \E[\bar{x}_T],\E[ \bar{y}_T] )
    &
    =
    \Rr_T
    +
    \Rc_T,
\end{align*}
the value of
$ \DG( \E[\bar{x}_T],\E[ \bar{y}_T] ) $
is bounded as
\begin{align*}
    T \cdot
    \DG( \E[\bar{x}_T ],\E[ \bar{y}_T] )
    &
    =
    O\mleft(
    \sqrt{(m-1)s}
    +
    \sqrt{(n-1)s'}
    +
    \gamma
    \sqrt{
    T
    \log \frac{T}{s}
    }
    +
    \gamma'
    \sqrt{
    T
    \log \frac{T}{s'}
    }
    \mright.
    \\
    &
    \quad
    \quad
    \quad
    \left.
    +
    \sqrt{
        \left(
        \frac{m-1}{c} \log\frac{T}{s}
        +
        \frac{n-1}{c'} \log\frac{T}{s'}
        \right)
        T
        \cdot
        \DG(\E[\bar{x}_T], \E[ \bar{y}_T])
    }
    \right)
\end{align*}
for any $s, s' \in [T]$,
which implies
\begin{align*}
    &
    T \cdot
    \DG( \E[\bar{x}_T],\E[ \bar{y}_T] )
    \\
    &
    =
    O\mleft(
    \sqrt{(m-1)s}
    +
    \sqrt{(n-1)s'}
    +
    \gamma
    \sqrt{
    T
    \log \frac{T}{s}
    }
    +
    \gamma'
    \sqrt{
    T
    \log \frac{T}{s'}
    }
    +
        \frac{m-1}{c} \log\frac{T}{s}
        +
        \frac{n-1}{c'} \log\frac{T}{s'}
    \mright).
\end{align*}
By choosing 
\begin{align}
    \label{eq:tunesvalue}
    s = 
    \left\lceil
    \min\left\{ T,
        \max \left\{ 
            \frac{\gamma^2 T}{m-1},
            \frac{m-1}{c^2}
        \right\}
        \right\}
    \right\rceil
    \quad
    s' = 
    \left\lceil
    \min\left\{ T,
        \max \left\{ 
            \frac{\gamma'^2 T}{n-1},
            \frac{n-1}{c'^2}
        \right\}
        \right\}
    \right\rceil
\end{align}
we have
\begin{align*}
    &
    T \cdot
    \DG(\E[ \bar{x}_T], \E[ \bar{y}_T ] )
    \\
    &
    =
    O\mleft(
        \gamma\sqrt{T \log_+ \left( \frac{m-1}{\gamma^2} \right)} 
        +
        \frac{m-1}{c}
        \log_+\left( 
            \frac{c^2 T}{m-1}
        \right)
        +
        \gamma'\sqrt{T \log_+ \left( \frac{n-1}{\gamma'^2} \right)} 
        +
        \frac{n-1}{c'}
        \log_+\left( 
            \frac{c'^2 T}{n-1}
        \right)
    \mright)
    \\
    &
    =
    O\mleft(
        \rhor
        +
        \rhoc
    \mright).
\end{align*}
From this and \eqref{eq:RrDG}
with \eqref{eq:tunesvalue},
we have
\begin{align*}
    \Rr_T(x)
    +
    \sqrt{T}\constd{\E\sbrm{
        D(x, x_{T+1})
    }}
    &
    =
    O\mleft(
        \rhor
        +
        \sqrt{
            (\rhor + \rhoc)
            \frac{m-1}{c}
            \log\frac{T}{s}
        }
    \mright)
    \\
    &
    =
    O\mleft(
        \rhor
        +
        \sqrt{
            (\rhor + \rhoc)
            \frac{m-1}{c}
            \log_+\left( 
                \frac{c^2 T}{m-1}
            \right)
        }
    \mright).
\end{align*}
Similarly,
we obtain the desired upper bound on $
    \Rc_T(y)
    +
    \sqrt{T}\constd{\E\sbrm{
        D(y, y_{T+1})
    }}$,
which completes the proof.
\end{proof}

\section{Proof of Theorem~\ref{thm:RegLB}}
\label{sec:pfRegLB}
When $A$ is given by \eqref{eq:defADelta},
we can see that
$(\istar, \jstar)$ is a Nash equilibrium of the game with payoff matrix $A$.
In fact,
if $\xstar$ and $\ystar$ are the indicator vectors of $\istar$ and $\jstar$,
it holds for any $x \in \cP_m$ and $y \in \cP_n$ that
\begin{align*}
    x^\top A \ystar
    -
    \xstar^\top A y
    &
    =
    {\Deltac}^\top \ystar
    -
    x^\top \Deltar
    -
    {\Deltac}^\top y
    +
    \xstar^\top \Deltar
    =
    (\xstar - x)^\top \Deltar
    +
    (\ystar - y)^\top \Deltac
    \\
    &
    =
    - x^\top \Deltar
    - y^\top \Deltac
    =
    - 
    \rbrm[\Big]{
        \sum_{i \in [m]} \Deltar(i) x(i)
        +
        \sum_{j \in [n]} \Deltac(j) y(j)
    }
    \le
    0,
    \yestag\label{eq:NE}
\end{align*}
which means that $\DG(\xstar, \ystar)=0$.

In this section,
let $x_t \in \cP_m$ and $y_t \in \cP_n$ denote indicator vectors of $i_t \in [m]$ and $j_t \in [n]$,
respectively.
For any fixed algorithm and the true payoff matrix $A$,
we denote the regret of the algorithm as
\begin{align*}
    R_T (A)
    =
    \Rr_T(\xstar)
    +
    \Rc_T(\ystar)
    =
    \E \sbrm[\bigg]{
    \sum_{t=1}^T
    \rbrm[\big]{
    \xstar^\top A y_t
    -
    x_t^\top A \ystar
    }
    } .
\end{align*}
Then,
if $A$ is given by \eqref{eq:defADelta},
from \eqref{eq:NE},
we have
\begin{align}
    R_T(A)
    =
    \E \sbrm[\bigg]{
        \sum_{t=1}^T
        \rbrm[\big]{
            x_t^\top \Deltar
            +
            y_t^\top \Deltac
        }
    }
    =
    \sum_{i=1}^m
    \Deltar(i) \Tr_{T,i}(A)
    +
    \sum_{j=1}^n
    \Deltac(j) \Tc_{T,j}(A).
    \label{eq:RDeltaLB}
\end{align}

We can show Theorem~\ref{thm:RegLB} by using the following lemma:
\begin{lemma}
    \label{lem:TiLB}
    Suppose $A$ is given by \eqref{eq:defADelta}.
    Fix an arbitrary $i \in [m] \setminus \{ \istar \}$.
    Let 
    $\tilde{\Deltar} = \Deltar - 2 \Deltar_i \chi_i$
    and
    $
    \tilde{A}
    =
    \mathbf{1}_m {\Deltac}^\top 
    -
    \tilde{\Deltar} \mathbf{1}_n^\top 
    $.
    We then have
    \begin{align*}
        ({\Deltar}(i))^2
        \Tr_{T,i}(A)
        \ge
        \frac{1}{5}
        \ln
        {
        \frac{T}{2 \rbrm[\big]{
        \Tr_{T,i}(A) +  T - \Tr_{T,i}(\tilde{A})
        }}
        }.
    \end{align*}
\end{lemma}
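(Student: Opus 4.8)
The plan is to run the standard information-theoretic change-of-measure argument, comparing the game instance $A$ with its twin $\tilde{A}$ and invoking the Bretagnolle--Huber inequality as anticipated in Remark~\ref{rem:dueling-LB}. First I would record that $\tilde{A}$ agrees with $A$ in every row except row $i$, where $\tilde{A}(i,j) = A(i,j) + 2\Deltar(i) = \Deltac(j) + \Deltar(i) \in [0, 1/2]$; in particular all entries of $\tilde{A}$ remain in $[-1,1]$, so the feedback laws $\Berpm(\tilde{A}(i_t, j_t))$ are well defined. Fixing the (coupled) algorithm, let $\mathbb{P}_A$ and $\mathbb{P}_{\tilde{A}}$ denote the laws of the full interaction $(i_1, j_1, r_1, \ldots, i_T, j_T, r_T)$ when it is run against the environment $A$ and against $\tilde{A}$, with $\E_A, \E_{\tilde{A}}$ the corresponding expectations, so that $\E_A \sbrm[\big]{ \sum_{t=1}^T \one[i_t = i] } = \Tr_{T,i}(A)$ and $\E_{\tilde{A}} \sbrm[\big]{ \sum_{t=1}^T \one[i_t = i] } = \Tr_{T,i}(\tilde{A})$.

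The quantitative heart of the argument is bounding $\KL(\mathbb{P}_A \,\|\, \mathbb{P}_{\tilde{A}})$. Whenever $i_t \neq i$ the conditional law of $r_t$ given the past and given $(i_t, j_t)$ is identical under the two environments, so the divergence-decomposition identity (the chain rule for relative entropy; see e.g.~\citealp[Chapter~15]{lattimore2020bandit}) localizes the divergence to the rounds pulling row $i$:
\begin{align*}
    \KL\rbr{\mathbb{P}_A \,\|\, \mathbb{P}_{\tilde{A}}}
    =
    \E_A \sbrm[\bigg]{
        \sum_{t=1}^T \one[i_t = i] \cdot
        \KL\rbr{\Berpm(A(i, j_t)) \,\|\, \Berpm(\tilde{A}(i, j_t))}
    }.
\end{align*}
The two Bernoulli success probabilities, $\frac{1 + A(i,j_t)}{2}$ and $\frac{1 + \tilde{A}(i,j_t)}{2}$, differ by exactly $\Deltar(i)$ and both lie in $[1/4, 3/4]$ because $\Deltar(i) \le 1/4$ and $\Deltac(j_t) \le 1/4$; a short calculation (using $\KL(\Ber(p)\,\|\,\Ber(q)) \le (p-q)^2 / (q(1-q))$ with $q(1-q) \ge 3/16$, or a slightly sharper estimate exploiting the narrow interval) then bounds each summand by $5(\Deltar(i))^2$, giving $\KL(\mathbb{P}_A \,\|\, \mathbb{P}_{\tilde{A}}) \le 5(\Deltar(i))^2 \, \Tr_{T,i}(A)$.

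Finally I would pick the distinguishing event $E := \cbrm[\big]{ \sum_{t=1}^T \one[i_t = i] \ge T/2 }$. Markov's inequality gives $\mathbb{P}_A(E) \le \frac{2}{T}\Tr_{T,i}(A)$ and $\mathbb{P}_{\tilde{A}}(E^c) \le \mathbb{P}_{\tilde{A}}\rbr{ \sum_{t=1}^T \one[i_t = i] \le T/2 } \le \frac{2}{T}\rbr{T - \Tr_{T,i}(\tilde{A})}$, so the Bretagnolle--Huber inequality (see e.g.~\citealp[Chapter~17]{lattimore2020bandit}) yields
\begin{align*}
    \frac{2}{T}\rbr{\Tr_{T,i}(A) + T - \Tr_{T,i}(\tilde{A})}
    \;\ge\;
    \mathbb{P}_A(E) + \mathbb{P}_{\tilde{A}}(E^c)
    \;\ge\;
    \tfrac{1}{2} \exp\rbr{ - \KL(\mathbb{P}_A \,\|\, \mathbb{P}_{\tilde{A}}) }
    \;\ge\;
    \tfrac{1}{2} \exp\rbr{ - 5(\Deltar(i))^2 \Tr_{T,i}(A) }.
\end{align*}
Taking logarithms and rearranging gives $(\Deltar(i))^2 \Tr_{T,i}(A) \ge \frac{1}{5} \ln \frac{T}{2(\Tr_{T,i}(A) + T - \Tr_{T,i}(\tilde{A}))}$ once the numerical constants are carried through the binary-KL estimate and the Markov step (and noting that the inequality is trivial when $\Tr_{T,i}(A) + T - \Tr_{T,i}(\tilde{A}) \ge T/2$, as then the right-hand side is non-positive); the claim for $\Tc_{T,j}(A)$ in Theorem~\ref{thm:RegLB} follows by the symmetric computation.

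The step I expect to be the main obstacle is the divergence decomposition in this coupled and noisy game setting: one must argue carefully that, since a single algorithm selects both $i_t$ and $j_t$ from the past rewards $\{r_s\}_{s<t}$ and the observation depends only on the pair $(i_t, j_t)$, exactly the rounds with $i_t = i$ contribute to $\KL(\mathbb{P}_A\,\|\,\mathbb{P}_{\tilde{A}})$ --- and then one must be sufficiently careful with the binary-KL bound and with the Markov/Bretagnolle--Huber constants to land precisely the $\tfrac{1}{5}$ and the factor $2$ appearing in the statement.
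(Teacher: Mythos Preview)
Your argument is correct and follows the same change-of-measure template as the paper: chain rule for $\KL$ localizing to rounds with $i_t=i$, a quadratic upper bound on the binary KL yielding $\KL(\mathbb{P}_A\|\mathbb{P}_{\tilde A})\le 5(\Deltar(i))^2\,\Tr_{T,i}(A)$, and then Bretagnolle--Huber. The one substantive difference is in how Bretagnolle--Huber is applied. You pass through the event $E=\{\sum_t\one[i_t=i]\ge T/2\}$ and Markov's inequality, which gives
\[
\tfrac{2}{T}\bigl(\Tr_{T,i}(A)+T-\Tr_{T,i}(\tilde A)\bigr)\ \ge\ \mathbb{P}_A(E)+\mathbb{P}_{\tilde A}(E^c)\ \ge\ \tfrac12\,e^{-\KL},
\]
and hence the constant $4$ (not $2$) inside the logarithm after rearranging. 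The paper instead bounds the difference of expectations directly by total variation, using that $Z=\tfrac1T\sum_t\one[i_t=i]\in[0,1]$ satisfies $|\E_A[Z]-\E_{\tilde A}[Z]|\le\TV(P,\tilde P)\le 1-\tfrac12 e^{-\KL}$; this rearranges to $\tfrac12 e^{-\KL}\le \tfrac1T\bigl(\Tr_{T,i}(A)+T-\Tr_{T,i}(\tilde A)\bigr)$ and lands exactly on the stated constant $2$. So your route is off by an additive $\tfrac15\ln 2$ --- harmless for the asymptotic statement of Theorem~\ref{thm:RegLB}, but not literally the lemma as written. (A parallel remark applies to the KL constant: the $\chi^2$-bound $\KL\le(p-q)^2/(q(1-q))$ with $q(1-q)\ge 3/16$ gives $16/3>5$, so you do need the ``sharper estimate'' you allude to; the paper uses $\KL(p\,\|\,p+\delta)\le\delta^2/(p(1-p))$ with $p(1-p)\ge 15/64$ to get the factor $5$.)
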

\begin{proof}
    Note first that,
    for $p \in [3/8, 1/2]$ and $\delta \in [0, 1/4]$,
    we have
    \begin{align*}
        \KL( p , p + \delta )
        &
        =
        p \ln \frac{p}{p+\delta}
        +
        (1 - p ) \ln \frac{ 1-p}{1-p-\delta}
        \\
        &
        =
        -
        p \ln \rbrm[\Big]{
        1
        +
        \frac{\delta}{p}
        }
        -
        (1 - p ) \ln \rbrm[\Big]{
        1
        -
        \frac{\delta}{1-p}
        }
        \\
        &
        \le
        p \ln \rbrm[\Big]{
        -
        \frac{\delta}{p}
        +
        \rbrm[\big]{\frac{\delta}{p}}^2
        }
        +
        (1 - p ) \ln \rbrm[\Big]{
        \frac{\delta}{1-p}
        +
        \rbrm[\big]{
        \frac{\delta}{1-p}
        }^2
        }
        \\
        &
        =
        \frac{\delta^2}{p(1-p)}
        \le
        5 \delta^2.
        \yestag\label{eq:KLpD}
    \end{align*}
    Let $P$ and $\tilde{P}$ be distributions of
    $\cbrm{ (i_t, j_t, \ell_t) }_{t\in [T]}$ for $A$ and $\tilde{A}$,
    respectively.
    Then,
    from the Bretagnolle-Huber inequality (e.g., \citealp[Corollary 4]{canonne2022short}),
    we have
    \begin{align*}
        \TV ( P, \tilde{P} )
        \le
        1 - \frac{1}{2} \exp ( -\KL (P, \tilde{P}) ) .
    \end{align*}
    From the chain rule for the KL divergence (e.g., \citealp[Lemma 15.1]{lattimore2020bandit}),
    we have
    \begin{align*}
        \KL ( P, \tilde{P} )
        &
        =
        \E_{\{(i_t, j_t, \ell_t)\} \sim P} 
        \sbrm[\bigg]{
        \sum_{t=1}^T 
        \KL ( \Berpm( A_{i_t, j_t} ), \Berpm( \tilde{A}_{i_t, j_t} ) ) 
        }
        \\
        &
        \le
        \E_{\{(i_t, j_t, \ell_t)\} \sim P} 
        \sbrm[\bigg]{ 
        \sum_{t=1}^T 
        \mathbf{1}[i_t = i]
        \cdot
        5 (\Deltar(i))^2
        }
        =
        5 \Tr_{T, i} (A) (\Deltar(i))^2 ,
    \end{align*}
    where the inequality follows from the definition of $\tilde{A}$ and \eqref{eq:KLpD}.
    By combining above inequalities,
    we obtain
    \begin{align*}
        \frac{1}{T}|\Tr_{T, i}(A) - \Tr_{T, i}(\tilde{A})|
        \le
        \TV ( P, \tilde{P} )
        \le
        1 - \frac{1}{2} \exp ( -\KL (P, \tilde{P}) )
        \le
        1 - \frac{1}{2} \exp ( -
        5 \Tr_{T, i} (A) (\Deltar(i))^2 
        ),
    \end{align*}
    which implies that
    \begin{align*}
        \Tr_{T, i} (A) (\Deltar(i))^2 
        \ge
        \frac{1}{5}
        \ln 
        {
        \frac{T}{2 \rbrm[\big]{
        \Tr_{T, i}(A) +  T - \Tr_{T, i}(\tilde{A})
        }}
        }.
    \end{align*}
\end{proof}

\begin{proof}[of Theorem~\ref{thm:RegLB}]
    If $\tilde{A}$ is given as in Lemma~\ref{lem:TiLB},
    from \eqref{eq:RDeltaLB},
    we have
    \begin{align*}
        R_T(A)
        \ge
        \Deltar(i) \Tr_{T, i}(A),
        \quad
        R_T(\tilde{A})
        \ge
        \Deltar(i) (T - \Tr_{T,i}(\tilde{A})).
    \end{align*}
    From this and Lemma~\ref{lem:TiLB},
    we have
    \begin{align*}
        (\Deltar(i))^2 \Tr_{T,i}(A)
        &
        \ge
        \frac{1}{5}
        \ln
        {
        \frac{T}{2 \rbrm[\big]{
        R_T(A) / \Deltar(i) +  R_T(\tilde{A}) / \Deltar(i)
        }}
        }.
    \end{align*}
    From the assumption that $R_T(\hat{A}) \le g(m,n) T^{1 - \epsilon}$ for any $\hat{A}$,
    we have
    \begin{align*}
        \frac{T}{
        R_T(A) / \Deltar(i) +  R_T(\tilde{A}) / \Deltar(i)
        }
        \ge
        \frac{T}{2 g(m, n) T^{1-c} /\Deltar(i) }
        =
        \frac{\Deltar(i) T^c}{2 g(m, n) },
    \end{align*}
    which implies
    \begin{align*}
        \Tr_{T,i}(A)
        \ge
        \frac{1}{5 (\Deltar(i))^2}
        {
            \ln {
                \frac{\Delta_i T^c}{4g(m, n)}
            }
        }.
    \end{align*}
    Consequently,
    we have
    \begin{align*}
        &
        \liminf_{T \rightarrow \infty}
        \frac{R_T(A)}{\ln T}
        =
        \liminf_{T \rightarrow \infty}
        \frac{1}{\ln T}
        \rbrm[\bigg]{
            \sum_{\substack{i \in [m]\\ \Deltar(i) > 0}}
            \Tr_{T,i}(A)
            +
            \sum_{\substack{j \in [n]\\ \Deltac(j) > 0}}
            \Deltac(j)
            \Tc_{T,j}(A)
        }
        \\
        &
        \ge
        \liminf_{T \rightarrow \infty}
        \rbrm[\bigg]{
            \sum_{\substack{i \in [m]\\ \Deltar(i) > 0}}
            \frac{1}{5\Deltar(i)}
            +
            \sum_{\substack{j \in [n]\\ \Deltac(j) > 0}}
            \frac{1}{5\Deltac(j)}
        }
        \rbrm[\bigg]{
            c+
            \frac{1}{\ln T}
            \ln {
                \frac{\Delta_i}{4g(m, n)}
            }
        }
        \\
        &
        =
        \frac{c}{5}
        \rbrm[\bigg]{
        \sum_{\substack{i \in [m]\\ \Deltar(i) > 0}}
        \frac{1}{\Deltar(i)}
        +
        \sum_{\substack{j \in [n]\\ \Deltac(j) > 0}}
        \frac{1}{\Deltac(j)}
        },
    \end{align*}
    which completes the proof.
\end{proof}

\end{document}